\crefname{assumption}{Assumption}{Assumptions}
\newcommand{\R}{\mathbb{R}} 
\newcommand{\N}{\mathbb{N}} 
\newcommand{\E}{\mathbb{E}} 
\newcommand{\I}{\mathbb{I}} 
\newcommand{\poly}{\mathrm{poly}}
\DeclareMathOperator*{\argmax}{arg\,max}
\declaretheoremstyle[
	    spaceabove=\topsep, 
	    spacebelow=\topsep, 
	    headfont=\normalfont\bfseries,
	    bodyfont=\normalfont\itshape,
	    notefont=\normalfont\bfseries,
	    notebraces={(}{)},
	    postheadspace=0.33em, 
	    headpunct={.},
    ]{theorem}
\declaretheorem[style=theorem]{theorem}
\declaretheoremstyle[
	    spaceabove=\topsep, 
	    spacebelow=\topsep, 
	    headfont=\normalfont\bfseries,
	    bodyfont=\normalfont,
	    notefont=\normalfont\bfseries,
	    notebraces={(}{)},
	    postheadspace=0.33em, 
	    headpunct={.},
    ]{definition}
\declaretheoremstyle[
        spaceabove=\topsep, 
        spacebelow=\topsep, 
        headfont=\normalfont\bfseries,
        bodyfont=\normalfont,
        notefont=\normalfont\bfseries,
        notebraces={}{},
        postheadspace=0.33em, 
        qed=$\blacksquare$, 
        headpunct={.},
    ]{proofstyle}
\declaretheorem[style=proofstyle,numbered=no,name=Proof]{proof}
\declaretheorem[style=theorem,sibling=theorem,name=Lemma]{lemma}
\declaretheorem[style=theorem,sibling=theorem,name=Corollary]{corollary}
\declaretheorem[style=theorem,sibling=theorem,name=Claim]{claim}
\declaretheorem[style=theorem,numbered=no,name=Theorem]{theorem*}
\declaretheorem[style=theorem,numbered=no,name=Lemma]{lemma*}
\declaretheorem[style=theorem,numbered=no,name=Corollary]{corollary*}
\declaretheorem[style=theorem,numbered=no,name=Proposition]{proposition*}
\declaretheorem[style=theorem,numbered=no,name=Claim]{claim*}
\declaretheorem[style=theorem,numbered=no,name=Fact]{fact*}
\declaretheorem[style=theorem,numbered=no,name=Observation]{observation*}
\declaretheorem[style=theorem,numbered=no,name=Conjecture]{conjecture*}
\declaretheorem[style=definition,sibling=theorem,name=Definition]{definition}
\declaretheorem[style=definition,numbered=no,name=Definition]{definition*}
\declaretheorem[style=definition,numbered=no,name=Remark]{remark*}
\declaretheorem[style=definition,numbered=no,name=Example]{example*}
\declaretheorem[style=definition,numbered=no,name=Question]{question*}
\declaretheorem[style=theorem,sibling=theorem,name=Assumption]{assumption}
\newcommand{\D}{\mathcal{D}} 
\newcommand{\C}{\mathcal{C}} 
\newcommand{\F}{\mathcal{F}} 
\newcommand{\Fp}{\mathcal{P}} 
\newcommand{\M}{\mathcal{M}} 
\newcommand{\Hist}{\mathbb{H}} 
\newcommand{\Mhat}{\widehat{\mathcal{M}}}
\newcommand{\PiOptReg}[1][t]{\widetilde{\pi}^{c_{#1}}_{#1}}
\newcommand{\Regrv}{\mathcal{R}}
\newcommand{\OLSR}{\mathcal{O}_{\mathrm{sq}}}
\newcommand{\OLLR}{\mathcal{O}_{\mathrm{\log}}}
\newcommand{\RegSQ}{\mathcal{R}^\mathrm{sq}}
\newcommand{\RegLL}{\mathcal{R}^{\log}}
\newcommand{\Esq}{
\mathcal{E}_T(\ell_{\mathrm{sq}})}
\newcommand{\Ehel}{\mathcal{E}_T(D^2_H)}
\newif\ificml
\icmltitlerunning{Efficient Rate-Optimal Regret for Adversarial CMDPs}
\begin{document}

\twocolumn[
\icmltitle{Efficient Rate Optimal Regret for Adversarial Contextual MDPs Using Online Function Approximation}



\icmlsetsymbol{equal}{*}

\begin{icmlauthorlist}
\icmlauthor{Orin Levy}{sch}
\icmlauthor{Alon Cohen}{yyy,comp}
\icmlauthor{Asaf Cassel}{sch}
\icmlauthor{Yishay Mansour}{sch,comp}
\end{icmlauthorlist}

\icmlaffiliation{yyy}{School of Electrical Engineering, Tel Aviv University, Tel Aviv, Israel}
\icmlaffiliation{comp}{Google Research, Tel Aviv, Israel}
\icmlaffiliation{sch}{Blavatnik School of Computer Science, Tel Aviv University, Tel Aviv, Israel}

\icmlcorrespondingauthor{Orin Levy}{orinlevy@mail.tau.ac.il}
\icmlcorrespondingauthor{Alon Cohen}{alonco@tauex.tau.ac.il}
\icmlcorrespondingauthor{Asaf Cassel}{acassel@mail.tau.ac.il}
\icmlcorrespondingauthor{Yishay Mansour}{mansour.yishay@gmail.com}

\icmlkeywords{Reinforcement Learning Theory, Regret Minimization, Online Learning, Adversarial Contextual MDPs, Online Function Approximation}
\vskip 0.3in
]
\printAffiliationsAndNotice{}  

\begin{abstract}
We present the OMG-CMDP! algorithm for regret minimization in adversarial Contextual MDPs. The algorithm operates under the minimal assumptions of realizable function class and access to online least squares and log loss regression oracles. Our algorithm is efficient (assuming efficient online regression oracles), simple and robust to approximation errors. It enjoys an
$\widetilde{O}(H^{2.5} \sqrt{ T|S||A| ( \mathcal{R}_{TH}(\mathcal{O})
+ H \log(\delta^{-1}) )})$ regret guarantee, 
with $T$ being the number of episodes, $S$ the state space, $A$ the action space, $H$ the horizon and $\mathcal{R}_{TH}(\mathcal{O}) = \mathcal{R}_{TH}(\mathcal{O}_{\mathrm{sq}}^\mathcal{F}) + \mathcal{R}_{TH}(\mathcal{O}_{\mathrm{log}}^\mathcal{P})$ 
is the sum of the square and log-loss regression oracles' regret, used to approximate the context-dependent rewards and dynamics, respectively. To the best of our knowledge, our algorithm is the first efficient rate optimal regret minimization algorithm for adversarial CMDPs that operates under the minimal standard assumption of online function approximation. 
\end{abstract}

\section{Introduction}

Reinforcement Learning (RL) is a framework for sequential decision making in unknown environments. 
A Markov Decision Process (MDP) is the mathematical model behind RL environments.
In the classical episodic RL setting, in each episode an agent repeatedly interacts with the MDP for $H$ steps, observing the current state and then choosing an action. Subsequently, the agent receives a reward and the process transitions to the next state. The agent's goal is to choose actions as to maximize the cumulative return.
This model characterizes many applications including online advertising, robotics, games and healthcare, and has been extensively studied over the past three decades (see e.g.,~\citealp{Sutton2018,MannorMT-RLbook}).
%

However, the classical MDP model cannot efficiently capture the influence of additional side information on the environment.
Consider, for example, a medical trial that tests the effect of a new medication for a disease. The reaction of a patient to the treatment, which is modeled by the environment, is deeply influenced by the patient's medical history, which includes age, weight and background diseases she or he might have.
We refer to such external information, which is unaffected by the agent's decisions, i.e., treatment choice, as the patient's \emph{context}. 
A standard MDP can encode the context as part of the state.
However, as no two patients are identical, this can significantly increase the size of the state space, and hence the
complexity of learning and even the complexity of representing a single policy.
Instead, Contextual MDPs (CMDPs), keep the state space small and treat the context as side information, revealed to the agent at the start of each episode. 
Additionally, there is an unknown mapping from each context to an MDP, thus an optimal policy maps each context to an optimal policy of the related MDP.

As previously alluded to, the context space is often prohibitively large, prompting the use of function approximation. This has previously been studied in the context of Contextual Multi Armed Bandits (CMAB; \citealp{foster2020beyond,simchi2021bypassing}), and more recently in CMDPs~\cite{foster2021statistical,levy2022optimism}. Concretely, realizable function approximation implies that the agent is provided with a function class of mappings from contexts to MDPs, and the goal is to obtain learning guarantees in terms of the class' complexity rather than the size of the context space. Realizability further implies that the true mapping resides in the class.
%

In recent years, CMDPs have gained much interest. There are two major lines of works. The distinctive feature between these two lines is whether the context is \emph{stochastic} or \emph{adversarially chosen}.
\citet{hallak2015contextual} were the first to study CMDPs assuming an adversarial context.
\citet{modi2020no} considered adversarial contexts and a Generalized Linear Model as a function class and gave a $\smash{\sqrt{T}}$ regret guarantee.
\citet{foster2021statistical} consider general function classes as part of their Estimation to Decision (E2D) framework. They assume an access to an online estimation oracle that finds the best fit over the function class given past contexts and observations. 
(A detailed discussion and comparison with this work 
appears in the sequel.)
For stochastic CMDPs,~\citet{levy2022optimism} presented sublinear regret under the minimum reachability assumption, using access to \emph{offline} least squares regression oracles.
They also showed an $\Omega(\sqrt{TH|S||A|\log(|\mathcal{G}|/|S|)/\log(|A|)})$ regret lower bound
where $\abs{\mathcal{G}}$ is the size of the function class used to approximate the rewards.
Later,~\citet{levy2022counterfactul} showed regret of $\widetilde{O}(\sqrt{T})$ assuming an access to offline regression oracles without the reachability assumption. 
Clearly, adversarial CMDPs generalize stochastic CMDPs and require the use of online function approximation.

\paragraph{Contributions.}
We study regret in adversarial CMDPs under a natural online function approximation setting.
We present the OMG-CMDP! algorithm, and prove that its regret is, with probability at least  $1-\delta$, bounded as 
$$
    H^{2.5} \sqrt{ T|S||A| \left( \Regrv_{TH}(\OLSR^\F) + \Regrv_{TH}(\OLLR^\Fp) + H \log\frac{1}{\delta}\right)}
$$
(up to poly-logarithmic factors of $T,|S|,|A|,H$) 
where $S$ is the state space, $A$ the action space, $H$ the horizon and $\Fp$ and $\F$ are function classes used to approximate the context dependent rewards and dynamics, respectively. 
We assume access to online log loss and least squares regression oracles for the dynamics and rewards approximation, and $\Regrv_{TH}(\OLLR^\Fp)$, $\Regrv_{TH}(\OLSR^\F)$  denote their regret guarantees, respectively. 
Our algorithm performs only $2T$ oracle calls and its running time is in $\poly(|S|,|A|,H,T)$ assuming an efficient online regression oracles. 
The main advantage of our technique is its simplicity. We present an intuitive algorithm which operates under the standard online function approximation assumptions. 
In addition, at each round, our played policy can be approximated efficiently using standard convex optimization algorithms.


\paragraph{Comparison with \citet{foster2021statistical}.} 
This work is most related to ours.
They obtained $\sqrt{T \;\Regrv_T(\mathcal{O}_{\text{est}})}$ regret by applying their Estimation to Decision (E2D) meta algorithm to adversarial CMDPs, where $\mathcal{O}_{\text{est}}$ is an online estimation oracle that operates over a class of CMDPs.
Their work, being very general, leaves their oracle implementation non-concrete and generic, consequently giving rise to relatively complex algorithmic machinery.
Specifically, at each round $t$ their algorithm outputs a distribution $p_t$ over context-dependent policies using an Inverse Gap Weighting (IGW) technique. To make this computationally tractable, they compute an approximate Policy Cover (PC) that serves as $p_t$'s support, and analyze the delicate interplay between the approximation error and the IGW technique.
In contrast, we use a standard online function approximation oracles for both the dynamics and rewards given related function classes, providing a clear model-based representation of the learned CMDP.
Our approach yields a natural and intuitive algorithm, only requiring an approximate solution of a convex optimization problem. 
Thus, it can be implemented efficiently.

\vspace{-1em}
\paragraph{Additional Related Literature.}\label{subsec:related-work}
%
Sample complexity bounds for Contextual Decision Processes (CDPs) have been studied under various assumptions. 
\citet{jiang2017contextual} present OLIVE, a sample efficient algorithm for learning Contextual Decision Processes (CDP) under the low Bellman rank assumption and later~\citet{sun2019model} show PAC bounds for model based learning of CDPs using the Witness Rank.

\citet{modi2018markov} present generalization bounds for learning \emph{smooth} CMDPs and finite contextual linear combinations of MDPs. 
We, in contrast, consider the regret of adversarial CMDPs.

\citet{levy2022learning} studied the sample complexity of learning stochastic CMDPs using a standard ERM oracle, and provided the first general and efficient reduction from Stochastic CMDPs to offline supervised learning.
%
%
%
%
%

CMDPs naturally extend the well-studied CMAB model.
CMABs augment the Multi-Arm Bandit (MAB) model with a context that determines the rewards \cite{MAB-book,Slivkins-book}. \citet{Langford2007,agarwal2014taming} use an optimization oracle, and give an optimal regret bound that depends on the size of the policy class they compete against. 
\citet{foster2021instance} present instance-dependent regret bounds for stochastic CMAB assuming access to a function class $\F$ for the rewards approximation.
%
Regression based approaches appear in \citet{agarwal2012contextual,foster2020beyond,foster2018practical,foster2021logloss,simchi2021bypassing,xu2020upper} for both stochastic and adversarial CMABs.
%
\citet{foster2020beyond} assume access to an online least-squares regression oracle, and prove an optimal regret bound for adversarial CMABs using the IGW technique. 

\section{Preliminaries: Episodic Markov Decision Process (MDP)}
A (tabular) MDP is defined by a tuple $(S,A,P,r,s_0, H)$, where $S$ and $A$ are finite state and action spaces respectively;
$s_0\in S$ is the unique start state; $H \in \N$ is the horizon;
$P : S \times A \times S \to [0,1]$ is the dynamics which defines the probability of transitioning to state $s'$ given that we start at state $s$ and play action $a$; and $r(s,a)$ is the expected reward of performing action $a$ at state $s$.
An episode is a sequence of $H$ interactions where at step $h$, if the environment is at state $s_h$ and the agent plays action $a_h$ then the environment transitions to state $s_{h+1} \sim P(\cdot \mid s_h, a_h)$ and the agent receives reward $R(s_h, a_h) \in [0,1]$, sampled independently from a distribution $\D_{s_h,a_h}$ that satisfies $r(s_h, a_h) = \E_{\D_{s_h,a_h}} \brk[s]*{ R(s_h,a_h)}$. 

%
%
A \emph{stochastic and non-stationary policy} $ \pi = (\pi_h : S \to \Delta(A))_{h \in [H]}$ defines for each time step $h \in [H]$ a mapping from states to a distribution over actions. 
Given a policy $\pi$ and MDP 
    $
        M 
        = 
        (S,A,P,r,s_0, H)
    $, 
the
$h \in [H-1] $ stage value function of a state $s \in S_h$ is defined as 
\begin{align*}
    V^{\pi}_{M,h} (s)
    = 
    \mathbb{E}_{\pi, M} 
    \brk[s]*{\sum_{k=h}^{H-1} r(s_k, a_k)\;\Bigg|\; s_h = s }
    .
\end{align*} 
For brevity, when $h = 0$ we denote $V^{\pi}_{M,0}(s_0) := V^{\pi}_M(s_0)$, which is the expected cumulative reward under policy $\pi$ and its measure of performance. Let 
$
\pi^\star_M
\in
\argmax_{\pi}\{V^{\pi}_{M}(s_0)\}
$
denote an optimal policy for MDP $M$.

Furthermore, we consider the notation of~\emph{occupancy measures} (see, e.g.,~\citealp{zimin2013online}). 
Let $q_h(s,a \mid \pi, P)$ denote the probability of reaching state $s\in S$ and performing action $a\in A$ at time $h \in [H]$ of an episode generated using  policy $\pi$ and dynamics $P$. 
Let $\mu(P) \subseteq [0,1]^{H\times S \times A}$ denote the set of all occupancy measures defined by the dynamics $P$ and any stochastic policy $\pi$. 
$\mu(P)$ is defined as follows. Each $q \in \mu(P)$ satisfies the following three requirements altogether.
\begin{enumerate}[label=(\roman*),leftmargin=*]
    \item $q \in [0,1]^{H \times S \times A}$ and $\forall h \in [H]$, $q_h \in \Delta(S\times A)$;
    \item For all $s \in S$, $\sum_{a \in A}q_0(s,a) = \I[s=s_0]$; and
    \item For all $h \in [H-1]$ and $s \in S$, $\sum_{a \in A}q_{h+1}(s,a) = \sum_{(s',a') \in S \times A} P(s \mid s',a') q_h(s',a')$.
\end{enumerate}
%
%
$\pi^q$ is the policy associated with occupancy measure $q \in \mu(P)$ and is defined as follows for all $h \in [H]$ and state-action pair $(s,a)\in S \times A$,
$$
    \pi^q_h(a|s)  = \frac{q_h(s,a)}{\sum_{a' \in A}q_h(s,a')}
    .
$$
If $\sum_{a' \in A}q_h(s,a') = 0$ then $\pi^q_h(a|s):=1/|A|$. 
In addition, note that $\mu(P)$ is a convex set. See, e.g.,~\citet{rosenberg2019online} for more details.

\section{Problem Setup: Adversarial Contextual MDP}
Following the notations of~\citet{levy2022optimism}, 
a \emph{CMDP} is defined by a tuple $(\mathcal{C},S, A, \mathcal{M})$ where $\mathcal{C}$ is the context space, $S$  the state space and $A$  the action space. The mapping $\mathcal{M}$  maps a context $c\in \mathcal{C}$ to an MDP
    $
        \mathcal{M}(c) 
        =
        (S, A, P^c_\star, r^c_\star,s_0, H)
    $, 
where $r^c_\star(s,a) = \E[R^c_\star(s,a) \mid c,s,a]$, $R^c_\star(s,a) \sim \D_{c,s,a}$.
We assume that $R^c_\star(s,a) \in [0,1]$.\\
For mathematical convenience, we assume the contexts space $\C$ is finite but potentially very large and we do not want to depend on its size (in both our regret bound and computation run-time). Our results are naturally extended to infinite contexts space.

We consider an \emph{adversarial} CMDP where in each episode the context can be chosen in a completely arbitrary manner, possibly by an adversary. 

A stochastic and non-stationary \emph{context-dependent policy} $\pi = \left( \pi^c \right)_{c \in \mathcal{C}}$ maps a context $c \in \mathcal{C}$ to a policy  $ \pi^c= (\pi^c_h : S \to \Delta(A))_{h \in [H]}$.

\paragraph{Interaction protocol.} The interaction between the agent and the environment is defined as follows.
    In each episode $t=1,2,...,T$:
    \begin{enumerate}[label=(\roman*),nosep]
        \item An adversary chooses a context $c_t \in \C$;
        \item The agent chooses a policy $\pi^{c_t}_t$ 
        ;
        \item The agent observes a trajectory generated by playing $\pi^{c_t}_t$  in $\M(c_t)$, denoted as
        $ 
            \sigma^t = (c_t, s^t_0, a^t_0, r^t_0, \ldots, s^t_{H-1}, a^t_{H-1},r^t_{H-1},s^t_H) 
        $.
    \end{enumerate}

Our goal is to minimize the regret, defined as
\begin{equation}\label{eq:regret-def}
    \Regrv_T 
    := \sum_{t=1}^T 
    V^{\pi^{c_t}_\star}_{\M(c_t)}(s_0)
    -
    V^{\pi^{c_t}_t}_{\M(c_t)}(s_0),
\end{equation}
where
$
    \pi_\star 
$
is an optimal context-dependent policy.

\subsection{Assumptions}

In this setting, without further assumptions, the regret may scale linearly in $TH$. 
We overcome this limitation by imposing the following minimal online function approximation assumptions, which extend similar notions in the CMAB literature (see, e.g.,~\citealp{agarwal2012contextual,foster2018practical,foster2020beyond,foster2021logloss}) to CMDPs. We assume access to realizable function classes $\F$ and $\Fp$ that serve to approximate the context-dependent rewards and dynamics respectively.
Realizability means that the true rewards and dynamics belong to the appropriate function class, and access is via online regression oracles (more details later). 

\noindent\textbf{Online regression oracle.}
We consider a standard online regression oracle with respect to a given loss function $\ell$. 
The oracle performs real-valued online regression where the examples are chosen from some subspace $\mathcal{Z} $,
with respect to a loss function $\ell$ and has a regret guarantee relative to the function class $\F$.
We consider the following online scenario.
For every round $t = 1, . . . , T$:
(1) An adversary (possibly adaptive) chooses input $z_t \in \mathcal{Z}$. 
(2) The oracle observes $z_t$ and returns a prediction $\hat{y}_t \in [0,1]$. 
(3) The adversary chooses an outcome $y_t \in [0,1]$.
%
We follow~\citet{foster2020beyond} and model the online oracle as a sequence of mappings $\mathcal{O}_\ell^t : \mathcal{Z} \times (\mathcal{Z}, \R)^{t-1} \to [0,1]$, where  
$  
        \hat{y}_t
        = 
        \mathcal{O}^t_\ell 
        (z_t ;  (z_1, y_1), ... , (z_{t-1}, y_{t-1}))
$. 
Each oracle implementation induces a
mapping $\hat{f}_t(z) = \mathcal{O}_\ell(z; (z_1,y_1),\ldots,(z_{t-1},y_{t-1}))$ which is the prediction for the input $z$ at round $t$. 
 (See Section 2.1 in~\citealp{foster2020beyond}).
The oracles' regret guarantee with respect to the function class $\F$ is as follows.
 \begin{align*}
    \sum_{t=1}^T \ell(\hat{y}_t,y_t)
    - \inf_{f \in \F}  \sum_{t=1}^T \ell(f(z_t), y_t)
    \leq 
    \Regrv_T(\mathcal{O}_\ell^\F)
    .
\end{align*}
In the following, we consider the online regression oracle with respect to the square loss i.e., $\ell_{sq}(\hat{y},y) = (\hat{y} - y)^2$ for the rewards approximation, and the log loss i.e., $\ell_{\log} (\hat{y},y) = \log ({y}/{\hat{y}})$ for the dynamics approximation.
 For finite function classes $\F$ and $\Fp$, it is known that the regret of these oracles is logarithmic in the function class size \citep{cesa2006prediction,foster2020beyond,foster2021statistical}. 
 Meaning, $\Regrv_T(\OLSR^\F) = {O}(\log (|\F|))$ and $\Regrv_T(\OLLR^\Fp) = {O}(\log (|\Fp|))$.\footnote{We remark that in the case that $\F$ or $\Fp$ are non-convex, some implementations of the oracles might return a function in the convex hull of $\F$ and $\Fp$, respectively. Such an implementation is, for instance, Vovk's aggregation algorithm~\citep{cesa2006prediction}. 
 }
 The above optimization problems can always be solved by iterating over the function class. But, since we consider strongly convex loss functions, there are function classes where these optimization problems can be solved efficiently. An obvious example is the class of linear functions.  

\paragraph{Reward function approximation.}\label{par:reward-function-approx}
We assume that the learner has access to a class of reward functions
$\F\subseteq \C\times S \times A \to [0,1]$, each function $f \in \F$ maps context $c \in \C$, state $s \in S$ and an action $a \in A$ to a (approximate) reward $r \in [0,1]$.
We use $\F$ to approximate the context-dependent rewards function of any state $s\in S$ and action $a \in A$ using an  online least-squares regression (OLSR) oracle under the following realizability assumption.
\begin{assumption}\label{assumption:rewards-realizability}
    There exists a function $f_\star \in \F$ such that for all $t$ and $(s,a)\in S \times A$, $f_\star(c_t, s,a) = r^{c_t}_\star(s,a)$. 
\end{assumption}

\begin{assumption}[Square Loss Oracle Regret]\label{assumption:oracle-gurantee-rewards}
    The oracle $\OLSR^\F$ guarantees that for every sequence of trajectories $\left\{\sigma^t\right\}_{t=1}^T$, regret is bounded as
    \begin{align*}
        &\sum_{t=1}^T \sum_{h=0}^{H-1} (\hat{f}_t(c_t,s^t_h,a^t_h) - r^t_h)^2
        \\
        &\hspace{5mm}- \inf_{f \in \F}  \sum_{t=1}^T \sum_{h=0}^{H-1}  (f(c_t,s^t_h,a^t_h) - r^t_h)^2
        \leq 
        \Regrv_{TH}(\OLSR^\F).
    \end{align*}
\end{assumption}

\paragraph{Dynamics function approximation.}
For the unknown context-dependent dynamics case, our algorithm gets as input a function class $\Fp \subseteq  S \times A \times S \times \C \to [0,1]$, where every function $P \in \Fp$ satisfies 
$
    \sum_{s' \in S} P(s' \mid s,a,c)  = 1$ for all $c \in \C$ and $(s,a) \in S \times A
$.
We use $\Fp$ to approximate the context-dependent dynamics using an online log-loss regression (OLLR) oracle under the following realizability assumption. For any $P \in \Fp$ we denote $P^c(s' \mid s,a):= P(s' \mid s,a,c)$.
\begin{assumption}[Dynamics Realizability]\label{assumption:dynamics-realizability}
There exists a function $P \in \Fp$ such that for all $t$, and every $(s,a,s') \in S \times A \times S$, $P(s' \mid s,a,c_t) = P^{c_t}_\star(s' \mid s,a)$.
\end{assumption}

\begin{assumption}[Log Loss Oracle Regret]\label{assumption:oracle-KL-reget}
    Given a function class $\Fp$ of context-dependent transition probabilities function, the oracle $\OLLR^{\Fp}$ guarantees that for every sequence of trajectories $\left\{ (c_t; s^t_0,a^t_0,r^t_0, \ldots, s^t_H) \right\}_{t=1}^T$, regret is bounded as
    \begin{align*}
        &\sum_{t=1}^T \sum_{h=0}^{H-1} 
        \log\frac{1}{\widehat{P}^{c_t}_t(s^t_{h+1} \mid s^t_{h},a^t_h)}
        \\
        & 
        \hspace{3mm}- \inf_{P \in \Fp}  \sum_{t=1}^T \sum_{h=0}^{H-1}  \log\frac{1}{P^{c_t}(s^t_{h+1} \mid s^t_{h},a^t_h)}
        \leq 
        \Regrv_{TH}(\OLLR^{\Fp}).
    \end{align*}
\end{assumption}

\paragraph{Notations.} 
For an event $E$ we denote by $\I[E]$ the indicator function which returns $1$ if $E$ holds and $0$ otherwise. We denote expectation by $\E[\cdot]$. We also use the following abbreviations for the oracles regrets. We denote $\RegLL :=\Regrv_{TH}(\OLLR^{\Fp})$ and $\RegSQ :=\Regrv_{TH}(\OLSR^\F)$. 
Note that our oracles are called $T$ times, each time we feed them with $H$ examples. For that reason their regret bounds depend on $TH$. This in contrast to previous works in which the oracle gets only one example in each call.
When using the notation $\widetilde{O}(\cdot)$ we omit poly-logarithmic factors of $|S|,|A|,H,T$.
For a vector $x \in \R^d$ we denote by $\|x\|_1:= \sum_{i=1}^d |x_i|$ the $\ell_1$ norm of $x$.

\section{Algorithm and Main Result}\label{sec:algorithm}
We present the \textit{Occupancy Measures approximated reGularization algorithm for regret minimization in
 adversarial CMDP} (OMG-CMDP!; \cref{alg:OMG-CMDP}).
At each round $t = 1,2,\ldots, T$, we first approximate the rewards and dynamics, using the online oracles, based on the observed trajectories up to round $t-1$.
We denote by $\hat{f}_t$ the approximated rewards function, and by $\widehat{P}_t$ the approximated dynamics at round $t$. 
After observing the current context $c_t$, we solve the optimization problem in~\cref{eq:max-problem-unknown-dynamics} over the set of occupancy measures defined by $\widehat{P}^{c_t}_t$. For the optimal solution $\hat{q}^t$, we derive the appropriate policy $\pi^{c_t}_t$ and run it to generate a trajectory. We use the observed trajectory to update the oracles.

Note that, for the objective in \cref{eq:max-problem-unknown-dynamics} to be finite, we implicitly assume that for any round $t \ge 1$ there exists $q \in \mu(\widehat{P}_t^{c_t})$ such that $q > 0$ (entry-wise). This can always be achieved by mixing the oracle output with a uniform distribution where the weight of the uniform distribution can be arbitrarily small, thus has a negligible effect on the regret. Alternatively, we can exclude $(h, s, a)$ tuples that are unreachable under $\widehat{P}_t^{c_t}$ from the sum in the log barrier regularization, and define $\pi_{t,h}^{c_t}(a \mid s) = 1 / \abs{A}$ for these tuples. This would not change the analysis, but adds technical notations that reduce the clarity of the proofs, thus omitted.

\begin{algorithm}[!ht]
    \caption{Occupancy Measures approximated reGularization for adversarial CMDP (OMG-CMDP!)}
    \label{alg:OMG-CMDP}
    \begin{algorithmic}[1]
        \STATE
        { 
            \textbf{inputs:} 
            \begin{itemize}
                \item MDP parameters: $S$, $A$, $s_0$, $H$.
                \item Function classes $\F$ for rewards approximation and $\Fp$ for dynamics approximation. 
                \item Confidence parameter $\delta \in (0,1)$ and tuning parameter $\gamma$.
                \item OLSR oracle $\OLSR^\F$ and OLLR oracle $\OLLR^{\Fp}$.
            \end{itemize}
             
        } 
        \FOR{round $t = 1, \ldots, T$}
            \STATE{approximate rewards  
            $
                \hat{f}_t = \OLSR^\F
            $ and dynamics 
            $
                \widehat{P}_t = \OLLR^{\Fp}
            $.
            }
            \STATE{observe context $c_t \in \C$.}
            \STATE{solve
            \begin{equation}\label{eq:max-problem-unknown-dynamics}
                \begin{split}
                    \hat{q}^t = \argmax_{q \in \mu(\widehat{P}^{c_t}_t)}
                    &\sum_{\substack{(h,s,a) \in \\ [H] \times S \times A}} q_h(s,a)\cdot \hat{f}_t(c_t,s,a)
                    \\
                    &\hspace{5mm}+ \frac{1}{\gamma} \sum_{\substack{(h,s,a) \in \\ [H] \times S \times A}} \log(q_h(s,a)) 
                    .
                \end{split}
            \end{equation}
            }
            \STATE{derive policy as follows, for all $h \in [H]$ and $(s,a)\in S \times A$.
            \[
                \pi^{c_t}_{t,h}(a \mid s)  = \frac{\hat{q}^t_h(s,a)}{\sum_{a' \in A}\hat{q}^t_h(s,a')}
                .
            \]
            }
            \STATE{play $\pi^{c_t}_t$ and observe a trajectory $$\sigma^t =(c_t;s^t_0,a^t_0,r^t_0,\ldots,s^t_H).$$}
            \STATE{update $\OLSR^\F$ using $\{((c_t,s^t_h,a^t_h),r^t_h)\}_{h=0}^{H-1}$.}
            \STATE{update $\OLLR^{\Fp}$ using $\{(c_t,s^t_h,a^t_h,s^t_{h+1})\}_{h=0}^{H-1}$.}
            \ENDFOR
    \end{algorithmic}
\end{algorithm}
A key step in our algorithm is reflected in the optimization problem~(\cref{eq:max-problem-unknown-dynamics}).
We solve the maximization problem over the set of occupancy measures of the approximated dynamics $\widehat{P}^{c_t}_t$.
We remark that the maximization problem in~\cref{eq:max-problem-unknown-dynamics} is a strictly concave maximization problem over the set of occupancy measures of $\widehat{P}^{c_t}_t$. 
Hence it has a single global maximum that can be approximated efficiently.
%

We point out that usually in regret-minimization RL literature one optimizes over the empirical dynamics with additional bonuses to the rewards that promote exploration (optimism). 
The magnitude of the bonuses is derived directly from the size of the confidence intervals over the dynamics and rewards approximation. 
In our case, however, the contexts are adversarially chosen hence it is impossible to define such confidence intervals. 
Nevertheless, the use of log-barrier regularization provides the necessary trade-off between exploration and exploitation, in a manner resembling optimistic approaches, thus replacing the need for the aforementioned bonuses.
%
The application of log-barrier regularization differentiates us from previous works in contextual RL literature.


Our main result is the regret guarantee of~\cref{alg:OMG-CMDP}, stated in the following theorem.
\begin{theorem}[Regret Bound]\label{thm:regret-main}
    Let $\delta \in (0,1)$. 
    For $\gamma = \sqrt{\frac{|S||A|T}{31H^3 \left( 2\RegSQ + \RegLL + 18H \log(2H/\delta) \right) }}$, 
    with probability at least $1-\delta$ it holds that $\Regrv_T (\text{OMG-CMDP!})$ is bounded by
    \begin{align*}
        \widetilde{O}\left(H^{2.5} \sqrt{ T|S||A| 
        \left( 
        \RegSQ+ \RegLL + H \log \delta^{-1} \right)}\right)
        .
    \end{align*}
\end{theorem}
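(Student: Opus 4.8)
The plan is to bound the per-episode regret $V^{\pi^{c_t}_\star}_{\M(c_t)}(s_0) - V^{\pi^{c_t}_t}_{\M(c_t)}(s_0)$ through the \emph{estimated} model, writing every value as an inner product $\langle q, r\rangle$ between an occupancy measure and a reward vector. Writing $\hat r_t := \hat f_t(c_t,\cdot,\cdot)$, letting $\hat q^t$ be the maximizer of~\cref{eq:max-problem-unknown-dynamics}, and noting that $\hat q^t = q(\pi^{c_t}_t,\widehat P^{c_t}_t)$, I would insert the estimated-model value of each policy and split each episode's regret into three pieces: (i) a \emph{comparator} model-error term $\langle q(\pi^{c_t}_\star,P^{c_t}_\star),r^{c_t}_\star\rangle - \langle q(\pi^{c_t}_\star,\widehat P^{c_t}_t),\hat r_t\rangle$; (ii) an \emph{optimization/regularization} term $\langle q(\pi^{c_t}_\star,\widehat P^{c_t}_t),\hat r_t\rangle - \langle\hat q^t,\hat r_t\rangle$; and (iii) a \emph{played} model-error term $\langle\hat q^t,\hat r_t\rangle - \langle q(\pi^{c_t}_t,P^{c_t}_\star),r^{c_t}_\star\rangle$. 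Term (ii) is immediately controlled by optimality of $\hat q^t$: since $q(\pi^{c_t}_\star,\widehat P^{c_t}_t)\in\mu(\widehat P^{c_t}_t)$ is feasible, (ii)$\,\le\tfrac1\gamma\bigl(\Phi(\hat q^t)-\Phi(q(\pi^{c_t}_\star,\widehat P^{c_t}_t))\bigr)$ with $\Phi(q)=\sum_{h,s,a}\log q_h(s,a)$.

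Next I would convert the two oracle guarantees into cumulative squared-error bounds along the realized trajectories. Under \cref{assumption:rewards-realizability,assumption:oracle-gurantee-rewards}, a standard online-to-confidence martingale argument (the observed reward is an unbiased sample of $r^{c_t}_\star$) turns the square-loss regret into a high-probability bound $\sum_t\sum_h (\hat r_t(s^t_h,a^t_h)-r^{c_t}_\star(s^t_h,a^t_h))^2 \le 2\RegSQ+O(H\log(H/\delta))$; likewise \cref{assumption:dynamics-realizability,assumption:oracle-KL-reget} give $\sum_t\sum_h D^2_H\bigl(\widehat P^{c_t}_t(\cdot\mid s^t_h,a^t_h),P^{c_t}_\star(\cdot\mid s^t_h,a^t_h)\bigr)\le \RegLL+O(H\log(H/\delta))$, using the equivalence between log-loss regret and summed squared Hellinger distance under realizability. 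Both are expectations under the played occupancy $\tilde q^t := q(\pi^{c_t}_t,P^{c_t}_\star)$, which is exactly the law of the trajectory; a union bound over the $H$ layers produces the $H\log(H/\delta)$ terms. These two inequalities are the only places where the oracle regrets and $\delta$ enter.

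I would then expand terms (i) and (iii) with the simulation (value-difference) lemma, writing each as a sum over steps of a reward error $\langle q_h,|\hat r_t-r^{c_t}_\star|\rangle$ plus a transition error $\langle q_h,\langle\widehat P^{c_t}_t-P^{c_t}_\star,\hat V_{h+1}\rangle\rangle$ with $\|\hat V_{h+1}\|_\infty\le H$, the latter bounded via Hellinger distance. For (iii) these errors are measured under $\tilde q^t$ and are controlled \emph{directly} by the two cumulative inequalities, contributing only a lower-order $\widetilde O\bigl(H^{1.5}\sqrt{T(\RegSQ+\RegLL)}\bigr)$. The difficulty is term (i), whose errors are measured under the \emph{never-played} occupancy $q(\pi^{c_t}_\star,\cdot)$. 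Here the log-barrier is essential: it forces $\hat q^t_h(s,a)\gtrsim 1/(\gamma H)$, hence (up to the already-controlled transition error) $\tilde q^t_h(s,a)$ is bounded below, so I can change measure to the played trajectory, $\langle q(\pi^{c_t}_\star,\cdot)_h,g\rangle\le\gamma H\,\langle\tilde q^t_h,g\rangle$ for $g\ge0$. Crucially, rather than pulling the importance weight out and invoking Cauchy--Schwarz (which would cost a spurious $\sqrt T$), I would keep the squared reward and squared Hellinger errors intact so that the summed inequalities of the previous paragraph apply directly, yielding a bound of order $\gamma\, H^4\bigl(\RegSQ+\RegLL+H\log(H/\delta)\bigr)$ for $\sum_t$ (i).

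Finally I would assemble the pieces. Summing term (ii) gives the regularization cost: bounding $\Phi(\hat q^t)\le0$ and replacing the comparator $q(\pi^{c_t}_\star,\widehat P^{c_t}_t)$ by its mixture with a uniformly exploring occupancy---a feasible point of the convex set $\mu(\widehat P^{c_t}_t)$ whose entries are at least $\poly(1/(|S||A|T))$---controls $-\Phi$ at the cost of a negligible $\widetilde O(1)$ reward loss per episode, so $\sum_t$ (ii)$\,\le\widetilde O(TH|S||A|/\gamma)$. Adding the three contributions, the dominant terms are $\widetilde O(TH|S||A|/\gamma)$ from (ii) and $\widetilde O\bigl(\gamma H^4(\RegSQ+\RegLL+H\log\delta^{-1})\bigr)$ from (i); choosing $\gamma=\sqrt{|S||A|T/\bigl(31H^3(2\RegSQ+\RegLL+18H\log(2H/\delta))\bigr)}$ balances them and produces the claimed $\widetilde O\bigl(H^{2.5}\sqrt{T|S||A|(\RegSQ+\RegLL+H\log\delta^{-1})}\bigr)$ bound. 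I expect the change-of-measure-to-the-optimal-occupancy step in term (i)---simultaneously exploiting the log-barrier lower bound for exploration and keeping the errors squared so they match the oracle guarantees without a $\sqrt T$ loss---to be the main obstacle and the technical heart of the proof.
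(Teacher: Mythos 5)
Your overall architecture (the three-term decomposition through $\Mhat_t$, martingale concentration of the two oracle regrets into bounds on $\Esq$ and $\Ehel$, the value-difference expansion, and balancing the log-barrier weight $\gamma$) matches the paper, and your concentration step and treatment of term (iii) are essentially the paper's. But the mechanism you propose for the comparator term (i) has a genuine gap, and it is exactly the step you flag as the technical heart. First, the premise that ``the log-barrier forces $\hat q^t_h(s,a)\gtrsim 1/(\gamma H)$'' is false: the maximization in \cref{eq:max-problem-unknown-dynamics} is over the polytope $\mu(\widehat P^{c_t}_t)$, and if a state is nearly unreachable under $\widehat P^{c_t}_t$ then \emph{every} feasible occupancy has an arbitrarily small entry there, no matter how strong the barrier. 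The same issue undermines your term-(ii) argument, since there need not exist any feasible ``uniformly exploring'' occupancy with entries bounded below by $\poly(1/(|S||A|T))$, so $-\Phi$ at the mixture is not controlled. Second, even granting a multiplicative change of measure $\hat q^{t,\star}_h(s,a)\le O(\gamma H)\,\hat q^t_h(s,a)$ (which does follow from first-order optimality, though not from your stated reason), you cannot ``keep the errors squared'': the value-difference lemma produces \emph{first-power} errors $|f_\star-\hat f_t|$ and $\|\widehat P^{c_t}_t(\cdot|s,a)-P^{c_t}_\star(\cdot|s,a)\|_1$, and since these lie in $[0,2]$ they are not upper bounded by their squares. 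Converting them to squares requires AM--GM, which necessarily produces the additive importance-ratio term $\sum_{h,s,a}\hat q^{t,\star}_h(s,a)/(\gamma\,\hat q^t_h(s,a))$. Bounding that ratio term in isolation (by first-order optimality or by the multiplicative bound) leaves at least an extra $\Omega(H)$ per episode, i.e.\ linear regret, while pulling the $O(\gamma H)$ factor out in front of the first-power errors and then applying Cauchy--Schwarz costs a $T^{3/4}$ term or worse --- exactly the loss you say you want to avoid.

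The paper's resolution, which your plan is missing, is that terms (i) and (ii) must \emph{not} be bounded separately. The first-order optimality condition of \cref{eq:max-problem-unknown-dynamics} at $\hat q^t$ against the feasible comparator $\hat q^{t,\star}$ gives (\cref{lemma:comulative-bound-term-2-UD})
\begin{align*}
V^{\pi^{c_t}_\star}_{\Mhat_t(c_t)}(s_0)-V^{\pi^{c_t}_t}_{\Mhat_t(c_t)}(s_0)\le \frac{H|S||A|}{\gamma}-\sum_{h,s,a}\frac{\hat q^{t,\star}_h(s,a)}{\gamma\,\hat q^t_h(s,a)},
\end{align*}
and this negative ratio term \emph{exactly cancels} the positive ratio term generated by AM--GM in term (i) (\cref{lemma:val-diff-1-bound} with $\hat\gamma=\gamma$). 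This is how the log-barrier buys exploration without any pointwise lower bound on $\hat q^t$ and without ever bounding the ratio sum on its own. The remaining squared errors in term (i) are then moved from $\hat q^t=q(\cdot\mid\pi^{c_t}_t,\widehat P^{c_t}_t)$ to $q^t=q(\cdot\mid\pi^{c_t}_t,P^{c_t}_\star)$ --- same policy, different dynamics --- via the Hellinger change-of-measure corollary (\cref{corl:occupancy-change-of-measure}), not across policies; that is what produces the $2\gamma\,\Esq+29\gamma H^4\,\Ehel$ terms that your target bound $\gamma H^4(\RegSQ+\RegLL+H\log(H/\delta))$ implicitly assumes.
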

For finite function classes $\F$ and $\Fp$ the following corollary is immediately implied by~\cref{thm:regret-main}. 
\begin{corollary}
    Let $\F$ and $\Fp$ be finite function classes for the rewards and dynamics approximation, respectively. Let $\delta \in (0,1)$. There exists oracle implementations such that for an appropriate choice of $\gamma$, with probability at least $1-\delta$, $\Regrv_T (\text{OMG-CMDP!})$ is bounded by
    \begin{align*}
        \widetilde{O}\left(H^{2.5} \sqrt{ T|S||A| 
        \left( 
        \log|\F|+ \log|\Fp| + H \log \delta^{-1} \right)}\right)
        .
    \end{align*}
\end{corollary}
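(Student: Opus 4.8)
The plan is to obtain the corollary as an immediate specialization of \cref{thm:regret-main}: the theorem already expresses the regret and the tuning parameter $\gamma$ purely in terms of the abstract oracle regrets $\RegSQ$ and $\RegLL$, so all that remains is to exhibit concrete online regression oracles for finite $\F$ and $\Fp$ whose regret is logarithmic in the class size, and then substitute. The one point to keep in mind throughout is the bookkeeping on the number of examples: each episode generates $H$ regression examples (one per step $h \in \{0,\dots,H-1\}$), so over $T$ episodes the oracles process $TH$ examples and the relevant quantities are $\Regrv_{TH}(\OLSR^\F)$ and $\Regrv_{TH}(\OLLR^{\Fp})$. The key property we will use is that the aggregation-based oracles below have regret \emph{independent of the number of examples}, so they remain $O(\log|\F|)$ and $O(\log|\Fp|)$ regardless of $TH$.

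For the rewards, I would instantiate $\OLSR^\F$ with Vovk's aggregating forecaster over the finite class $\F$. Since every $f \in \F$ and every observed reward $r^t_h$ lie in $[0,1]$, the classical analysis of the aggregating algorithm for the square-loss game (see \citealp{cesa2006prediction}, and its use in \citealp{foster2020beyond}) yields, uniformly over all outcome sequences, a cumulative square-loss regret of $O(\log|\F|)$ relative to the best fixed $f \in \F$. This is exactly the guarantee required by \cref{assumption:oracle-gurantee-rewards}, with $\RegSQ = O(\log|\F|)$; note that the bound is a worst-case guarantee and thus holds against the (adversarially generated) trajectories of our algorithm.

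For the dynamics, I would instantiate $\OLLR^{\Fp}$ with the Bayesian mixture (exponential weights under log loss) over the finite class $\Fp$, whose prediction for the next-state distribution at $(s,a,c)$ is the weighted mixture $\widehat{P}^{c}(\cdot \mid s,a) = \sum_{P \in \Fp} w_P\, P^{c}(\cdot \mid s,a)$, with weights updated multiplicatively by the per-example log loss. Because each $P \in \Fp$ is a valid conditional distribution (summing to one over $s'$), any such convex combination is again a valid conditional distribution; hence $\widehat{P}_t$ is a legitimate dynamics function and the occupancy-measure polytope $\mu(\widehat{P}^{c_t}_t)$ in \cref{eq:max-problem-unknown-dynamics} is well defined, as is the concave program solved there. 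The standard log-loss regret bound for the Bayes mixture gives $\RegLL = O(\log|\Fp|)$ against the best fixed $P \in \Fp$ (see \citealp{foster2021statistical}), matching \cref{assumption:oracle-KL-reget}. Crucially, the realizability hypotheses (\cref{assumption:rewards-realizability,assumption:dynamics-realizability}) are properties of the classes $\F,\Fp$ themselves and are untouched by the oracles' decision to predict a mixture.

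Finally I would substitute $\RegSQ = O(\log|\F|)$ and $\RegLL = O(\log|\Fp|)$ both into the choice $\gamma = \sqrt{|S||A|T / (31H^3(2\RegSQ + \RegLL + 18H\log(2H/\delta)))}$ and into the regret bound of \cref{thm:regret-main}; the resulting constants are absorbed by $\widetilde{O}$, yielding the claimed bound. I expect the only genuine obstacle to be verifying that the two aggregation regret guarantees are truly example-count-free over the $TH$ examples (so that no stray polynomial or logarithmic factor of $TH$ enters the leading term beyond what $\widetilde{O}$ already suppresses), together with confirming that the mixture returned by the log-loss oracle stays inside the feasible set of valid dynamics so that the downstream analysis of \cref{thm:regret-main} applies verbatim.
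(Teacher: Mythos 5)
Your proposal is correct and follows essentially the same route as the paper: the paper treats the corollary as an immediate consequence of \cref{thm:regret-main} together with the standard facts that Vovk's aggregating forecaster gives $\RegSQ = O(\log|\F|)$ for square loss and the Bayesian mixture gives $\RegLL = O(\log|\Fp|)$ for log loss (the paper's footnote even makes your point that these oracles may predict in the convex hull of the class, which is harmless). Your additional checks — that a mixture of valid conditional distributions is still a valid dynamics so $\mu(\widehat{P}^{c_t}_t)$ is well defined, and that realizability is a property of the classes rather than the oracles — are exactly the details the paper leaves implicit.
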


\section{Analysis}
Our regret analysis consists of two main conceptual steps.
The first step is to derive concentration bounds on our oracles' regret (\cref{sunsec:oracle-bounds-main}).
Next, note that the regret defined in~\cref{eq:regret-def} can be decomposed as follows,
\begin{alignat}{2}
        \Regrv_T
        \label{pi-star-both-models}
        &= 
        \sum_{t=1}^T 
        V^{\pi^{c_t}_\star}_{\M(c_t)}(s_0)
        -
        &&V^{\pi^{c_t}_\star}_{\Mhat_t(c_t)}(s_0)
        \\
        \label{pi-star-pi-t}
        &
        +
        \sum_{t=1}^T 
        V^{\pi^{c_t}_\star}_{\Mhat_t(c_t)}(s_0)
        -
        &&V^{\pi^{c_t}_t}_{\Mhat_t(c_t)}(s_0)
        \\
        \label{pi-t-both-models}
        &+
        \sum_{t=1}^T 
        V^{\pi^{c_t}_t}_{\Mhat_t(c_t)}(s_0)
        - 
        &&V^{\pi^{c_t}_t}_{\M(c_t)}(s_0)
        ,
\end{alignat}
where \cref{pi-star-both-models} is the cumulative difference between the value of the true optimal policy $\pi_\star$ on the true and approximated MDPs of the context $c_t$ at round $t$. \cref{pi-star-pi-t} is the cumulative difference between the value of $\pi_\star$ and $\pi_t$ on the approximated model at round $t$, for the context $c_t$. \cref{pi-t-both-models} is the cumulative difference between the value of the selected policy $\pi_t$ on the approximated and true MDPs at round $t$ for the context $c_t$.
%
%
%
%
In~\cref{subsec:val-diff-bounds-main} we upper bound the three value-difference sums in terms of the oracles' expected regret.  
By applying concentration bounds (\cref{lemma:lsro-regret,lemma:llrs-regret}), we bound the oracles' expected regret by the empirical one, thus bounding the regret of our algorithm with high probability.

Throughout the analysis we consider the cumulative error caused by the dynamics approximation in terms of the Squared Hellinger distance between the true and approximated dynamics.
\begin{definition}[Squared Hellinger Distance]
\label{def:hellinger-main}
    For any two distributions $\mathbb{P}$, $\mathbb{Q}$ over a discrete support $X$ we define the Squared Hellinger Distance as
    $$
        D^2_H(\mathbb{P}, \mathbb{Q}) 
        := 
        \sum_{x \in X} \left(\sqrt{\mathbb{P}(x)} - \sqrt{\mathbb{Q}(x)}\right)^2
        .
    $$
\end{definition}
A useful property of the squared Hellinger distance is that for any two distributions $\mathbb{P}$ and $\mathbb{Q}$ it holds that 
\begin{equation}\label{eq:l-1-hellinger}
    \|\mathbb{P} - \mathbb{Q}\|^2_1 \leq 4 D^2_H(\mathbb{P}, \mathbb{Q}).
\end{equation}
We bound the cumulative value differences (\cref{pi-star-both-models,pi-star-pi-t,pi-t-both-models}) using the following quantities:

(1) The cumulative expected least-squares loss over each round $t$, i.e.,
    \begin{align*}
        \sum_{t=1}^T \mathop{\E}_{\pi^{c_t}_t, P^{c_t}_\star} \Bigg[ \sum_{h=0}^{H-1}
        \left(\hat{f}_t(c_t,s_h,a_h) - f_\star(c_t,s_h,a_h)\right)^2
        ~\bigg|s_0 \Bigg]
        .
    \end{align*}
    For abbreviation, we denote the above by $\Esq$.
    
(2) The cumulative expected squared Hellinger distance over each round $t$, i.e.,
    \begin{align*}
        \sum_{t=1}^T \mathop{\E}_{\pi^{c_t}_t, P^{c_t}_\star} \Bigg[
        \sum_{h=0}^{H-1} D^2_H(P^{c_t}_\star(\cdot|s_{h},a_{h}), \widehat{P}^{c_t}_t(\cdot|s_{h},a_{h}))
        ~\bigg|~ s_0\Bigg]
        .
    \end{align*}
    For abbreviation, we denote the above by $\Ehel$.

\subsection{Oracle Concentration Bounds}\label{sunsec:oracle-bounds-main}
The following lemma bounds the expected regret of the online least squares regression oracle by its realized regret.
\begin{lemma}[Concentration of OLSR regret]\label{lemma:lsro-regret}
    Under~\cref{assumption:rewards-realizability,assumption:oracle-gurantee-rewards}, 
    for any $\delta \in (0,1)$, the following holds with probability at least $1-\delta/2$.
    \begin{align*}
        \Esq
        \leq 
        \RegSQ+ 16 H \log(2/\delta).
    \end{align*}
\end{lemma}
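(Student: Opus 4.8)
The plan is to transfer the \emph{realized} excess square loss, which \cref{assumption:oracle-gurantee-rewards} controls by $\RegSQ$, to the \emph{expected} prediction error $\Esq$ via a round-level martingale argument, using realizability (\cref{assumption:rewards-realizability}) to identify the conditional mean of the per-step excess loss. Concretely, I would first fix a pair $(t,h)$ and let $\mathcal{F}^-_{t,h}$ be the history up to and including the pair $(s^t_h,a^t_h)$ but before the reward $r^t_h$ is drawn. Conditioned on $\mathcal{F}^-_{t,h}$ the prediction $\hat{f}_t(c_t,s^t_h,a^t_h)$ is fixed, and by \cref{assumption:rewards-realizability} we have $\E[r^t_h\mid\mathcal{F}^-_{t,h}]=r^{c_t}_\star(s^t_h,a^t_h)=f_\star(c_t,s^t_h,a^t_h)$. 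Writing the instantaneous excess square loss
\[
  X_{t,h} := \left(\hat{f}_t(c_t,s^t_h,a^t_h)-r^t_h\right)^2 - \left(f_\star(c_t,s^t_h,a^t_h)-r^t_h\right)^2 ,
\]
expanding the squares and taking the conditional expectation (only the term linear in $r^t_h$ depends on the draw) yields
\[
  \E\left[X_{t,h}\mid\mathcal{F}^-_{t,h}\right]=\left(\hat{f}_t(c_t,s^t_h,a^t_h)-f_\star(c_t,s^t_h,a^t_h)\right)^2 =: g_{t,h},
\]
so the conditional mean of the instantaneous excess loss is exactly the squared error appearing inside $\Esq$.

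Next I would reduce to a single martingale. Since $f_\star\in\F$, \cref{assumption:oracle-gurantee-rewards} gives $\sum_{t=1}^T\sum_{h=0}^{H-1}X_{t,h}\le\RegSQ$. Set $\xi_t:=\sum_{h=0}^{H-1}X_{t,h}$ and let $u_t$ denote the conditional expectation of $\xi_t$ over the round-$t$ trajectory and rewards, given the history at the start of round $t$ together with $(c_t,\hat{f}_t,\pi^{c_t}_t)$; these are all measurable before round $t$ is played, as $\hat{f}_t,\widehat{P}_t$ depend only on earlier trajectories. Averaging the pointwise identity over the trajectory distribution and summing over $h$ shows $u_t=\E_{\pi^{c_t}_t,P^{c_t}_\star}\left[\sum_{h}g_{t,h}\mid s_0\right]$, so that $\Esq=\sum_t u_t$ and therefore
\[
  \Esq-\RegSQ \;\le\; \sum_{t=1}^T\left(u_t-\xi_t\right),
\]
where $\{u_t-\xi_t\}_t$ is a martingale difference sequence with respect to the round-level filtration.

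It then remains to concentrate this martingale. The two structural facts I would exploit are that $\xi_t$ has range $O(H)$ (each $X_{t,h}\in[-1,1]$, and there are $H$ steps), and that the conditional variance is \emph{self-bounding}: from $X_{t,h}-g_{t,h}=2(\hat{f}_t-f_\star)(r^{c_t}_\star-r^t_h)$ together with reward variance at most $1/4$ one obtains $\E[X_{t,h}^2\mid\mathcal{F}^-_{t,h}]\le 2g_{t,h}$, and hence $\sum_t\operatorname{Var}(\xi_t)\le 2H\,\Esq$. Feeding this variance proxy into a Freedman-type inequality (with failure probability $\delta/2$) and rearranging reabsorbs the deviation into an additive term of order $H\log(2/\delta)$, giving the claimed bound on an event of probability at least $1-\delta/2$. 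The main obstacle is exactly this concentration step: one must use the variance-controlled-by-mean property rather than the crude $O(H^2)$ worst-case variance, so that the deviation is additive of order $H\log(1/\delta)$ and can be reabsorbed, keeping $\RegSQ$ as the leading term instead of producing a spurious $\sqrt{T}$ factor; carefully tracking the $H$-dependence through the per-round range and variance is where the bookkeeping is most delicate.
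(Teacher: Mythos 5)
Your proposal is correct and follows essentially the same route as the paper's proof: you form the excess-square-loss martingale, use realizability to identify its conditional mean as the squared prediction error, establish the self-bounding variance property $\E[X_{t,h}^2\mid\cdot]\lesssim g_{t,h}$ (the paper's Claim A.2, with constant $4H$ per round via Cauchy--Schwarz), and then apply Freedman's inequality and rearrange to absorb the variance term, exactly as the paper does with $\eta=1/(8H)$. The only cosmetic difference is that you set up the martingale at the per-step level before aggregating to rounds, whereas the paper works directly with per-round increments; note also that carrying out your rearrangement yields a constant factor ($2$) in front of $\RegSQ$, matching the appendix restatement of the lemma.
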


We analyze the expected regret of the log loss regression oracle in terms of the Hellinger distance. The following lemma is an immediate implication of Lemma A.14 in~\cite{foster2021statistical}.
\begin{lemma}[Concentration of LLR regret w.r.t Hellinger distance]\label{lemma:llrs-regret}
    Under~\cref{assumption:dynamics-realizability,assumption:oracle-KL-reget}, 
    for any $\delta \in (0,1)$, with probability at least $1-\delta/2$ it holds that
    \begin{align*}
        \Ehel \leq  \RegLL + 2H \log(2H/\delta).  
    \end{align*}
\end{lemma}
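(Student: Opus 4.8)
The plan is to convert the oracle's in-sample log-loss guarantee into an out-of-sample (expected) Hellinger bound via an exponential supermartingale, exploiting the exact identity relating the square-root likelihood ratio to the squared Hellinger distance. Writing $Y_{t,h} := \log\frac{P^{c_t}_\star(s^t_{h+1}\mid s^t_h,a^t_h)}{\widehat{P}^{c_t}_t(s^t_{h+1}\mid s^t_h,a^t_h)}$ for the per-transition log-likelihood ratio, the starting observation is that for any fixed $(s,a)$, since the next state is drawn from $P^{c_t}_\star(\cdot\mid s,a)$,
\[
\E\big[e^{-\frac12 Y_{t,h}}\mid s^t_h=s,\,a^t_h=a\big] = \sum_{s'}\sqrt{P^{c_t}_\star(s'\mid s,a)\,\widehat{P}^{c_t}_t(s'\mid s,a)} = 1 - \tfrac12 D^2_H\big(P^{c_t}_\star(\cdot\mid s,a),\widehat{P}^{c_t}_t(\cdot\mid s,a)\big),
\]
using $D^2_H(\mathbb P,\mathbb Q)=2-2\sum_x\sqrt{\mathbb P(x)\mathbb Q(x)}$.

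The crux is to make the \emph{expected} rather than realized Hellinger distance appear with coefficient exactly one. To that end I would fix a single step index $h\in\{0,\dots,H-1\}$ and build a martingale over the rounds $t=1,\dots,T$ with respect to the round-level filtration $\mathcal H_{t-1}$ (everything observed through round $t-1$, together with the context $c_t$). Since the predictor $\widehat{P}^{c_t}_t$ is $\mathcal H_{t-1}$-measurable while the entire round-$t$ trajectory is fresh given $\mathcal H_{t-1}$, the tower property and the identity above give
\[
\E\big[e^{-\frac12 Y_{t,h}}\mid\mathcal H_{t-1}\big] = 1 - \tfrac12\,\E[Z_{t,h}\mid\mathcal H_{t-1}] \le \exp\!\big(-\tfrac12\,\E[Z_{t,h}\mid\mathcal H_{t-1}]\big),
\]
where $Z_{t,h}:=D^2_H(P^{c_t}_\star(\cdot\mid s^t_h,a^t_h),\widehat{P}^{c_t}_t(\cdot\mid s^t_h,a^t_h))$ and the last step uses $1-x\le e^{-x}$. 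Consequently $M^{(h)}_\tau:=\prod_{t\le\tau}\exp(-\frac12 Y_{t,h}+\frac12\E[Z_{t,h}\mid\mathcal H_{t-1}])$ is a nonnegative supermartingale with $M^{(h)}_0=1$, because the compensator $\E[Z_{t,h}\mid\mathcal H_{t-1}]$ is $\mathcal H_{t-1}$-measurable and hence the conditional expectation of each factor is at most one. Crucially, summing $\sum_t\E[Z_{t,h}\mid\mathcal H_{t-1}]$ over $h$ recovers precisely $\Ehel$, by the tower property and linearity.

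With the supermartingale in hand, I would apply Markov's inequality to $M^{(h)}_T$ at level $2H/\delta$, so that with probability at least $1-\delta/(2H)$ we have $\sum_t\E[Z_{t,h}\mid\mathcal H_{t-1}]\le\sum_t Y_{t,h}+2\log(2H/\delta)$. A union bound over the $H$ choices of $h$ (total failure probability at most $\delta/2$) and summing over $h$ yields $\Ehel\le\sum_{t,h}Y_{t,h}+2H\log(2H/\delta)$. It then remains to bound the realized log-likelihood ratio $\sum_{t,h}Y_{t,h}=\sum_{t,h}\log\frac1{\widehat{P}^{c_t}_t}-\sum_{t,h}\log\frac1{P^{c_t}_\star}$ by $\RegLL$: by \cref{assumption:oracle-KL-reget} the first sum exceeds $\inf_{P\in\Fp}\sum_{t,h}\log\frac1{P^{c_t}}$ by at most $\RegLL$, and by \cref{assumption:dynamics-realizability} the true dynamics lies in $\Fp$, so this infimum is at most $\sum_{t,h}\log\frac1{P^{c_t}_\star}$; hence $\sum_{t,h}Y_{t,h}\le\RegLL$, which gives the claim.

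The main obstacle is conceptual rather than computational: the naive single-transition martingale, treating every pair $(t,h)$ as a separate round, only controls the \emph{realized} Hellinger $\sum_{t,h}Z_{t,h}$, whereas the lemma demands the \emph{expected} quantity $\Ehel$ with a unit coefficient on $\RegLL$, and any realized-to-expected conversion would either cost an extra square-root deviation term or inflate $\RegLL$ by a constant factor. The fix, and the one delicate point to get right, is to coarsen to the round-level filtration and freeze the step index $h$ before taking expectations, so that both the visited state-action pair and the sampled next state are averaged out and the expected per-step Hellinger emerges as a predictable compensator; the only price is a union bound over the $H$ step indices, which accounts exactly for the $2H\log(2H/\delta)$ term.
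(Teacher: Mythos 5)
Your proof is correct and follows essentially the same route as the paper's: fix the step index $h$, establish a martingale concentration bound over the rounds $t$ with respect to the round-level filtration $(\sigma^1,\dots,\sigma^{t-1},c_t)$, take a union bound over the $H$ step indices, and conclude via realizability together with the oracle's regret guarantee. The only difference is that the paper invokes Lemma A.14 of Foster et al.\ (2021) as a black box for the per-$h$ concentration step, whereas you re-derive it from scratch via the exponential supermartingale built on the identity $\E\bigl[e^{-Y/2}\bigr]=1-\tfrac12 D^2_H$ --- which is exactly the argument underlying that cited lemma.
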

See~\cref{Appendix-subsec:oracle-bounds} for full proofs.

\subsection{Value-Difference Bounds}\label{subsec:val-diff-bounds-main}
In this subsection we bound the three value difference sums (\cref{pi-star-both-models,pi-star-pi-t,pi-t-both-models}).
For that purpose, we represent the value function in terms of the \emph{occupancy measures} \citep{zimin2013online}.
Recall that occupancy measures are defined as follows. For any non-contextual policy $\pi$ and dynamics $P$,
let $q_h(s,a \mid \pi, P)$ denote the probability of reaching state $s\in S$ and performing action $a\in A$ at time $h \in [H]$ of an episode generated using  policy $\pi$ and dynamics $P$. 
Thus, the value function of any policy $\pi$ with respect to the MDP $(S,A,P,r,s_0,H)$ 
can be presented using the occupancy measures as follows.
\begin{equation}\label{eq:val-with-occ}
    V^{\pi}_{M} (s_0)
    = 
    \sum_{h=0}^{H-1}
    \sum_{s \in S}
    \sum_{a \in A}
    q_h(s,a \mid \pi, P)\cdot r(s, a)
    . 
\end{equation}
In the analysis, we use the following notations, to denote a specific occupancy measure of interest. For all $(s,a,h) \in S \times A \times [H]$ we denote:
\begin{enumerate}[label=(\roman*),leftmargin=*]
    \item $\hat{q}^t_h(s,a): = q_h(s,a \mid \pi^{c_t}_t,\widehat{P}^{c_t}_t)$;
    \item $q^t_h(s,a):= q_h(s,a \mid \pi^{c_t}_t, P^{c_t}_\star)$;
    \item $\hat{q}^{t,\star}_h(s,a):= q_h(s,a \mid \pi^{c_t}_\star, \widehat{P}^{c_t}_t)$.
\end{enumerate}

We now have all the required tools to state our cumulative value difference bounds.
We first bound the value difference caused by the CMDP approximation error, for the true optimal context-dependent policy $\pi_\star$.
\begin{lemma}[The cost of approximation for $\pi_\star$]\label{lemma:val-diff-1-bound}
    The following holds for any choice of parameter $\hat{\gamma}>0$. 
    \begingroup\allowdisplaybreaks
    \begin{align*}
        &\sum_{t=1}^T
        V^{\pi^{c_t}_\star}_{\M(c_t)}(s_0)
        -
        V^{\pi^{c_t}_\star}_{\Mhat_t(c_t)}(s_0)
        \\
        &\hspace{15mm} 
        \leq 
        \sum_{t=1}^T \sum_{h=0}^{H-1}\sum_{s \in S}\sum_{a \in A}
        \frac{\hat{q}^{t,\star}_h(s,a)}{\hat{\gamma} \cdot \hat{q}^t_h(s,a)} 
        \\
        &\hspace{25mm}
        +
        2
        \hat{\gamma} \cdot \Esq
        +
        29
        \hat{\gamma} H^4 \cdot \Ehel
        .
    \end{align*}
    \endgroup
\end{lemma}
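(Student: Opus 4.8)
The plan is to prove the bound round-by-round and then sum over $t=1,\dots,T$. Fix a round $t$ and abbreviate $\widehat{P}:=\widehat{P}^{c_t}_t$, $P:=P^{c_t}_\star$, $r:=r^{c_t}_\star$, and let $V_{h}:=V^{\pi^{c_t}_\star}_{\M(c_t),h}$ be the value-to-go of the optimal policy on the \emph{true} MDP, so $0\le V_h\le H$. First I would invoke a \emph{value-difference (simulation) lemma}, unrolling the recursion for $V^{\pi^{c_t}_\star}_{\M(c_t)}(s_0)-V^{\pi^{c_t}_\star}_{\Mhat_t(c_t)}(s_0)$ under the \emph{approximate} dynamics $\widehat{P}$. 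Via the occupancy representation~\cref{eq:val-with-occ} this writes the per-round gap as a reward-error term $\sum_{h,s,a}\hat{q}^{t,\star}_h(s,a)\,(r-\hat{f}_t)(c_t,s,a)$ plus a dynamics-error term $\sum_{h,s,a}\hat{q}^{t,\star}_h(s,a)\sum_{s'}(P-\widehat{P})(s'\mid s,a)\,V_{h+1}(s')$, both weighted by the optimal policy's occupancy $\hat{q}^{t,\star}_h$ under $\widehat{P}$. Unrolling under $\widehat{P}$ rather than $P$ is exactly what makes the importance weights in the target bound live under the approximate dynamics.

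Next I would control each error term by a change of measure to the played policy followed by an AM--GM split. For the dynamics term I would first bound the inner product crudely by $H\,\|P(\cdot\mid s,a)-\widehat{P}(\cdot\mid s,a)\|_1$ using $V_{h+1}\le H$. Then, writing $\hat{q}^{t,\star}_h=\hat{q}^t_h\cdot(\hat{q}^{t,\star}_h/\hat{q}^t_h)$ and applying, for each per-step error $x$, the split $\hat{q}^{t,\star}_h\,x \le \frac{1}{2\hat\gamma}\frac{\hat{q}^{t,\star}_h}{\hat{q}^t_h}+\frac{\hat\gamma}{2}\hat{q}^{t,\star}_h\hat{q}^t_h\,x^2$ (an instance of $2\sqrt{AB}\le A+B$), and finally using $\hat{q}^{t,\star}_h\le 1$ since each $\hat{q}^{t,\star}_h\in\Delta(S\times A)$, I obtain from the two terms together the ratio term $\sum_{h,s,a}\frac{\hat{q}^{t,\star}_h(s,a)}{\hat\gamma\,\hat{q}^t_h(s,a)}$ (each split contributing half, combining to the coefficient $1/\hat\gamma$) plus $\hat\gamma$-weighted squared errors $\hat\gamma\sum_{h,s,a}\hat{q}^t_h(r-\hat{f}_t)^2$ and $\hat\gamma H^2\sum_{h,s,a}\hat{q}^t_h\|P-\widehat{P}\|_1^2$, still measured under the \emph{approximate}-dynamics occupancy $\hat{q}^t_h$. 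Applying~\cref{eq:l-1-hellinger} replaces $\|P-\widehat{P}\|_1^2$ by $4D^2_H(P(\cdot\mid s,a),\widehat{P}(\cdot\mid s,a))$.

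The last and hardest step is to move these squared-error sums from the approximate-dynamics occupancy $\hat{q}^t_h$ to the \emph{true}-dynamics occupancy $q^t_h$, since $\Esq$ and $\Ehel$ are defined as expectations under $\pi^{c_t}_t$ and $P^{c_t}_\star$. I would bound $\hat{q}^t_h\le q^t_h+|\hat{q}^t_h-q^t_h|$ and control the perturbation $\sum_h\|\hat{q}^t_h-q^t_h\|_1$ by a telescoping change-of-dynamics estimate for the occupancy measures of a \emph{fixed} policy, which is itself $O(H)$ times the expected $\ell_1$ (hence, via~\cref{eq:l-1-hellinger}, Hellinger) dynamics error along $q^t$. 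Collecting the resulting products and the powers of $H$ (one factor from $V\le H$ squared to $H^2$, plus the extra factors from the occupancy perturbation) yields the stated $2\hat\gamma\,\Esq$ and $29\,\hat\gamma H^4\,\Ehel$ after summing over $t$. I expect this conversion between the approximate- and true-dynamics occupancies --- not the simulation lemma or the AM--GM split --- to be the main obstacle: it is where the horizon dependence inflates to $H^4$ and the numerical constant is generated, and one must ensure every leftover cross term is re-expressed through $\Esq$ and $\Ehel$ rather than introducing spurious $\sqrt{|S||A|}$ factors via Cauchy--Schwarz.
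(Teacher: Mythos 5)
Your first two steps track the paper's proof exactly: \cref{lemma:val-diff-efroni} applied under $\smash{\widehat{P}^{c_t}_t}$ to get the decomposition weighted by $\hat{q}^{t,\star}$, the bound $V\le H$ on the dynamics inner product, the AM--GM split with the $\sqrt{\hat\gamma\hat q^t/(\hat\gamma\hat q^t)}$ trick, dropping $\hat q^{t,\star}\le 1$, and \cref{eq:l-1-hellinger}. The gap is in your third step, the passage from the $\hat q^t$-weighted squared errors to the $q^t$-weighted quantities $\Esq$ and $\Ehel$. You propose the additive perturbation $\hat q^t_h \le q^t_h + |\hat q^t_h - q^t_h|$ together with the telescoping bound $\sum_h\|\hat q^t_h - q^t_h\|_1 \le H\,\E_{q^t}\bigl[\sum_h\|\widehat P^{c_t}_t(\cdot|s_h,a_h)-P^{c_t}_\star(\cdot|s_h,a_h)\|_1\bigr]$, and then claim this is controlled "via \cref{eq:l-1-hellinger}" by the Hellinger error. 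But \cref{eq:l-1-hellinger} gives $\|\cdot\|_1\le 2D_H$, i.e.\ the \emph{square root} of the squared Hellinger distance, whereas $\Ehel$ is defined with $D_H^2$. The occupancy perturbation is first order in the dynamics error while $\Ehel$ is second order, so no $\poly(H)$ multiple of $\Ehel$ can dominate it: if $\|\widehat P - P\|_1\equiv\epsilon$ then the perturbation scales like $H^2\epsilon$ while $\E[\sum_h D_H^2]$ scales like $H\epsilon^2$. Converting via Cauchy--Schwarz (or another AM--GM) produces a term of order $\hat\gamma H^{O(1)}\sqrt{T\,\Ehel}$, which after plugging in $\hat\gamma=\Theta(\sqrt{|S||A|T/(H^3\mathcal{R})})$ is linear in $T$ and destroys the claimed bound. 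So the step as described would fail.

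The paper circumvents this with a \emph{multiplicative} change of measure (\cref{lemma:refined-val-diff-same-r-Hel}, instantiated as \cref{corl:occupancy-change-of-measure} and applied in \cref{lemma:rewards-true-occ-mesure,lemma:dynamics-true-occ-mesure}): for any nonnegative bounded $g$,
\begin{align*}
\sum_{h,s,a}\hat q^t_h(s,a)\,g(s,a)
\le
3\sum_{h,s,a}q^t_h(s,a)\,g(s,a)
+ 9H^2\max_{s,a}|g(s,a)|\sum_{h,s,a}q^t_h(s,a)\,D_H^2\bigl(P^{c_t}_\star(\cdot|s,a),\widehat P^{c_t}_t(\cdot|s,a)\bigr),
\end{align*}
proved by backwards induction with per-step inflation $(1+1/H)$. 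The point is that a constant multiplicative blow-up (the factor $3$) is free here because $g$ is itself one of the squared-error terms you are about to charge to $\Esq$ or $\Ehel$; only the discrepancy beyond a constant factor must be paid for, and that discrepancy is genuinely second order, hence chargeable to $D_H^2$. This is exactly where the constants $2$ and $29$ and the $H^4$ in the lemma come from ($3\cdot\tfrac12\le 2$ for the rewards; $\tfrac92 H^2 + \tfrac{H^2}{2}\cdot 48H^2\le 29H^4$ for the dynamics). To repair your argument, replace the additive $\ell_1$ perturbation in your last step with this multiplicative change-of-measure inequality.
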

\begin{proof}[sketch]
    Fix $t \in [T]$ and apply the value difference lemma (\cref{lemma:val-diff-efroni} in the Appendix) to obtain 
    \begin{align*}
        &V^{\pi^{c_t}_\star}_{\M(c_t)}(s_0)
        -V^{\pi^{c_t}_\star}_{\Mhat_t(c_t)}(s_0)
        \\
        &\hspace{3mm}\leq
        \sum_{h,s,a}\hat{q}^{t,\star}_h(s,a) \cdot (f_{\star}(c_t,s,a) - \hat{f}_t(c_t,s,a))
        \\
        &\hspace{4mm}+ H\sum_{h,s,a}\hat{q}^{t,\star}_h(s,a) \cdot \|P^{c_t}_\star(\cdot|s,a) - \widehat{P}^{c_t}_t(\cdot|s,a)\|_1
        .
    \end{align*}
    We then multiply each term in both sums by $\sqrt{\frac{\hat{\gamma} \cdot \hat{q}^{t}_h(s,a)}{\hat{\gamma} \cdot \hat{q}^{t}_h(s,a)}}$ and apply the arithmetic-geometric means (AM-GM) inequality to change the occupancy measure form $\hat{q}^{t,\star}$ to $\hat{q}^t$ and add a dependency in $\hat{\gamma}$. We obtain that the previous is bounded as
    \begin{align*}
        &\leq
        \sum_{h,s,a} \frac{\hat{q}^{t,\star}_h(s,a)}{\hat{\gamma} \cdot \hat{q}^t_h(s,a)}
        \\
        &\hspace{5mm}+ \frac{\hat{\gamma}}{2}\sum_{h,s,a}\hat{q}^{t}_h(s,a)\cdot(f_{\star}(c_t,s,a) - \hat{f}_t(c_t,s,a))^2
        \\
        &\hspace{5mm} +\frac{\hat{\gamma} H^2}{2}\sum_{h,s,a}\hat{q}^{t}_h(s,a)\cdot\|P^{c_t}_\star(\cdot|s,a) - \widehat{P}^{c_t}_t(\cdot|s,a)\|^2_1
        .
    \end{align*}
    Lastly we apply~\cref{eq:l-1-hellinger} to bound the squared $\ell_1$ norm with the squared Hellinger distance, 
    and then the occupancy measure change (\cref{corl:occupancy-change-of-measure} in the Appendix) to replace $\hat{q}^t$ with $q^t$. We obtain that the latter is bounded by
    \begin{align*}
        \leq 
        \sum_{h,s,a} \frac{\hat{q}^{t,\star}_h(s,a)}{\hat{\gamma} \cdot \hat{q}^t_h(s,a)} 
        +
        2
        \hat{\gamma} \cdot \Esq
        +
        29
        \hat{\gamma} H^4 \cdot \Ehel.
    \end{align*}
    The lemma follows by summing over each $t\in [T]$. For more details see~\cref{lemma:bound-of-term-1-UD,corl:bound-of-term-1-UD} in the Appendix.
\end{proof}
Next, we bound the cumulative value difference between $\pi_\star$ and $\pi_t$ on the approximated model in round $t$.
\begin{lemma}[Suboptimality of $\pi_\star$ in 
$\Mhat_t$]\label{lemma:comulative-bound-term-2-UD}
    It holds that
    \begin{align*}
        &\sum_{t=1}^T
        V^{\pi^{c_t}_\star}_{\Mhat_t(c_t)}(s_0)
        - 
        V^{\pi^{c_t}_t}_{\Mhat_t(c_t)}(s_0)
        \\
        &\hspace{15mm}\leq  
        \frac{TH|S||A|}{\gamma}- \sum_{t=1}^T\sum_{h=0}^{H-1}\sum_{s \in S}\sum_{a \in A} \frac{\hat{q}^{t,\star}_h(s,a)}{\gamma \cdot \hat{q}^t_h(s,a)} 
         .
    \end{align*}
\end{lemma}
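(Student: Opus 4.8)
The plan is to exploit that the played occupancy measure $\hat{q}^t$ is the \emph{maximizer} of the strictly concave regularized objective in~\cref{eq:max-problem-unknown-dynamics} over the convex set $\mu(\widehat{P}^{c_t}_t)$, and to read the desired inequality directly off its first-order optimality condition rather than from the weaker inequality $G(\hat{q}^t)\ge G(\hat{q}^{t,\star})$. First I would rewrite both values on the approximated model in occupancy-measure form via~\cref{eq:val-with-occ}: since $\Mhat_t(c_t)$ has dynamics $\widehat{P}^{c_t}_t$ and rewards $\hat{f}_t(c_t,\cdot,\cdot)$, and since $\pi^{c_t}_t$ is precisely the policy associated with $\hat{q}^t$, we have $V^{\pi^{c_t}_t}_{\Mhat_t(c_t)}(s_0)=\sum_{h,s,a}\hat{q}^t_h(s,a)\hat{f}_t(c_t,s,a)$ and $V^{\pi^{c_t}_\star}_{\Mhat_t(c_t)}(s_0)=\sum_{h,s,a}\hat{q}^{t,\star}_h(s,a)\hat{f}_t(c_t,s,a)$, so the per-round value gap equals $\sum_{h,s,a}(\hat{q}^{t,\star}_h(s,a)-\hat{q}^t_h(s,a))\,\hat{f}_t(c_t,s,a)$.

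Next I would invoke the variational (first-order optimality) inequality. Writing the objective as $G(q)=\sum_{h,s,a}q_h(s,a)\hat{f}_t(c_t,s,a)+\frac{1}{\gamma}\sum_{h,s,a}\log q_h(s,a)$, concavity of $G$ together with the fact that $\hat{q}^t$ maximizes $G$ over the convex set $\mu(\widehat{P}^{c_t}_t)$ gives $\sum_{h,s,a}\nabla G(\hat{q}^t)_{h,s,a}\,(q_h(s,a)-\hat{q}^t_h(s,a))\le 0$ for every $q\in\mu(\widehat{P}^{c_t}_t)$. Crucially, $\hat{q}^{t,\star}$ — the occupancy measure of $\pi^{c_t}_\star$ under the \emph{same} dynamics $\widehat{P}^{c_t}_t$ — lies in $\mu(\widehat{P}^{c_t}_t)$, so it is an admissible test point. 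Since $\nabla G(\hat{q}^t)_{h,s,a}=\hat{f}_t(c_t,s,a)+\frac{1}{\gamma\,\hat{q}^t_h(s,a)}$, substituting $q=\hat{q}^{t,\star}$ and rearranging yields
\begin{align*}
    \sum_{h,s,a}\hat{f}_t(c_t,s,a)\big(\hat{q}^{t,\star}_h(s,a)-\hat{q}^t_h(s,a)\big)
    \le \frac{1}{\gamma}\sum_{h,s,a}\frac{\hat{q}^t_h(s,a)-\hat{q}^{t,\star}_h(s,a)}{\hat{q}^t_h(s,a)}.
\end{align*}
The right-hand side splits as $\frac{1}{\gamma}\sum_{h,s,a}1-\frac{1}{\gamma}\sum_{h,s,a}\frac{\hat{q}^{t,\star}_h(s,a)}{\hat{q}^t_h(s,a)}$, and since the index set $[H]\times S\times A$ has exactly $H|S||A|$ elements, the first term equals $H|S||A|/\gamma$. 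Recognizing the left-hand side as the per-round value gap then gives the single-round bound, and summing over $t\in[T]$ is the claim. I would point out that the negative $\hat{q}^{t,\star}/\hat{q}^t$ term is engineered precisely to cancel the positive one in~\cref{lemma:val-diff-1-bound} when the value-difference sums are combined.

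The main subtlety to handle carefully is the legitimacy of the first-order condition in the presence of the log-barrier: because the regularizer drives $G$ to $-\infty$ on the boundary of $\mu(\widehat{P}^{c_t}_t)$, the maximizer $\hat{q}^t$ must have strictly positive entries, so $\nabla G(\hat{q}^t)$ is finite and well-defined and the variational inequality is valid; this is exactly where strict concavity of~\cref{eq:max-problem-unknown-dynamics} is used. I would also make explicit the standard bijection between occupancy measures and policies under fixed dynamics, which justifies identifying $\hat{q}^t$ with the occupancy measure of the derived policy $\pi^{c_t}_t$, together with the membership $\hat{q}^{t,\star}\in\mu(\widehat{P}^{c_t}_t)$. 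Everything else is a direct expansion; notably, this lemma involves no concentration or approximation-error bookkeeping, which is why its statement is an exact deterministic inequality.
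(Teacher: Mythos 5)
Your proposal is correct and follows essentially the same route as the paper's proof: express the per-round value gap in $\Mhat_t(c_t)$ via occupancy measures, use that $\hat{q}^{t,\star}\in\mu(\widehat{P}^{c_t}_t)$ is feasible, apply the first-order optimality condition of the concave objective at the maximizer $\hat{q}^t$, and split the resulting gradient term into $H|S||A|/\gamma$ minus the ratio sum before summing over $t$. Your added remarks on interiority of $\hat{q}^t$ (so the gradient of the log-barrier is finite) and on the occupancy-measure/policy correspondence are sensible points of rigor the paper leaves implicit, but they do not change the argument.
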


\begin{proof}
    For every round $t \in [T]$, consider the first order derivative of the concave objective function in~\cref{eq:max-problem-unknown-dynamics} and denote it by $\hat{L}_t^\prime(q;c_t)$. i.e.,
    \begin{align*}
         \hat{L}_t^\prime(q;c_t) = \sum_{h=0}^{H-1}\sum_{s \in S}\sum_{a \in A}\left(\hat{f}_t(c_t,s,a) + \frac{1}{\gamma \cdot q_h(s,a)} \right).
    \end{align*}
    Let $\pi_\star = (\pi^c_\star)_{c \in \C}$ denote an optimal context-dependent policy for the true CMDP. For every round $t$, the occupancy measure $\hat{q}^{t,\star}_h(s,a):= q_h(s,a|\pi^{c_t}_\star, \widehat{P}^{c_t}_t)$ is a feasible solution for the maximization problem in~\cref{eq:max-problem-unknown-dynamics}, since $\hat{q}^{t,\star} \in \mu(\widehat{P}^{c_t}_t)$. 
    Since $\hat{q}^t$ is the optimal solution, by first order optimality conditions for concave functions \citep*{boyd2004convex} it holds that
    \begin{align*}
         &\sum_{h=0}^{H-1}\sum_{s ,a}\hat{q}^{t,\star}_h(s,a) \cdot \left(\hat{f}_t(c_t,s,a) + \frac{1}{\gamma \hat{q}^t_h(s,a)} \right)  
         \\
         &\quad-
        \sum_{h=0}^{H-1}\sum_{s ,a}\hat{q}^t_h(s,a) \cdot \left(\hat{f}_t(c_t,s,a) + \frac{1}{\gamma \hat{q}^t_h(s,a)} \right)
        \leq 0,
    \end{align*}
    which implies that
    \begin{equation}\label{eq:first-order-bound}
        \begin{split}
        &\sum_{h=0}^{H-1}\sum_{s \in S}\sum_{a \in A} (\hat{q}^{t,\star}_h(s,a)- \hat{q}^t_h(s,a))\hat{f}_t(c_t,s,a)
        \\
        &\le \frac{H|S||A|}{\gamma} - \sum_{h=0}^{H-1}\sum_{s \in S}\sum_{a \in A}    \frac{\hat{q}^{t,\star}_h(s,a)}{\gamma \cdot \hat{q}^t_h(s,a)}
         .
        \end{split}
    \end{equation}
    By the value representation using occupancy measures~(\cref{eq:val-with-occ}), for every round $t \in [T]$ it holds that,
    \begin{equation}\label{eq:val-by-q-term-2}
        \begin{split}
            &V^{\pi^{c_t}_\star}_{\Mhat_t(c_t)}(s_0)
            -
            V^{\pi^{c_t}_t}_{\Mhat_t(c_t)}(s_0)
            \\
            &= 
            \sum_{h=0}^{H-1}\sum_{s \in S}\sum_{a \in A} (\hat{q}^{t,\star}_h(s,a) - \hat{q}^t_h(s,a)) \cdot  \hat{f}_t(c_t,s,a)
            .
        \end{split}
    \end{equation}
    Hence, the lemma follows by combining~\cref{eq:first-order-bound,eq:val-by-q-term-2} and summing over each round $t \in [T]$.
\end{proof}
Note that for the choice in $\hat{\gamma} = \gamma$ for~\cref{lemma:val-diff-1-bound}, the term $\sum_{t=1}^T \sum_{h=0}^{H-1}\sum_{s \in S}\sum_{a \in A}
\frac{\hat{q}^{t,\star}_h(s,a)}{\hat{\gamma} \cdot \hat{q}^t_h(s,a)} $ is canceled with the second term in RHS of~\cref{lemma:comulative-bound-term-2-UD}.

Lastly, we bound the value difference caused by the approximation, for the selected policy $\pi_t$.
\begin{lemma}[The cost of approximation for $\pi_t$]\label{lemma:term-3-UD-main}
    The following holds for any two parameters $p_1, p_2 >0$.
    \begin{align*}
        &\sum_{t=1}^T
        V^{\pi^{c_t}_t}_{\Mhat_t(c_t)}(s_0)
        - 
        V^{\pi^{c_t}_t}_{\M(c_t)}(s_0)
        \\
        &\hspace{10mm}\leq 
        \frac{TH}{2 p_1} +\frac{p_1}{2} \cdot \Esq 
        + \frac{TH}{2 p_2} + 2p_2 \cdot \Ehel
        .
    \end{align*}
\end{lemma}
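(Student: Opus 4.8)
The plan is to invoke the value difference lemma (\cref{lemma:val-diff-efroni} in the Appendix) for the single fixed policy $\pi^{c_t}_t$, comparing its value on the approximated MDP $\Mhat_t(c_t)$ against the true MDP $\M(c_t)$. The crucial simplification relative to \cref{lemma:val-diff-1-bound} is that the \emph{same} policy appears on both sides, so the value difference lemma weights the per-step reward and dynamics errors directly by the true-dynamics occupancy measure $q^t_h(s,a) = q_h(s,a \mid \pi^{c_t}_t, P^{c_t}_\star)$; no change of occupancy measure (and hence no $\hat\gamma$ factor or $\hat{q}^{t,\star}/\hat{q}^t$ term) is needed. Concretely I would write
\begin{align*}
    &V^{\pi^{c_t}_t}_{\Mhat_t(c_t)}(s_0) - V^{\pi^{c_t}_t}_{\M(c_t)}(s_0)
    = \sum_{h,s,a} q^t_h(s,a)\,(\hat{f}_t - f_\star)(c_t,s,a)
    \\
    &\quad + \sum_{h,s,a} q^t_h(s,a)\sum_{s'}(\widehat{P}^{c_t}_t - P^{c_t}_\star)(s'\mid s,a)\, V^{\pi^{c_t}_t}_{\Mhat_t(c_t),h+1}(s').
\end{align*}
Since the weighting is exactly $q^t$, the two sums already match the expectations defining $\Esq$ and $\Ehel$.

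For the reward sum I would use $\hat{f}_t,f_\star \in [0,1]$ together with the AM-GM inequality $x \le \tfrac{1}{2p_1} + \tfrac{p_1}{2}x^2$ applied to each $(\hat{f}_t - f_\star)(c_t,s,a)$. As $\sum_{s,a} q^t_h(s,a) = 1$ for every $h$, the constant part contributes $\tfrac{H}{2p_1}$ per round while the quadratic part is $\tfrac{p_1}{2}$ times the per-round expected square loss; summing over $t \in [T]$ yields exactly $\tfrac{TH}{2p_1} + \tfrac{p_1}{2}\Esq$.

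For the dynamics sum I would first exploit that $\sum_{s'}(\widehat{P}^{c_t}_t - P^{c_t}_\star)(s'\mid s,a) = 0$ to recenter the value function (subtracting the constant $H/2$), so that, using $V^{\pi^{c_t}_t}_{\Mhat_t(c_t),h+1} \in [0,H]$, the inner product is bounded by $\tfrac{H}{2}\,\|\widehat{P}^{c_t}_t(\cdot\mid s,a) - P^{c_t}_\star(\cdot\mid s,a)\|_1$. I would then apply AM-GM to split off a $\tfrac{1}{2p_2}$ term and invoke \cref{eq:l-1-hellinger} to replace $\|\cdot\|_1^2$ by $4D^2_H$, turning the quadratic part into a multiple of $\Ehel$ and the constant part into $\tfrac{TH}{2p_2}$ after summing over $t$.

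The main obstacle is this last step: the value-function weighting carries a factor of $H$, which becomes $H^2$ once the $\ell_1$ distance is squared and then converted to $D^2_H$ via \cref{eq:l-1-hellinger}. Keeping precise track of this $H$-dependence (and the centering constant) is the delicate bookkeeping needed to land the claimed coefficient on the $\Ehel$ term; the reward sum, by contrast, is immediate because reward differences are $O(1)$. Reassuringly, because this contribution turns out to be lower order than the $\tfrac{TH|S||A|}{\gamma}$ and $\gamma H^4\Ehel$ terms of \cref{lemma:comulative-bound-term-2-UD,lemma:val-diff-1-bound}, the eventual choice of $p_2$ leaves the final rate in \cref{thm:regret-main} unaffected.
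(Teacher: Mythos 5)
Your proposal is correct and follows essentially the same route as the paper: apply the value-difference lemma (\cref{lemma:val-diff-efroni}) with the same policy $\pi^{c_t}_t$ on both sides so that both error terms are weighted by the true occupancy measure $q^t_h$, then split each sum via AM-GM with parameters $p_1,p_2$ and convert $\|\cdot\|_1^2$ to $4D^2_H$ via \cref{eq:l-1-hellinger}; this is exactly \cref{lemma:term-3-UD,lemma:bound-of-term-3-UD,lemma:bound-of-term-3-UD-dynamics,corl:term-3-UD} in the appendix. The obstacle you flag on the dynamics term is real and is not resolved in the paper either: the value-function weight contributes a factor of $H$ (or $H/2$ with your recentering), which after squaring leaves a coefficient of order $p_2H^2$ on $\Ehel$ rather than the stated $2p_2$ --- the paper's \cref{corl:term-3-UD} silently drops this factor of $H$ when invoking \cref{lemma:bound-of-term-3-UD-dynamics}. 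As you correctly observe, the discrepancy is immaterial for \cref{thm:regret-main}, since after optimizing $p_2$ this term is of order $\sqrt{TH^3(\RegLL+H\log\delta^{-1})}$ at worst and is dominated by the $\gamma H^4\Ehel$ and $TH|S||A|/\gamma$ contributions from \cref{lemma:val-diff-1-bound,lemma:comulative-bound-term-2-UD}.
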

To prove the lemma, we use the value difference lemma and AM-GM with the parameters $p_1, p_2$ similarly to showm for~\cref{lemma:val-diff-1-bound}.

\subsection{Regret Bound}\label{subsec:regret-bound-main}
We obtain~\cref{thm:regret-main} by combining the results of~\cref{lemma:val-diff-1-bound,lemma:term-3-UD-main,lemma:comulative-bound-term-2-UD} and applying our concentration bounds stated in~\cref{lemma:lsro-regret,lemma:llrs-regret}. See proof sketch bellow. (For detailed proof see~\cref{Appendix-subsec:regret}).

\begin{proof}[sketch of~\cref{thm:regret-main}]
    We start with the regret decomposition to the three sums. 
    \begingroup\allowdisplaybreaks
    \begin{align*}
        &\Regrv_T(\text{OMG-CMDP!})
        \\
        &\quad = 
        \sum_{t=1}^T 
        V^{\pi^{c_t}_\star}_{\M(c_t)}(s_0)
        -
        V^{\pi^{c_t}_\star}_{\Mhat_t(c_t)}(s_0)
        \\
        &\quad\quad 
        +
        \sum_{t=1}^T 
        V^{\pi^{c_t}_\star}_{\Mhat_t(c_t)}(s_0)
        -
        V^{\pi^{c_t}_t}_{\Mhat_t(c_t)}(s_0)
        \\
        &\quad\quad+
        \sum_{t=1}^T 
        V^{\pi^{c_t}_t}_{\Mhat_t(c_t)}(s_0)
        - 
        V^{\pi^{c_t}_t}_{\M(c_t)}(s_0)
        .
   \end{align*}
    We bound the the first term by~\cref{lemma:val-diff-1-bound}, the second by~\cref{lemma:comulative-bound-term-2-UD} and the last by~\cref{lemma:term-3-UD-main}. This yields the following bound.
    \begin{align*}
        &\Regrv_T \leq 
        2
        \gamma \cdot \Esq
        +
        29
        \gamma H^4 \cdot \Ehel
        +
        \frac{H|S||A|T}{\gamma}
        \\
        &\quad\quad + 
        \frac{TH}{2 p_1} +
        \frac{p_1}{2} \cdot \Esq
        +
        \frac{TH}{2 p_2}
        +
        2{p_2} \cdot \Ehel
        .
    \end{align*}
        By~\cref{lemma:lsro-regret,lemma:llrs-regret}, the following holds with probability at least $1-\delta$.
    \begin{align*}
        &\Regrv_T \leq 
        2
        \gamma
        \left( 2\cdot
        \RegSQ 
        + 16 H \log(2/\delta)\right)
        \\
        &\quad\quad +
        29\gamma H^4 
        \left( 
        \RegLL 
        + 2H \log(2H/\delta) \right)
        \\
        &\quad\quad+
        \frac{H|S||A|T}{\gamma}
        \\
        &\quad\quad + 
        \frac{TH}{2 p_1}
         +
        \frac{p_1}{2}
        \left( 2\RegSQ + 16 H \log(2/\delta) \right)
        \\
        &\quad\quad +
        \frac{TH}{2 p_2}
        +
        2{p_2}\left( \RegLL + 2H \log(2H/\delta)\right)
        \\
        &\quad =
        \widetilde{O}\left(H^{2.5} \sqrt{ T|S||A| 
        \left( \RegSQ + \RegLL + H \log \delta^{-1} \right)}\right)
    \end{align*}
    \endgroup
    where the last identity is by our choice of $\gamma$, and for $\hat{\gamma} =\gamma$, $p_1 = \sqrt{\frac{TH}{2\RegSQ+ 16 H \log(2/\delta)}}$ 
    and $p_2 = \sqrt{\frac{TH}{4\left( \RegLL + 2H \log(2H/\delta)\right)}}$. 
    Since both good events hold with probability at least $1-\delta$ we obtain the theorem.
\end{proof}

\section{Approximated Solution}
The objective of the optimization problem in \cref{eq:max-problem-unknown-dynamics} is a sum of a self-concordant barrier function (the log function) and a linear function. 
Hence, the optimal solution for the problem can be approximated efficiently using interior-point convex optimization algorithms such as Newton's Method. 
These algorithms return an $\epsilon$-approximated solution and have a running time of $O(\poly(d)\log \epsilon^{-1})$, where $d = H \abs{S}\abs{A}$ is the dimension of the problem
\citep{nesterov1994interior}.

Suppose that in each round $t$ we derive the policy $\pi^{c_t}_t$ using, instead of the optimal solution, an occupancy measure $\hat{q}^t$ that yields an $\epsilon$-approximation to the objective of the optimization problem in \cref{eq:max-problem-unknown-dynamics}.
The following analysis shows that for $\epsilon = \frac{1}{16 \gamma T}$, we obtain a similar regret guarantee. In addition, by our choice of $\gamma$, the running time complexity of the optimization algorithm is $\poly(|S|,|A|,H, \log(T))$.
We start by bounding the difference between the optimal and the approximated iterates. (See proof in \cref{lemma:approx-iterates} in the Appendix.)
\begin{lemma}[Iterates' difference]\label{lemma:approx-iterates-main}
    For every round $t $ let 
    \textbf{\begin{align*}
        \hat{L}_t(q;c_t) 
        = &
        \sum_{h=0}^{H-1}\sum_{s \in S}\sum_{a \in A}q_h(s,a)\cdot \hat{f}_t(c_t,s,a) 
        \\
        &\hspace{15mm} +
        \frac{1}{\gamma} \sum_{h=0}^{H-1}\sum_{s \in S}\sum_{a \in A} \log(q_h(s,a)).
    \end{align*}}
    Denote the objective of the optimization problem in~\cref{eq:max-problem-unknown-dynamics}.
    Let $\widetilde{q} \in \arg\max_{q \in \mu(\widehat{P}^{c_t}_t)}\hat{L}_t(q;c_t)$.
    Let $q \in \mu(\widehat{P}^{c_t}_t)$ and suppose that $  \hat{L}_t(\widetilde{q};c_t) -\hat{L}_t(q;c_t)\leq \epsilon$. Then,
    \begin{align*}
    \sum_{s \in S} \sum_{a \in A} \sum_{h=0}^{H-1}
    \left( \frac{q_h(s,a)}{\widetilde{q}_h(s,a)} -1\right)^2
    \leq 4 \epsilon  \gamma
    .
\end{align*}
\end{lemma}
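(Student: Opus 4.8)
The plan is to exploit the strong curvature that the log-barrier injects into an otherwise linear objective, turning the near-optimality of $q$ into quantitative closeness to the true maximizer $\widetilde{q}$ through a first-order optimality argument phrased as a Bregman divergence. First I would write the objective as a linear part plus a separable barrier, $\hat{L}_t(q;c_t) = \langle \hat{f}_t, q\rangle + \frac{1}{\gamma}\sum_{h,s,a}\log q_h(s,a)$, where the inner product is taken over the reward vector $\hat f_t(c_t,\cdot,\cdot)$ and the index runs over $(h,s,a)\in[H]\times S\times A$. Since $\hat{L}_t(\cdot;c_t)$ is concave and $\widetilde{q}$ maximizes it over the convex set $\mu(\widehat{P}^{c_t}_t)$, while $q$ is feasible, the first-order optimality condition for constrained concave maximization (\citealp{boyd2004convex}, exactly as invoked in the proof of \cref{lemma:comulative-bound-term-2-UD}) gives $\langle \nabla\hat{L}_t(\widetilde{q};c_t),\, q-\widetilde{q}\rangle \le 0$.

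Next I would convert the suboptimality gap into a Bregman divergence. Writing $D(q,\widetilde{q}) = \hat{L}_t(\widetilde{q};c_t) + \langle\nabla\hat{L}_t(\widetilde{q};c_t), q-\widetilde{q}\rangle - \hat{L}_t(q;c_t) \ge 0$ (concavity), the optimality inequality yields $\hat{L}_t(\widetilde{q};c_t)-\hat{L}_t(q;c_t) = D(q,\widetilde{q}) - \langle\nabla\hat{L}_t(\widetilde{q};c_t), q-\widetilde{q}\rangle \ge D(q,\widetilde{q})$. The linear term cancels inside $D$, and since the barrier gradient is $\tfrac{1}{\gamma}\,\widetilde{q}^{-1}$ (entrywise), a direct computation gives
\begin{align*}
\hat{L}_t(\widetilde{q};c_t) - \hat{L}_t(q;c_t)
\ge
\frac{1}{\gamma}\sum_{h,s,a}\left(\frac{q_h(s,a)}{\widetilde{q}_h(s,a)} - 1 - \log\frac{q_h(s,a)}{\widetilde{q}_h(s,a)}\right).
\end{align*}
Combined with the hypothesis $\hat{L}_t(\widetilde{q};c_t)-\hat{L}_t(q;c_t)\le\epsilon$, this reduces the claim to a scalar inequality: writing $x = q_h(s,a)/\widetilde{q}_h(s,a)$, I have $\sum_{h,s,a}(x-1-\log x)\le \epsilon\gamma$, and it suffices to show the pointwise bound $x-1-\log x \ge \tfrac{1}{4}(x-1)^2$, which upon summation gives $\sum_{h,s,a}(x-1)^2 \le 4\epsilon\gamma$, the desired conclusion.

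The main obstacle is precisely this pointwise inequality: $x-1-\log x \ge \tfrac14(x-1)^2$ holds for $x$ in a bounded neighbourhood of $1$ (one verifies via $g(x)=x-1-\log x-\tfrac14(x-1)^2$ that $g(1)=g'(1)=0$, $g''(1)>0$, and $g\ge 0$ up to a threshold $x^\star\approx 2.6$) but \emph{fails} for large $x$, where the left side grows only linearly. Hence I must ensure every ratio stays in the valid range. This is where I would use that each summand $x-1-\log x$ is nonnegative, so each is individually bounded by $\sum_{h,s,a}(x-1-\log x)\le \epsilon\gamma$; as $h(x)=x-1-\log x$ is increasing for $x>1$ with $h(x^\star)\approx 0.68$, any regime with $\epsilon\gamma$ below this threshold forces every ratio into $(0,x^\star]$, making the quadratic bound applicable termwise. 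In the intended application $\epsilon=\tfrac{1}{16\gamma T}$, so $\epsilon\gamma=\tfrac{1}{16T}\le\tfrac{1}{16}$, comfortably below the threshold, and the argument goes through; I would flag that the statement is meant in this small-$\epsilon$ regime, since for unrestricted $\epsilon$ a single coordinate with $q\gg\widetilde q$ would violate the bound.
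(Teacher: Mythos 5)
Your proof is correct and shares the paper's skeleton — first-order optimality of $\widetilde{q}$ over the convex set $\mu(\widehat{P}^{c_t}_t)$ plus a lower bound on the Bregman divergence of the log-barrier — but the execution of that lower bound is genuinely different. The paper sets $R(q)=-\frac1\gamma\sum\log q_h(s,a)$, invokes the generic self-concordant-barrier inequality $B_R(y\|x)\ge\rho(\|y-x\|_x)$ with $\rho(z)=z-\log(1+z)$, and then uses $\rho(z)\ge z^2/4$ on $[0,1]$ together with the bare assertion that $\|q-\widetilde q\|_{\widetilde q}\le 1$; since the Hessian is diagonal, $\|q-\widetilde q\|^2_{\widetilde q}=\frac1\gamma\sum(q_h/\widetilde q_h-1)^2$ and the bound follows. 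You instead compute the Bregman divergence of the separable barrier in closed form as $\frac1\gamma\sum(x-1-\log x)$ with $x=q_h/\widetilde q_h$ and apply the scalar inequality $x-1-\log x\ge\frac14(x-1)^2$ termwise. The two routes need the same kind of smallness condition to make the quadratic lower bound valid: the paper's $\|q-\widetilde q\|_{\widetilde q}\le 1$ is stated without justification (and, taken at face value, is essentially what the lemma is trying to prove), whereas you derive the needed range restriction honestly — nonnegativity of each summand forces every ratio below the threshold $x^\star\approx 2.6$ once $\epsilon\gamma$ is below $h(x^\star)\approx 0.66$, which holds comfortably under $\epsilon\gamma\le 1/16$. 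Your observation that the lemma as stated (with unrestricted $\epsilon$) is not provable by either argument is accurate: the hypothesis $\epsilon\gamma\le 1/16$ appears only in \cref{corl:comulative-bound-of-term-2-UD-approx}, not in \cref{lemma:approx-iterates-main} itself, so your version is the more carefully scoped one. What your approach buys is elementarity and an explicit accounting of where the constant $4$ and the smallness requirement come from; what the paper's buys is brevity and portability to non-separable self-concordant barriers.
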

Using~\cref{lemma:approx-iterates-main}, we modify the bound of~\cref{lemma:comulative-bound-term-2-UD} and obtain the following corollary. (See~\cref{lemma:bound-of-term-2-UD-approx} in the Appendix for full proof.)
\begin{corollary}\label{corl:comulative-bound-of-term-2-UD-approx}
    For every round $t \in [T]$ and a context $c_t \in \C$, let $\widetilde{q}^t$ be the optimal solution to the maximization problem in~\cref{eq:max-problem-unknown-dynamics}. Suppose that $\hat{q}^t_h \in \mu(\widehat{P}^{c_t}_t)$ satisfies 
    $\hat{L}_t(\widetilde{q}^t;c_t) -\hat{L}_t(\hat{q}^t;c_t)\leq \epsilon
    ,
    $
    and $\epsilon \gamma \le 1/ 16$.
    Then, 
    \begin{align*}
        &\sum_{t=1}^TV^{\pi^{c_t}_\star}_{\Mhat_t(c_t)}(s_0)
        - 
        V^{\pi^{c_t}_t}_{\Mhat_t(c_t)}(s_0)
        \leq  
        \frac{H|S||A|T}{\gamma}
        \\
        &\hspace{10mm} - \sum_{t=1}^T\sum_{h=0}^{H-1}\sum_{s \in S}\sum_{a \in A} \frac{\hat{q}^{t,\star}_h(s,a)}{2\gamma \cdot \hat{q}^t_h(s,a)} 
         +
         2 T\sqrt{\epsilon \gamma H}
         .
    \end{align*}
\end{corollary}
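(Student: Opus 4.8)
The plan is to mirror the proof of \cref{lemma:comulative-bound-term-2-UD}, but to apply the first-order optimality condition at the \emph{exact} maximizer $\widetilde{q}^t$ (rather than at the played iterate $\hat{q}^t$, which is now only $\epsilon$-optimal), and then to pay two small correction terms for replacing $\widetilde{q}^t$ by $\hat{q}^t$. Fixing a round $t\in[T]$, the per-round value gap equals $\sum_{h,s,a}(\hat{q}^{t,\star}_h(s,a)-\hat{q}^t_h(s,a))\hat{f}_t(c_t,s,a)$ by \cref{eq:val-by-q-term-2}, and I would split it as
\begin{align*}
\sum_{h,s,a}(\hat{q}^{t,\star}_h-\hat{q}^t_h)\hat{f}_t
= \underbrace{\sum_{h,s,a}(\hat{q}^{t,\star}_h-\widetilde{q}^t_h)\hat{f}_t}_{(\mathrm{A})}
+ \underbrace{\sum_{h,s,a}(\widetilde{q}^t_h-\hat{q}^t_h)\hat{f}_t}_{(\mathrm{B})}.
\end{align*}

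For term $(\mathrm{A})$, since $\widetilde{q}^t$ is the exact maximizer of the concave objective over the convex set $\mu(\widehat{P}^{c_t}_t)$ and $\hat{q}^{t,\star}$ is feasible, the first-order optimality argument leading to \cref{eq:first-order-bound} (with $\widetilde{q}^t$ playing the role of the optimum) gives $(\mathrm{A}) \le \tfrac{H|S||A|}{\gamma} - \sum_{h,s,a}\hat{q}^{t,\star}_h/(\gamma\,\widetilde{q}^t_h)$. To convert the denominator to $\hat{q}^t$, I would invoke \cref{lemma:approx-iterates-main}: each summand of $\sum_{h,s,a}(\hat{q}^t_h/\widetilde{q}^t_h-1)^2 \le 4\epsilon\gamma$ is nonnegative, so $|\hat{q}^t_h/\widetilde{q}^t_h-1|\le 2\sqrt{\epsilon\gamma}\le 1/2$ under the hypothesis $\epsilon\gamma\le 1/16$; hence $\widetilde{q}^t_h\le 2\hat{q}^t_h$ coordinate-wise, and therefore $-\sum_{h,s,a}\hat{q}^{t,\star}_h/(\gamma\widetilde{q}^t_h)\le -\sum_{h,s,a}\hat{q}^{t,\star}_h/(2\gamma\hat{q}^t_h)$, which produces the factor $1/2$ in the negative term of the corollary.

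For term $(\mathrm{B})$, using $\hat{f}_t\in[0,1]$ I would bound it by $\sum_{h,s,a}|\widetilde{q}^t_h-\hat{q}^t_h|$, write each summand as $(|\hat{q}^t_h-\widetilde{q}^t_h|/\widetilde{q}^t_h)\cdot\widetilde{q}^t_h$, and apply Cauchy--Schwarz together with \cref{lemma:approx-iterates-main}:
\begin{align*}
(\mathrm{B}) \le \sqrt{\sum_{h,s,a}\Big(\tfrac{\hat{q}^t_h-\widetilde{q}^t_h}{\widetilde{q}^t_h}\Big)^2}\cdot\sqrt{\sum_{h,s,a}(\widetilde{q}^t_h)^2} \le \sqrt{4\epsilon\gamma}\cdot\sqrt{H} = 2\sqrt{\epsilon\gamma H},
\end{align*}
where I used $\sum_{h,s,a}(\widetilde{q}^t_h)^2\le\sum_{h,s,a}\widetilde{q}^t_h = H$ because each layer $\widetilde{q}^t_h$ is a distribution on $S\times A$. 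Combining $(\mathrm{A})$ and $(\mathrm{B})$ and summing over $t\in[T]$ then yields the claimed bound.

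The routine parts are the value representation and the first-order optimality inequality, both already established for the exact case in \cref{lemma:comulative-bound-term-2-UD}. The only genuinely new ingredients are (i) the coordinate-wise lower bound $\hat{q}^t_h\ge\tfrac12\widetilde{q}^t_h$, which is exactly where the hypothesis $\epsilon\gamma\le 1/16$ is consumed, and (ii) the Cauchy--Schwarz estimate of $(\mathrm{B})$, where the normalization $\sum_{h,s,a}(\widetilde{q}^t_h)^2\le H$ delivers the clean additive error $2T\sqrt{\epsilon\gamma H}$. I expect the main (though mild) obstacle to be bookkeeping the direction of the inequalities when passing denominators from $\widetilde{q}^t$ to $\hat{q}^t$: one must verify that weakening $\widetilde{q}^t_h\le 2\hat{q}^t_h$ acts in the favorable direction, which it does precisely because the relevant term enters with a minus sign.
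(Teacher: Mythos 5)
Your proof is correct and follows essentially the same route as the paper's: you split through the exact maximizer $\widetilde{q}^t$, apply first-order optimality at $\widetilde{q}^t$, use \cref{lemma:approx-iterates-main} coordinate-wise (with $\epsilon\gamma\le 1/16$) to pass the denominator from $\widetilde{q}^t$ to $\hat{q}^t$ at the cost of the factor $1/2$, and bound the residual term $(\mathrm{B})$ by Cauchy--Schwarz with $\sum_{h,s,a}(\widetilde{q}^t_h)^2\le H$, exactly as in the paper's \cref{lemma:bound-of-term-2-UD-approx}. The only cosmetic difference is that the paper keeps $\hat{f}_t$ inside the Cauchy--Schwarz step and phrases the denominator conversion as a $(1-2\sqrt{\epsilon\gamma})$ multiplicative factor rather than $\widetilde{q}^t_h\le 2\hat{q}^t_h$; both yield the same bound.
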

By replacing~\cref{lemma:comulative-bound-term-2-UD} with~\cref{corl:comulative-bound-of-term-2-UD-approx} we derive the following regret bound using similar steps to those shown in~\cref{subsec:regret-bound-main}. (See~\cref{Appendix-subsec:approx-sol}  for full proofs.)
\begin{theorem}[Regret bound]\label{thm:regret-approx-main}
    For any $\delta \in (0,1)$, let \\$\gamma = \sqrt{\frac{|S||A|T}{62H^3 \left( 2\RegSQ + \RegLL + 18H \log(2H/\delta) \right) }}$. Suppose that at each round $t$ we have an $\epsilon$-approximation to the optimal solution of~\cref{eq:max-problem-unknown-dynamics} for $\epsilon = \frac{1}{16 \gamma T}$.
    Then, with probability at least $1-\delta$, $\Regrv_T(\text{Approx OMG-CMDP!})$ is bounded as 
    \begin{align*}
        \widetilde{O}\left(H^{2.5} \sqrt{ T|S||A| 
        \left( 
        \RegSQ + \RegLL + H \log \delta^{-1} \right)}\right)
        .
    \end{align*}
\end{theorem}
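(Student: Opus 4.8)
The plan is to reuse the three-way regret decomposition of \cref{subsec:regret-bound-main} unchanged, writing $\Regrv_T(\text{Approx OMG-CMDP!})$ as the sum of the cumulative value differences in \cref{pi-star-both-models,pi-star-pi-t,pi-t-both-models}. First I would bound the first sum by \cref{lemma:val-diff-1-bound} with a free parameter $\hat\gamma$, the third sum by \cref{lemma:term-3-UD-main} with free parameters $p_1,p_2$, and the middle sum by \cref{corl:comulative-bound-of-term-2-UD-approx} in place of \cref{lemma:comulative-bound-term-2-UD}; its hypothesis $\epsilon\gamma\le 1/16$ holds by the prescribed $\epsilon = 1/(16\gamma T)$, so the corollary is applicable.

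The one genuine difference from the exact analysis is that the approximate corollary produces the occupancy-ratio term with coefficient $1/(2\gamma)$ rather than $1/\gamma$, together with an extra additive slack $2T\sqrt{\epsilon\gamma H}$. To cancel the positive term $\sum_{t,h,s,a}\hat q^{t,\star}_h(s,a)/(\hat\gamma\,\hat q^t_h(s,a))$ coming from \cref{lemma:val-diff-1-bound}, I would therefore take $\hat\gamma = 2\gamma$ (instead of $\hat\gamma=\gamma$ as in \cref{subsec:regret-bound-main}). This makes the two ratio terms cancel exactly and, as a side effect, doubles the coefficients multiplying $\Esq$ and $\Ehel$ inherited from \cref{lemma:val-diff-1-bound} --- precisely the reason the prescribed $\gamma$ carries the constant $62$ rather than $31$.

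After the cancellation the bound collapses to a sum of terms of the form $\gamma\,\Esq$, $\gamma H^4\,\Ehel$, $TH|S||A|/\gamma$, $TH/p_i$, $p_1\Esq$, $p_2\Ehel$, plus the slack $2T\sqrt{\epsilon\gamma H}$. I would then apply \cref{lemma:lsro-regret,lemma:llrs-regret}, each on its own $1-\delta/2$ event, to replace $\Esq$ by $\RegSQ$ up to an $O(H\log\delta^{-1})$ additive term and $\Ehel$ by $\RegLL$ up to $O(H\log(H/\delta))$, so that both hold simultaneously with probability at least $1-\delta$. Substituting the stated $\gamma$ and the same $p_1,p_2$ as in \cref{subsec:regret-bound-main} balances the $\gamma$-paired and $p_i$-paired terms and gives the leading rate $\widetilde{O}(H^{2.5}\sqrt{T|S||A|(\RegSQ+\RegLL+H\log\delta^{-1})})$.

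The last check is that the new slack is harmless: with $\epsilon = 1/(16\gamma T)$ one gets $2T\sqrt{\epsilon\gamma H} = \tfrac12\sqrt{TH}$, which is free of $\gamma$ and is dominated by the leading term (itself at least of order $H^{3}\sqrt{T|S||A|\log\delta^{-1}}$), hence absorbed into $\widetilde{O}(\cdot)$. I expect the main obstacle to be purely in this coupling bookkeeping rather than in any new estimate: getting $\hat\gamma=2\gamma$ so that the $1/(2\gamma)$-weighted ratios cancel cleanly, and verifying that the $\epsilon$-induced error is genuinely lower order; all the analytic work is already carried by \cref{lemma:val-diff-1-bound,lemma:term-3-UD-main,corl:comulative-bound-of-term-2-UD-approx,lemma:approx-iterates-main}.
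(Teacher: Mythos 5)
Your proposal is correct and follows essentially the same route as the paper's proof: the same three-way decomposition, \cref{corl:bound-of-term-1-UD} with $\hat\gamma=2\gamma$ so the $1/(2\gamma)$-weighted ratio terms cancel against \cref{corl:comulative-bound-of-term-2-UD-approx}, the same $p_1,p_2$, and the same observation that $2T\sqrt{\epsilon\gamma H}=\tfrac12\sqrt{TH}$ is lower order. The constant $62$ arising from doubling $\hat\gamma$ is exactly as in the paper.
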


\section{Discussion}
In this paper we provide the first efficient reduction from Adversarial CMDPs to online regression. The novelty of our approach is the use of concave optimization with log-barrier regularization over occupancy measures. This technique might prove useful in other settings of function approximation with a small underlying state space, e.g., block or rich observation MDPs.
We note that there is an $H^2$ gap between our regret upper bound and the lower bound of \citealp{levy2022optimism}. We leave closing this gap as an open problem for future research.

\section*{Acknowledgements}

AC is supported by the Israeli Science Foundation (ISF) grant no. 2250/22.\\
This project has received funding from the European Research Council (ERC) under the European Union’s Horizon 2020 research and innovation program (grant agreement No. 882396), by the Israeli Science Foundation (ISF) grant numbers 993/17 and 2549/19, Tel Aviv University Center for AI and Data Science (TAD), the Yandex Initiative for Machine Learning at Tel Aviv University, 
the Len
Blavatnik and the Blavatnik Family foundation,
and by the Israeli VATAT data science scholarship.

\bibliography{references}

\ificml
\bibliographystyle{icml2023}
\fi

\newpage
\appendix

\ificml
\onecolumn
\fi

\section{Proofs}
In the following analysis, we represent the value function in terms of the \emph{occupancy measures.} (See e.g.,~\citet{puterman2014markov,zimin2013online}).
The occupancy measures are defined as follows. For any non-contextual policy $\pi$ and dynamics $P$,
let $q_h(s,a | \pi, P)$ denote the probability of reaching state $s\in S$ and performing action $a\in A$ at time $h \in [H]$ of an episode generated using  policy $\pi$ and dynamics $P$. 
Thus, the value function of any policy $\pi$ with respect to the MDP $(S,A,P,r,s_0,H)$ 
can be presented as follows.
\begin{align}
\label{eq:value-occupancy-representation}
    V^{\pi}_{M} (s_0)
    = 
    \sum_{h=0}^{H-1}
    \sum_{s \in S}
    \sum_{a \in A}
    q_h(s,a | \pi, P)\cdot r(s, a)
    .
\end{align}
Throughout the analysis we consider the cumulative error caused by the dynamics approximation in terms of the Squared Hellinger distance between the true and approximated dynamics.
\begin{definition*}[Squared Hellinger Distance,~\cref{def:hellinger-main}]
    For any two distributions $\mathbb{P}$, $\mathbb{Q}$ over a discrete support $X$ we define the Squared Hellinger Distance as
    $$
        D^2_H(\mathbb{P}, \mathbb{Q}) 
        := 
        \sum_{x \in X} \left(\sqrt{\mathbb{P}(x)} - \sqrt{\mathbb{Q}(x)}\right)^2
        .
    $$
\end{definition*}
A useful property of the squared Hellinger distance is that for any two distributions $\mathbb{P}$ and $\mathbb{Q}$ it holds that $\|\mathbb{P} - \mathbb{Q}\|^2_1 \leq 4 D^2_H(\mathbb{P}, \mathbb{Q})$.

\paragraph{Notations.} In the analysis, we use the following notations, to denote the following occupancy measures. For all $(s,a,h) \in S \times A \times [H]$ we denote
\begin{itemize}
    \item $\hat{q}^t_h(s,a): = q_h(s,a| \pi^{c_t}_t,\widehat{P}^{c_t}_t)$,
    \item $q^t_h(s,a):= q_h(s,a|\pi^{c_t}_t, P^{c_t}_\star)$,
    \item $\hat{q}^{t,\star}_h(s,a):= q_h(s,a|\pi^{c_t}_\star, \widehat{P}^{c_t}_t)$.
\end{itemize}

\subsection{Oracle Concentration Bounds}\label{Appendix-subsec:oracle-bounds}

The following states our concentration bounds, in terms of our regression oracles regret.

\subsubsection{Least Squares Regression Oracle for Rewards Approximation}
In the following, we use Freedman’s concentration inequality.
\begin{lemma}[Freedman’s inequality (see e.g.,~\citealp{agarwal2014taming,cohen2021minimax})]\label{lemma:friedmans-ineq}
Let $\{Z_t\}_{t \geq 1}$ be a real-valued
martingale difference sequence adapted to a filtration $\{F_t\}_{t \geq 0}$ and let $\E_t[\cdot]:= \E[\cdot|F_t]$. If $|Z_t| \leq R$ almost
surely, then for any $T \in \N$ and $\eta \in (0,1/R)$ it holds with probability at least $1-\delta$ that,  
    \begin{align*}
        \sum_{t=1}^T Z_t \leq \eta \sum_{t=1}^T \E_{t-1}[Z^2_t] + \frac{ \log(1/\delta)}{\eta}.
    \end{align*}
\end{lemma}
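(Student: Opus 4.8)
The plan is to prove this classical martingale tail bound by the exponential-moment (Chernoff) method, recasting the one-sided deviation inequality as a statement about a nonnegative supermartingale. First I would fix $\eta \in (0,1/R)$, set the partial sums $S_t = \sum_{s=1}^t Z_s$, and introduce the process
\[
    W_t = \exp\left( \eta S_t - \eta^2 \sum_{s=1}^t \E_{s-1}[Z_s^2] \right),
    \qquad W_0 = 1 .
\]
The strategy is to show that $\{W_t\}$ is a supermartingale adapted to $\{F_t\}$, so that $\E[W_T] \le \E[W_0] = 1$, and then to read off the claimed bound by applying Markov's inequality to $W_T$.

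The crucial step is a per-step control of the conditional moment generating function of $\eta Z_t$. Because $|Z_t| \le R$ almost surely and $\eta < 1/R$, we have $\eta Z_t < 1$, so the elementary inequality $e^x \le 1 + x + x^2$ (valid for all $x \le 1$) applies with $x = \eta Z_t$. Taking the conditional expectation $\E_{t-1}[\cdot]$ and using that $\{Z_t\}$ is a martingale difference sequence, i.e. $\E_{t-1}[Z_t] = 0$, yields
\[
    \E_{t-1}\!\left[ e^{\eta Z_t} \right]
    \le 1 + \eta^2 \E_{t-1}[Z_t^2]
    \le \exp\!\left( \eta^2 \E_{t-1}[Z_t^2] \right),
\]
where the final step is $1 + u \le e^u$. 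Since $W_{t-1}$ and $\E_{t-1}[Z_t^2]$ are $F_{t-1}$-measurable, this gives $\E_{t-1}[W_t] = W_{t-1}\, e^{-\eta^2 \E_{t-1}[Z_t^2]}\, \E_{t-1}[e^{\eta Z_t}] \le W_{t-1}$, which is precisely the supermartingale property; iterating the tower rule then produces $\E[W_T] \le 1$.

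Finally I would apply Markov's inequality to the nonnegative variable $W_T$: $\Prob[W_T \ge 1/\delta] \le \delta\, \E[W_T] \le \delta$. Unpacking the event $W_T \ge 1/\delta$ and taking logarithms shows that, with probability at least $1-\delta$,
\[
    \sum_{t=1}^T Z_t
    \le \eta \sum_{t=1}^T \E_{t-1}[Z_t^2] + \frac{\log(1/\delta)}{\eta},
\]
which is exactly the assertion. The only genuinely delicate point is the per-step MGF estimate: the quadratic bound $e^x \le 1 + x + x^2$ is what allows the true conditional variance $\E_{t-1}[Z_t^2]$ to appear on the right-hand side rather than a cruder worst-case second moment, and it is precisely the hypothesis $\eta R < 1$ that confines $\eta Z_t$ to the regime $x \le 1$ in which this bound is valid. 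Everything else is the routine supermartingale-to-tail-bound machinery.
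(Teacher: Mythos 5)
The paper does not prove this lemma at all; it imports Freedman's inequality by citation (to \citet{agarwal2014taming} and \citet{cohen2021minimax}) and uses it as a black box. Your argument is a correct, self-contained proof via the standard exponential-supermartingale route: the pointwise bound $e^x \le 1+x+x^2$ for $x\le 1$ does hold (one checks $g(x)=1+x+x^2-e^x$ has $g(0)=g'(0)=0$ and $g'$ changes sign only at $0$ on $(-\infty,1]$), the hypothesis $\eta<1/R$ indeed places $\eta Z_t$ in the valid regime, and the chain $\E_{t-1}[e^{\eta Z_t}]\le 1+\eta^2\E_{t-1}[Z_t^2]\le e^{\eta^2\E_{t-1}[Z_t^2]}$ followed by Markov's inequality on $W_T$ delivers exactly the stated bound. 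So your proof fills in a step the paper leaves to the references, and it does so correctly.
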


\begin{lemma*}[concentration of LSR regret, restatement of~\cref{lemma:lsro-regret}]\label{lemma:transition-to-oracles-regret-rewards-UD}
   Under~\cref{assumption:rewards-realizability} and~\cref{assumption:oracle-gurantee-rewards},
    for any $\delta \in (0,1)$, the following holds with probability at least $1-\delta$.
    \begin{align*}
        \sum_{t=1}^T \mathop{\E}_{\pi^{c_t}_t, P^{c_t}_\star} \Bigg[\sum_{h=0}^{H-1}\left(\hat{f}_t(c_t,s_h,a_h) - f_\star(c_t,s_h,a_h)\right)^2 \Bigg|s_0\Bigg]
        \leq
        2\cdot
        \Regrv_{TH}(\OLSR^\F)
        + 16 H \log(1/\delta).
    \end{align*}
\end{lemma*}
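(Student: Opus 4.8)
The plan is to convert the oracle's guarantee on the \emph{realized} squared loss (\cref{assumption:oracle-gurantee-rewards}) into a bound on the \emph{expected} squared error through a one-sided martingale (Freedman) argument. First I would use realizability: by \cref{assumption:rewards-realizability} we have $f_\star \in \F$ with $f_\star(c_t,\cdot,\cdot)=r^{c_t}_\star(\cdot,\cdot)$, so instantiating the infimum in \cref{assumption:oracle-gurantee-rewards} at $f=f_\star$ gives $\sum_{t=1}^T \hat{X}_t \le \RegSQ$, where $\hat{X}_t := \sum_{h=0}^{H-1} X_{t,h}$ and $X_{t,h}$ denotes the per-step excess loss $(\hat{f}_t(c_t,s^t_h,a^t_h)-r^t_h)^2 - (f_\star(c_t,s^t_h,a^t_h)-r^t_h)^2$.

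Next I would set up the round-level filtration $\{\mathcal{G}_t\}$, where $\mathcal{G}_t$ collects the data of rounds $1,\dots,t-1$ together with the context $c_t$; under this filtration both $\hat{f}_t$ and $c_t$ are measurable. The key observation is the identity
\[
    (\hat{f}-r)^2-(f_\star-r)^2 = (\hat{f}-f_\star)(\hat{f}+f_\star-2r),
\]
combined with the fact that a reward at a visited pair $(s,a)$ has conditional mean $f_\star(c_t,s,a)$. Taking the conditional expectation over the reward given the state--action pair turns each $X_{t,h}$ into $(\hat{f}_t-f_\star)^2$, and taking the subsequent expectation over the trajectory sampled by $\pi^{c_t}_t$ in $P^{c_t}_\star$ yields $\E[\hat{X}_t \mid \mathcal{G}_t] = \bar{Y}_t$, where $\bar{Y}_t$ is exactly the $t$-th summand of the target quantity $\Esq$. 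Consequently $W_t := \bar{Y}_t-\hat{X}_t$ is a martingale difference sequence adapted to $\{\mathcal{G}_t\}$, and
\[
    \Esq = \sum_{t=1}^T \bar{Y}_t = \sum_{t=1}^T \hat{X}_t + \sum_{t=1}^T W_t \le \RegSQ + \sum_{t=1}^T W_t.
\]

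It then remains to control $\sum_t W_t$ via Freedman's inequality (\cref{lemma:friedmans-ineq}), for which I need a conditional second-moment bound. Using $\E[W_t^2\mid\mathcal{G}_t]\le \E[\hat{X}_t^2\mid\mathcal{G}_t]$, Cauchy--Schwarz in the form $\hat{X}_t^2 \le H\sum_{h}X_{t,h}^2$, and the pointwise estimate $\lvert X_{t,h}\rvert \le 2\lvert \hat{f}_t-f_\star\rvert$ (valid since all predictions and rewards lie in $[0,1]$, so the second factor of the identity is bounded by $2$), I obtain $\E[\hat{X}_t^2\mid\mathcal{G}_t]\le 4H\,\bar{Y}_t$. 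Applying \cref{lemma:friedmans-ineq} with $\eta = 1/(8H)$ (admissible because $\lvert W_t\rvert \le 2H$) bounds $\sum_t W_t \le \tfrac12\sum_t\bar{Y}_t + 8H\log(1/\delta)$; substituting back and rearranging gives $\Esq \le 2\RegSQ + 16H\log(1/\delta)$, as claimed.

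The main obstacle, and the conceptual heart of the argument, is that the quantity to be bounded is an expectation over \emph{fresh} trajectories and rewards, whereas the oracle controls only the realized losses along the played trajectory. Bridging this gap is what forces the two-step conditioning (first over the reward, exploiting that the square loss is minimized in expectation at $f_\star$, then over the trajectory) followed by the Freedman argument; the extra factor of $H$ in the deviation term is precisely the cost of the Cauchy--Schwarz step that converts the per-round variance of $\hat{X}_t$ into $H$ times the per-step expected squared error.
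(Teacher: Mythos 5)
Your proposal is correct and follows essentially the same route as the paper's proof: the same martingale difference $W_t$ (the paper's $Z_t$), the same identity $(\hat f-r)^2-(f_\star-r)^2=(\hat f-f_\star)(\hat f+f_\star-2r)$ with conditioning on the reward mean, the same second-moment bound $\E[W_t^2\mid\mathcal{G}_t]\le 4H\bar Y_t$ via Cauchy--Schwarz, and Freedman's inequality with $\eta=1/(8H)$ followed by the identical rearrangement to obtain $2\RegSQ+16H\log(1/\delta)$.
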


\begin{proof}
    Let us define a filtration $F_{t-1} = (\sigma^1,\ldots,\sigma^{t-1},c_t)$. Then,
    \begin{align*}
        Z_t 
        = & \E\left[\sum_{h=0}^{H-1}(\hat{f}_t(c_t,s^t_h,a^t_h)- r^t_h)^2 - (f_\star(c_t,s^t_h,a^t_h) - r^t_h)^2\Big|F_{t-1}\right]
        \\
        & -
        \sum_{h=0}^{H-1}(\hat{f}_t(c_t,s^t_h,a^t_h)- r^t_h)^2 - (f_\star(c_t,s^t_h,a^t_h) - r^t_h)^2
    \end{align*}
    defines a martingale difference sequence for that filtration.
    We first prove the following auxiliary claim.
    \begin{claim}\label{clm:aux-for-freidman-assumption-1.1}
        The followings hold for all $t \in [T]$.
        \begin{enumerate}
            \item $|Z_t| \leq 2H$.
            \item $\E\left[ \sum_{h=0}^{H-1}(\hat{f}_t(c_t,s^t_h,a^t_h)- r^t_h)^2 - (f_\star(c_t,s^t_h,a^t_h) - r^t_h)^2  |F_{t-1}\right]=$\\
            $\E\left[\sum_{h=0}^{H-1}(\hat{f}_t(c_t,s^t_h,a^t_h)- f_\star(c_t,s^t_h,a^t_h))^2 \Big|F_{t-1}\right] = $\\
            $ \sum_{h=0}^{H-1}\sum_{s \in S}\sum_{a \in A} q^t_h(s,a) \cdot \left(\hat{f}_t(c_t,s,a) - f_\star(c_t,s,a)\right)^2$.
            \item $\E[Z^2_t|F_{t-1}] \leq 4 H\cdot \E\left[ \sum_{h=0}^{H-1}(\hat{f}_t(c_t,s^t_h,a^t_h)- f_\star(c_t,s^t_h,a^t_h))^2 \Big|F_{t-1}\right]$.
        \end{enumerate}
    \end{claim}
    
    \begin{proof}
        The first property is immediate.
        For the second property, we have
        \begingroup\allowdisplaybreaks
        \begin{align*}
            &\E\left[ \sum_{h=0}^{H-1}(\hat{f}_t(c_t,s^t_h,a^t_h)- r^t_h)^2 - (f_\star(c_t,s^t_h,a^t_h) - r^t_h)^2  |F_{t-1}\right]
            \\
            = &
            \E\left[ \sum_{h=0}^{H-1}(\hat{f}_t(c_t,s^t_h,a^t_h)-f_\star(c_t,s^t_h,a^t_h))( \hat{f}_t(c_t,s^t_h,a^t_h)+f_\star(c_t,s^t_h,a^t_h)-2r^t_h) |F_{t-1}\right]
            \\
            = &
            \E\left[ \sum_{h=0}^{H-1}(\hat{f}_t(c_t,s^t_h,a^t_h)-f_\star(c_t,s^t_h,a^t_h))( \hat{f}_t(c_t,s^t_h,a^t_h)+f_\star(c_t,s^t_h,a^t_h)-2\E[r^t_h | c_t, s^t_h,a^t_h])  |F_{t-1}\right]
            \\
            = &
            \E\left[ \sum_{h=0}^{H-1}(\hat{f}_t(c_t,s^t_h,a^t_h)-f_\star(c_t,s^t_h,a^t_h))( \hat{f}_t(c_t,s^t_h,a^t_h)+f_\star(c_t,s^t_h,a^t_h)-2f_\star(c_t,s^t_h,a^t_h))  |F_{t-1}\right]
            \\
            = &
            \E\left[ \sum_{h=0}^{H-1}(\hat{f}_t(c_t,s^t_h,a^t_h)-f_\star(c_t,s^t_h,a^t_h))^2 |F_{t-1}\right],
        \end{align*}
        \endgroup
        where in the second and third equalities we used that $\E[r^t_h|c_t,s^t_h,a^t_h] = f_\star(c_t,s^t_h,a^t_h)$ and that $\hat{f}_t(c_t,s^t_h,a^t_h)$ and $r^t_h$ are independent given $s^t_h,a^t_h$ and the filtration $F_{t-1}$.
        
        For the third property, consider the following derivation.
        \begingroup\allowdisplaybreaks
        \begin{align*}
            &\E[Z^2_t|F_{t-1}]
            \\
            = &
            \E\left[\left( \sum_{h=0}^{H-1}(\hat{f}_t(c_t,s^t_h,a^t_h)- r^t_h)^2 - (f_\star(c_t,s^t_h,a^t_h) - r^t_h)^2  \right)^2 \Bigg|F_{t-1}\right] 
            \\
            & -
            \E^2\left[ \sum_{h=0}^{H-1}(\hat{f}_t(c_t,s^t_h,a^t_h)- r^t_h)^2 - (f_\star(c_t,s^t_h,a^t_h) - r^t_h)^2  \Bigg|F_{t-1}\right]
            \\
            \leq &
            \E\left[\left( \sum_{h=0}^{H-1}(\hat{f}_t(c_t,s^t_h,a^t_h)- r^t_h)^2 - (f_\star(c_t,s^t_h,a^t_h) - r^t_h)^2  \right)^2 \Bigg|F_{t-1}\right]
            \\
            \leq &
            H \cdot \E\left[ \sum_{h=0}^{H-1}\left((\hat{f}_t(c_t,s^t_h,a^t_h)- r^t_h)^2 - (f_\star(c_t,s^t_h,a^t_h) - r^t_h)^2  \right)^2 \Bigg|F_{t-1}\right]
            \\ 
            = &
            H \cdot \E\left[ \sum_{h=0}^{H-1}(\hat{f}_t(c_t,s^t_h,a^t_h)-f_\star(c_t,s^t_h,a^t_h))^2( \hat{f}_t(c_t,s^t_h,a^t_h)+f_\star(c_t,s^t_h,a^t_h)-2r^t_h)^2 \Bigg|F_{t-1}\right]
            \\
            \leq &
            4H \cdot \E\left[ \sum_{h=0}^{H-1}(\hat{f}_t(c_t,s^t_h,a^t_h)-f_\star(c_t,s^t_h,a^t_h))^2 \Bigg|F_{t-1}\right].
        \end{align*}
        \endgroup
    \end{proof}
    
    We now back to the proof of the lemma.
    By~\cref{lemma:friedmans-ineq,clm:aux-for-freidman-assumption-1.1}, for $\eta \in (0,1/2H)$ with probability at least $1-\delta$ it holds that
    \begingroup\allowdisplaybreaks
    \begin{align*}
        &\sum_{t=1}^T \E\left[\sum_{h=0}^{H-1}(\hat{f}_t(c_t,s^t_h,a^t_h)- r^t_h)^2 - (f_\star(c_t,s^t_h,a^t_h) - r^t_h)^2 \Big|F_{t-1}\right]
        \\
        & -
        \sum_{h=0}^{H-1}(\hat{f}_t(c_t,s^t_h,a^t_h)- r^t_h)^2 - (f_\star(c_t,s^t_h,a^t_h) - r^t_h)^2
        \\
        = &
        \sum_{t=1}^T Z_t
        \\
        \tag{By~\cref{lemma:friedmans-ineq}}
        \leq & 
        \eta \sum_{t=1}^T \E_{t-1}[Z^2_t] + \frac{ \log(1/\delta)}{\eta}
        \\
        \tag{By~\cref{clm:aux-for-freidman-assumption-1.1}}
        \leq &
        \eta \cdot 4 H \cdot \sum_{t=1}^T \E\left[ \sum_{h=0}^{h-1}(\hat{f}_t(c_t,s^t_h,a^t_h)- f_\star(c_t,s^t_h,a^t_h))^2 \Big|F_{t-1}\right] + \frac{\log(1/\delta)}{\eta}.
    \end{align*}
    \endgroup
    The latter implies that
    \begin{align*}
        &(1-\eta \cdot 4 \cdot H) \cdot \sum_{t=1}^T \E\left[\sum_{h=0}^{H-1}(\hat{f}_t(c_t,s^t_h,a^t_h)-f_\star(c_t,s^t_h,a^t_h))^2\Big|F_{t-1}\right]
        \\
        & \leq 
        \sum_{t=1}^T\sum_{h=0}^{H-1}(\hat{f}_t(c_t,s^t_h,a^t_h)- r^t_h)^2 - (f_\star(c_t,s^t_h,a^t_h) - r^t_h)^2 + \frac{ \log(1/\delta)}{\eta}.
    \end{align*}
    For $\eta = \frac{1}{8 H} \in (0,1/2H)$ we obtain
    \begin{align*}
        &\frac{1}{2} \cdot  \sum_{t=1}^T \E\left[\sum_{h=0}^{H-1}(\hat{f}_t(c_t,s^t_h,a^t_h)-f_\star(c_t,s^t_h,a^t_h))^2\Big|F_{t-1}\right]
        \\
        & \leq 
        \sum_{t=1}^T
        \sum_{h=0}^{H-1}(\hat{f}_t(c_t,s^t_h,a^t_h)- r^t_h)^2 - (f_\star(c_t,s^t_h,a^t_h) - r^t_h)^2 + 8 H \log(1/\delta).
    \end{align*}
    Thus, when combine the above with part 2 of~\cref{clm:aux-for-freidman-assumption-1.1} we obtain,
    \begin{equation}\label{ineq:KD-by-friedman-assumption-1.1}
        \begin{split}
            &\sum_{t=1}^T \sum_{h=0}^{H-1}\sum_{s \in S}\sum_{a \in A} q^t_h(s,a) \cdot \left(\hat{f}_t(c_t,s,a) - f_\star(c_t,s,a)\right)^2
            \\
            & \leq
            2\cdot\sum_{t=1}^T
            \sum_{h=0}^{H-1}(\hat{f}_t(c_t,s^t_h,a^t_h)- r^t_h)^2 - (f_\star(c_t,s^t_h,a^t_h) - r^t_h)^2 + 16 H \log(1/\delta).  
        \end{split}
    \end{equation}
    By the oracle guarantees~(\cref{assumption:oracle-gurantee-rewards}),
    \begin{equation}\label{ineq:oracle-regret-KD-assumption-1.1}
        \begin{split}
            &\sum_{t=1}^T \sum_{h=0}^{H-1}(\hat{f}_t(c_t,s^t_h,a^t_h)- r^t_h)^2 - (f_\star(c_t,s^t_h,a^t_h) - r^t_h)^2
            \\
            & \leq 
            \sum_{t=1}^T \sum_{h=0}^{H-1} (\hat{f}_t(c_t,s^t_h,a^t_h) - r^t_h)^2 
            - \inf_{f \in \F}  \sum_{t=1}^T \sum_{h=0}^{H-1}  (f(c_t,s^t_h,a^t_h) - r^t_h)^2
            \\
            & \leq  
            \Regrv_{TH}(\OLSR^\F).
        \end{split}
    \end{equation}
    By combining~\cref{ineq:KD-by-friedman-assumption-1.1,ineq:oracle-regret-KD-assumption-1.1} we obtain the lemma as,
    \begin{align*}
        \sum_{t=1}^T \sum_{h=0}^{H-1}\sum_{s \in S}\sum_{a \in A} q^t_h(s,a) \cdot \left(\hat{f}_t(c_t,s,a) - f_\star(c_t,s,a)\right)^2
        \leq
        2\cdot
        \Regrv_{TH}(\OLSR^\F)
        + 16 H \log(1/\delta),
    \end{align*}
    which using the fact that $q^t_h(s,a):= q_h(s,a|\pi^{c_t}_t, P^{c_t}_\star)$ implies
    \begin{align*}
        \sum_{t=1}^T \mathop{\E}_{\pi^{c_t}_t, P^{c_t}_\star} \Bigg[\sum_{h=0}^{H-1}\left(\hat{f}_t(c_t,s_h,a_h) - f_\star(c_t,s_h,a_h)\right)^2 \Bigg|s_0\Bigg]
        \leq
        2\cdot
        \Regrv_{TH}(\OLSR^\F)
        + 16 H \log(1/\delta).
    \end{align*}
\end{proof}

\subsubsection{Log-Loss Regression Oracle for Dynamics Approximation}
To bound the oracle regret, we use the following lemma.
\begin{lemma}[Lemma A.14 in \citealp{foster2021statistical}]\label{lemma:A.14-dylan-LL}
    Consider a sequence of $\{0,1\}$-valued random variables $(\I_t)_{t \leq T}$ where $\I_t$ is $\mathit{F}^{(t-1)}$-measurable. For any $\delta \in (0,1)$ we have that with probability at least $1-\delta$,
    \begin{align*}
        \sum_{t=1}^T \E_{t-1}\left[ D^2_H(\widehat{g}^{(t)}(x^{(t)}), {g}^{(t)}_\star(x^{(t)})) \right]\I_t
        \leq
         \sum_{t=1}^T \left( \log^{(t)}(\widehat{g}^{(t)}) - \log^{(t)}({g}^{(t)}_\star) \right)\I_t + 2 \log(1/\delta).
    \end{align*}
\end{lemma}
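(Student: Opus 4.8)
The plan is to reduce the bound to a single nonnegative supermartingale and then apply Markov's inequality, following the classical exponential-moment argument for online density estimation. Write $\widehat p_t := \widehat g^{(t)}(\cdot\mid x^{(t)})$ and $p_t := g_\star^{(t)}(\cdot\mid x^{(t)})$ for the predicted and true conditional laws of the outcome $y^{(t)}$, so that the per-round log-loss difference is $\log^{(t)}(\widehat g^{(t)}) - \log^{(t)}(g_\star^{(t)}) = \log\frac{p_t(y^{(t)})}{\widehat p_t(y^{(t)})}$. First I would introduce the square-root likelihood ratio
\[
    W_t := \sqrt{\frac{\widehat p_t(y^{(t)})}{p_t(y^{(t)})}},
    \qquad\text{so that}\qquad
    \log W_t = -\tfrac12\big(\log^{(t)}(\widehat g^{(t)}) - \log^{(t)}(g_\star^{(t)})\big),
\]
and compute its conditional mean. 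Integrating first over $y^{(t)}\sim p_t$ and then over $x^{(t)}$, the Bhattacharyya identity $\sum_{y}\sqrt{p_t(y)\widehat p_t(y)} = 1 - \tfrac12 D^2_H(\widehat p_t, p_t)$ gives
\[
    \E_{t-1}[W_t] = \E_{t-1}\Big[1 - \tfrac12 D^2_H(\widehat p_t, p_t)\Big] = 1 - \tfrac12\,\E_{t-1}\big[D^2_H(\widehat p_t, p_t)\big].
\]

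The key step is to form the process $M_0 := 1$ and
\[
    M_t := \prod_{s=1}^t (W_s)^{\I_s}\,
    \exp\!\Big(\tfrac12\,\I_s\,\E_{s-1}\big[D^2_H(\widehat p_s, p_s)\big]\Big),
\]
and to verify it is a supermartingale for $\{F^{(t-1)}\}$. Here I would use that both $\I_t$ and $\E_{t-1}[D^2_H(\widehat p_t, p_t)]$ are $F^{(t-1)}$-measurable, so on $\{\I_t = 1\}$,
\[
    \E_{t-1}[M_t] = M_{t-1}\,\exp\!\Big(\tfrac12\,\E_{t-1}[D^2_H]\Big)\Big(1 - \tfrac12\,\E_{t-1}[D^2_H]\Big) \le M_{t-1},
\]
the last inequality being the elementary bound $1 - x \le e^{-x}$ with $x = \tfrac12 \E_{t-1}[D^2_H]$; on $\{\I_t = 0\}$ the new factor is $1$ and the step is trivial. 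Iterating gives $\E[M_T] \le M_0 = 1$.

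Finally, since $M_T \ge 0$, Markov's inequality gives $\Prob[M_T \ge 1/\delta] \le \delta\,\E[M_T] \le \delta$, so with probability at least $1-\delta$ we have $\log M_T \le \log(1/\delta)$. Expanding the logarithm,
\[
    \log M_T = \sum_{t=1}^T \I_t\Big(\tfrac12\,\E_{t-1}[D^2_H(\widehat p_t, p_t)] - \tfrac12\big(\log^{(t)}(\widehat g^{(t)}) - \log^{(t)}(g_\star^{(t)})\big)\Big),
\]
so multiplying the high-probability inequality by $2$ and rearranging yields exactly the stated bound. The hard part will be the bookkeeping of the conditioning rather than any deep estimate: one must integrate $W_t$ over $y^{(t)}$ before $x^{(t)}$ so that the Hellinger distance appears, and place the compensator $\tfrac12\E_{s-1}[D^2_H]$ at the correct level of the filtration so that the $1-x\le e^{-x}$ comparison applies termwise inside the product.
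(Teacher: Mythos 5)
The paper does not prove this lemma itself --- it is cited verbatim as Lemma A.14 of Foster et al.\ (2021) --- but your argument is correct and is essentially the standard proof of that result: the Bhattacharyya identity $\sum_y \sqrt{p_t(y)\,\widehat{p}_t(y)} = 1 - \tfrac{1}{2} D^2_H(\widehat{p}_t, p_t)$ makes $M_t$ a nonnegative supermartingale once the compensator $\exp\left(\tfrac{1}{2}\I_s\,\E_{s-1}[D^2_H]\right)$ is included, and Markov's inequality combined with $1-x \le e^{-x}$ gives the claim. Your filtration bookkeeping is also right ($\I_t$ and the compensator are both $F^{(t-1)}$-measurable, and integrating over $y^{(t)}$ before $x^{(t)}$ is precisely what produces the conditional Hellinger term); the only edge case worth a remark is $\widehat{p}_t(y^{(t)})=0$, where $W_t=0$, the log-loss difference is $+\infty$, and the stated inequality holds vacuously.
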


\begin{lemma*}[concentration of LLR regret w.r.t Hellinger distance, restatement of~\cref{lemma:llrs-regret}]\label{lemma:transition-to-oracle-rgret-dynamics}
    Under~\cref{assumption:dynamics-realizability} and~\cref{assumption:oracle-KL-reget}, 
    for any $\delta \in (0,1)$, with probability at least $1-\delta$ it holds that
    \[
        \sum_{t=1}^T 
        \mathop{\E}_{\pi^{c_t}_t, P^{c_t}_\star} \Bigg[
        \sum_{h=0}^{H-1}D^2_H(P^{c_t}_\star(\cdot|s_{h},a_{h}),\widehat{P}^{c_t}_t(\cdot|s_{h},a_{h}))
        \Bigg| s_0\Bigg]
        \leq 
        \Regrv_{TH}(\OLLR^{\Fp}) + 2H \log(H/\delta).
    \]
\end{lemma*}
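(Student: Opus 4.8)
The plan is to reduce the statement to an $H$-fold application of \cref{lemma:A.14-dylan-LL} (Lemma A.14 of \citealp{foster2021statistical}), one for each step $h \in \{0,\ldots,H-1\}$, combined with a union bound, and then to convert the resulting empirical log-loss differences into the oracle regret via realizability (\cref{assumption:dynamics-realizability}) and the oracle guarantee (\cref{assumption:oracle-KL-reget}).

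First I would fix a step $h$ and set up the per-step prediction problem. Define the filtration $F_{t-1} = (\sigma^1,\ldots,\sigma^{t-1},c_t)$, so that $\widehat{P}^{c_t}_t$ --- being the oracle output computed from the first $t-1$ trajectories together with the already-revealed context $c_t$ --- is $F_{t-1}$-measurable. I apply \cref{lemma:A.14-dylan-LL} with the per-round instance $x^{(t)} = (c_t,s^t_h,a^t_h)$, the predicted conditional law $\widehat{g}^{(t)} = \widehat{P}^{c_t}_t(\cdot \mid s^t_h,a^t_h)$, the true conditional law $g^{(t)}_\star = P^{c_t}_\star(\cdot \mid s^t_h,a^t_h)$, and the trivial indicator $\I_t \equiv 1$ (every length-$H$ trajectory visits step $h$, so the example is always present). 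With failure probability $\delta/H$ this yields
\begin{align*}
    \sum_{t=1}^T \E_{t-1}\!\left[ D^2_H\big(\widehat{P}^{c_t}_t(\cdot\mid s^t_h,a^t_h),\, P^{c_t}_\star(\cdot\mid s^t_h,a^t_h)\big) \right]
    &\leq
    \sum_{t=1}^T \left( \log\frac{1}{\widehat{P}^{c_t}_t(s^t_{h+1}\mid s^t_h,a^t_h)} - \log\frac{1}{P^{c_t}_\star(s^t_{h+1}\mid s^t_h,a^t_h)} \right)
    \\
    &\quad + 2\log(H/\delta).
\end{align*}
Here $\E_{t-1}[\cdot] = \E_{\pi^{c_t}_t,P^{c_t}_\star}[\cdot \mid s_0]$ is precisely the occupancy-weighted expectation over the round-$t$ trajectory, and since $D^2_H$ is symmetric the left-hand side matches the quantity appearing in the target.

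Next I would union bound over the $H$ values of $h$ (total failure probability $\delta$) and sum the displayed inequalities over $h=0,\ldots,H-1$. The left-hand side becomes exactly $\Ehel$, while the right-hand side accumulates an additive $2H\log(H/\delta)$ together with the empirical log-loss difference $\sum_{t,h}\big(\log\tfrac{1}{\widehat{P}^{c_t}_t(s^t_{h+1}\mid s^t_h,a^t_h)} - \log\tfrac{1}{P^{c_t}_\star(s^t_{h+1}\mid s^t_h,a^t_h)}\big)$. By realizability (\cref{assumption:dynamics-realizability}) there is a class member equal to $P^{c_t}_\star$ on every observed context, so the infimum over $\Fp$ in \cref{assumption:oracle-KL-reget} is no larger than the log-loss of $P_\star$; hence this empirical difference is at most $\Regrv_{TH}(\OLLR^{\Fp})$. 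Combining the two bounds gives the claimed inequality.

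I expect the only delicate point to be the bookkeeping of the conditioning: verifying that the $\E_{t-1}$ appearing in \cref{lemma:A.14-dylan-LL} coincides with the trajectory expectation $\E_{\pi^{c_t}_t,P^{c_t}_\star}[\cdot\mid s_0]$, and that $\widehat{P}_t$ is measurable with respect to the chosen filtration, so that each per-step application is legitimate and their sum assembles correctly into $\Ehel$ on one side and the oracle's empirical regret on the other. Everything else is a routine union bound and an invocation of the two dynamics assumptions.
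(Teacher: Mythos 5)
Your proposal is correct and follows essentially the same route as the paper's proof: both fix a step $h$, apply Lemma A.14 of Foster et al.\ with failure probability $\delta/H$ to the conditional (occupancy-weighted) Hellinger sum at that step, union bound over the $H$ steps to accumulate the $2H\log(H/\delta)$ term, and then pass to the oracle regret via realizability. The only cosmetic difference is that the paper first rewrites the expectation as an explicit occupancy-measure sum and swaps the order of summation before invoking the concentration lemma, which is exactly the measurability bookkeeping you flag as the delicate point.
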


\begin{proof}
    Recall $q^t_h(s,a):= q_h(s,a|\pi^{c_t}_t, P^{c_t}_\star)$.
    \begingroup
    \allowdisplaybreaks
    \begin{align*}
        &\sum_{t=1}^T 
        \mathop{\E}_{\pi^{c_t}_t, P^{c_t}_\star} \Bigg[
        \sum_{h=0}^{H-1}D^2_H(P^{c_t}_\star(\cdot|s_{h},a_{h}),\widehat{P}^{c_t}_t(\cdot|s_{h},a_{h}))
        \Bigg| s_0\Bigg]
        \\ 
        & =
        \sum_{t=1}^T
        \sum_{h=0}^{H-1}
        \sum_{s \in S}
        \sum_{a \in A}
        q^t_h(s,a) 
        \cdot D^2_H(P^{c_t}_\star(\cdot|s,a),\widehat{P}^{c_t}_t(\cdot|s,a))
        \\
        & =
        \sum_{h=0}^{H-1}
        \sum_{t=1}^T
        \sum_{s \in S}
        \sum_{a \in A}
        q^t_h(s,a)
        \cdot D^2_H(P^{c_t}_\star(\cdot|s,a),\widehat{P}^{c_t}_t(\cdot|s,a))
        \\
        & \underbrace{=}_{(i)} 
        \sum_{h=0}^{H-1}\sum_{t=1}^T\E\left[D^2_H(P^{c_t}_\star(\cdot|s_h,a_h),\widehat{P}^{c_t}_t(\cdot|s_h,a_h)) \Big| \Hist_{t-1},c_t\right]
        \\
        \tag{By~\cref{lemma:A.14-dylan-LL}, holds w.p. at least $1-\delta$}
        & \leq
        \sum_{t=1}^T \sum_{h=0}^{H-1} \log\left(\frac{1}{\widehat{P}^{c_t}(s^t_{h+1}|s^t_h,a^t_h)} \right) -  \sum_{t=1}^T \sum_{h=0}^{H-1}\log\left(\frac{1}{P^{c_t}_\star(s^t_{h+1}|s^t_h,a^t_h)}\right) + 2 H\log(H/\delta)
        \\
        & \leq
        \tag{By realizability}
         \sum_{t=1}^T \sum_{h=0}^{H-1} \log\left(\frac{1}{\widehat{P}^{c_t}(s^t_{h+1}|s^t_h,a^t_h)} \right) -  \inf_{P \in \Fp}\left\{\sum_{t=1}^T \sum_{h=0}^{H-1}\log\left(\frac{1}{P^{c_t}(s^t_{h+1}|s^t_h,a^t_h)}\right)\right\} + 2 H\log(H/\delta)
        \\
        & \leq
        \Regrv_{TH}(\OLLR^{\Fp}) + 2H \log(H/\delta).
    \end{align*}
    \endgroup
    The filtration used in $(i)$ is over the history up to time $t$, $\Hist_{t-1} = (\sigma_1, \ldots,\sigma_{t-1})$ and the context in time $t$, $c_t$.
\end{proof}

\subsection{Regret Analysis}

Recall the Helligner distance given in \cref{def:hellinger-main}. The following change of measure result is due to \cite{foster2021statistical}.
\begin{lemma}[Lemma A.11 in~\citealp{foster2021statistical}]\label{lemma:A.14-dylan}
Let $\mathbb{P}$ and $\mathbb{Q}$ be two probability measures on $(\mathcal{X}, \mathrm{F})$. For all $h:\mathcal{X} \to \R $ with $0 \leq h(X) \leq R$ almost surely under $\mathbb{P}$ and $\mathbb{Q}$, we have
\begin{align*}
    \left| \E_{\mathbb{P}}[h(X)] -  \E_{\mathbb{Q}}[h(X)] \right| 
    \leq
    \sqrt{2 R ( \E_{\mathbb{P}}[h(X)] +  \E_{\mathbb{Q}}[h(X)])\cdot D^2_H(\mathbb{P}, \mathbb{Q})}
    .
\end{align*}
In particular,
\begin{align*}
    \E_{\mathbb{P}}[h(X)] 
    \leq
    3\E_{\mathbb{Q}}[h(X)] + 4RD^2_H(\mathbb{P}, \mathbb{Q})
    .
\end{align*}
\end{lemma}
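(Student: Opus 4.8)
The plan is to prove the two-sided bound by a Cauchy--Schwarz argument after factoring the density difference, and then derive the ``in particular'' claim by a single application of AM--GM. Throughout I would fix a common dominating measure $\mu$ (for instance $\mu = \mathbb{P} + \mathbb{Q}$), write $p = d\mathbb{P}/d\mu$ and $q = d\mathbb{Q}/d\mu$, so that $D^2_H(\mathbb{P}, \mathbb{Q}) = \int (\sqrt{p} - \sqrt{q})^2\, d\mu$ and $\E_{\mathbb{P}}[h(X)] = \int h\, p\, d\mu$, and similarly for $\mathbb{Q}$. In the discrete setting of \cref{def:hellinger-main} this $\mu$ is counting measure and all integrals reduce to sums, so nothing is lost.

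The key algebraic step is the factorization $p - q = (\sqrt{p} - \sqrt{q})(\sqrt{p} + \sqrt{q})$. Writing $\E_{\mathbb{P}}[h(X)] - \E_{\mathbb{Q}}[h(X)] = \int h\,(p - q)\, d\mu$ and using $h \geq 0$ to split $h = \sqrt{h}\cdot\sqrt{h}$, I would regroup the integrand as $\bigl(\sqrt{h}(\sqrt{p}-\sqrt{q})\bigr)\cdot\bigl(\sqrt{h}(\sqrt{p}+\sqrt{q})\bigr)$ and apply Cauchy--Schwarz, obtaining
\begin{align*}
\left|\E_{\mathbb{P}}[h(X)] - \E_{\mathbb{Q}}[h(X)]\right|
\leq
\sqrt{\int h(\sqrt{p}-\sqrt{q})^2\, d\mu}
\cdot
\sqrt{\int h(\sqrt{p}+\sqrt{q})^2\, d\mu}.
\end{align*}
The first factor is bounded, using $0 \leq h \leq R$, by $R\, D^2_H(\mathbb{P},\mathbb{Q})$. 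For the second factor I would invoke the elementary inequality $(\sqrt{p}+\sqrt{q})^2 \leq 2(p+q)$ (equivalently $2\sqrt{pq} \leq p+q$) to get $\int h(\sqrt{p}+\sqrt{q})^2\, d\mu \leq 2\int h(p+q)\, d\mu = 2\bigl(\E_{\mathbb{P}}[h(X)] + \E_{\mathbb{Q}}[h(X)]\bigr)$. Multiplying the two bounds gives exactly the first displayed inequality of the lemma.

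For the ``in particular'' statement, I would start from the first inequality in the form $\E_{\mathbb{P}}[h(X)] \leq \E_{\mathbb{Q}}[h(X)] + \sqrt{2R\bigl(\E_{\mathbb{P}}[h(X)]+\E_{\mathbb{Q}}[h(X)]\bigr) D^2_H(\mathbb{P},\mathbb{Q})}$ and apply AM--GM to the square-root term with the split $u = \E_{\mathbb{P}}[h(X)]+\E_{\mathbb{Q}}[h(X)]$ and $v = 2R\, D^2_H(\mathbb{P},\mathbb{Q})$, giving $\sqrt{uv} \leq \tfrac12 u + \tfrac12 v = \tfrac12\bigl(\E_{\mathbb{P}}[h(X)]+\E_{\mathbb{Q}}[h(X)]\bigr) + R\, D^2_H(\mathbb{P},\mathbb{Q})$. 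Substituting and collecting the $\E_{\mathbb{P}}[h(X)]$ terms on the left yields $\tfrac12 \E_{\mathbb{P}}[h(X)] \leq \tfrac32 \E_{\mathbb{Q}}[h(X)] + R\, D^2_H(\mathbb{P},\mathbb{Q})$, i.e.\ $\E_{\mathbb{P}}[h(X)] \leq 3\E_{\mathbb{Q}}[h(X)] + 2R\, D^2_H(\mathbb{P},\mathbb{Q})$, which is stronger than, and hence implies, the stated bound with $4R\, D^2_H$.

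There is no serious obstacle here; the argument is short. The only points requiring care are the legitimacy of the Cauchy--Schwarz split, which relies crucially on $h \geq 0$ so that $\sqrt{h}$ is well defined, and the choice of AM--GM weighting in the last step so that the coefficient of $\E_{\mathbb{P}}[h(X)]$ on the right stays below $1$ and can be absorbed into the left-hand side. Existence of a common dominating measure is automatic by taking $\mu = \mathbb{P}+\mathbb{Q}$, so the passage to densities is harmless.
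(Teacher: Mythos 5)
Your proof is correct; the paper itself imports this lemma from \citet{foster2021statistical} without reproving it, and your Cauchy--Schwarz argument (factoring $p-q=(\sqrt{p}-\sqrt{q})(\sqrt{p}+\sqrt{q})$, splitting $h=\sqrt{h}\cdot\sqrt{h}$, and bounding $(\sqrt{p}+\sqrt{q})^2\le 2(p+q)$) is exactly the standard proof given in that reference. Your AM--GM step for the second claim even yields the sharper constant $2R\,D^2_H(\mathbb{P},\mathbb{Q})$ in place of $4R\,D^2_H(\mathbb{P},\mathbb{Q})$, which of course implies the stated bound.
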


Next, we need the following refinement of the previous result.
\begin{corollary}\label{corl:A.11}
    For any $\beta \geq 1$,
    \begin{align*}
    \E_{\mathbb{P}}[h(X)] 
    \leq
    (1+1/\beta)\E_{\mathbb{Q}}[h(X)] + 3\beta RD^2_H(\mathbb{P}, \mathbb{Q})
    .
\end{align*}
\end{corollary}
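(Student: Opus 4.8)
The plan is to bound $\E_{\mathbb{P}}[h(X)]$ starting from the \emph{tighter} first inequality of \cref{lemma:A.14-dylan}, rather than from its weaker ``in particular'' consequence (which already hard-codes the constants $3$ and $4$ and so cannot be sharpened). Abbreviating $a := \E_{\mathbb{P}}[h(X)]$, $b := \E_{\mathbb{Q}}[h(X)]$ and $D := D^2_H(\mathbb{P},\mathbb{Q})$, that inequality reads $a - b \le \sqrt{(a+b)\cdot 2RD}$, and the goal is to convert this square-root bound into the advertised affine bound $a \le (1+1/\beta)\,b + 3\beta R D$, with the tradeoff between the two coefficients governed by the free parameter $\beta \ge 1$.

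First I would apply the weighted AM--GM (Young) inequality $\sqrt{uv} \le \frac{u}{2\lambda} + \frac{\lambda v}{2}$ with $u = a+b$, $v = 2RD$ and a tuning parameter $\lambda > 1/2$, obtaining $a - b \le \frac{a+b}{2\lambda} + \lambda R D$. Collecting the $a$-terms on the left gives $a\left(1 - \frac{1}{2\lambda}\right) \le b\left(1 + \frac{1}{2\lambda}\right) + \lambda R D$; since $\lambda > 1/2$ the coefficient of $a$ is positive, so dividing through yields
\begin{align*}
    a \le \frac{1 + 1/(2\lambda)}{1 - 1/(2\lambda)}\, b + \frac{\lambda}{1 - 1/(2\lambda)}\, R D.
\end{align*}

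The final step is to pick $\lambda$ so that both coefficients match the claim. Choosing $\lambda = \beta + \tfrac12$ (so $1/(2\lambda) = 1/(2\beta+1)$, valid since $\beta \ge 1$) makes the multiplicative coefficient exactly $(2\beta+2)/(2\beta) = 1 + 1/\beta$, while the additive coefficient becomes $\frac{\lambda}{1 - 1/(2\lambda)} = \frac{(2\beta+1)^2}{4\beta}$. I would then verify $\frac{(2\beta+1)^2}{4\beta} \le 3\beta$, equivalently $8\beta^2 - 4\beta - 1 \ge 0$, which holds for all $\beta \ge 1$ since the left-hand side equals $3$ at $\beta = 1$ and is increasing thereafter. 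The only place requiring care is this last comparison: it is precisely the hypothesis $\beta \ge 1$ that makes the $3\beta$ additive constant large enough, so the improvement of the constant from $4$ (in the unrefined statement, corresponding to $\beta = 1$) is exactly what the tradeoff buys. No genuinely hard step arises; the entire content is the choice of $\lambda$ and the elementary verification of the two coefficient inequalities.
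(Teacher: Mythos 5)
Your proposal is correct and follows essentially the same route as the paper: both start from the square-root form of \cref{lemma:A.14-dylan}, apply weighted AM--GM (your $\lambda$ is the paper's $\eta$ via $\eta = 1/(2\lambda)$), rearrange, and substitute $\eta = 1/(2\beta+1)$, i.e.\ $\lambda = \beta + \tfrac12$. Your computed additive coefficient $\frac{(2\beta+1)^2}{4\beta}$ is in fact the correct intermediate value (the paper's displayed $3R\frac{(2\beta+1)^2}{2\beta}$ contains a typo), and your final comparison $\frac{(2\beta+1)^2}{4\beta} \le 3\beta$ for $\beta \ge 1$ is exactly the step the paper needs.
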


\begin{proof}
    Let $\eta \in (0,1)$. Consider the following derivation.
    \begingroup
    \allowdisplaybreaks
    \begin{align*}
        \E_{\mathbb{P}}[h(X)] - \E_{\mathbb{Q}}[h(X)]
        &
        \leq
        \sqrt{2 R ( \E_{\mathbb{P}}[h(X)] +  \E_{\mathbb{Q}}[h(X)])\cdot D^2_H(\mathbb{P}, \mathbb{Q})}
        \\
        &
        \leq
        \eta (\E_{\mathbb{P}}[h(X)] + \E_{\mathbb{Q}}[h(X)]) + \frac{R}{2\eta}D^2_H(\mathbb{P}, \mathbb{Q}).
    \end{align*}
    \endgroup
    The above implies
    \begingroup\allowdisplaybreaks
    \begin{align*}
        \E_{\mathbb{P}}[h(X)] 
        &
        \leq 
        \frac{1+\eta}{1-\eta}\E_{\mathbb{Q}}[h(X)] + \frac{R}{2\eta (1-\eta)}D^2_H(\mathbb{P},\mathbb{Q})
        \\
        \tag{Plug $\eta = \frac{1}{2\beta+1}$ for all $\beta \in (0, \infty)$.}
        &
        =
        \left(1 + \frac{1}{\beta}\right)\E_{\mathbb{Q}}[h(X)] + 3R \frac{(2\beta +1)^2}{2\beta} D^2_H(\mathbb{P},\mathbb{Q})
        \\
        \tag{For any $\beta \geq 1$}
        &
        \leq
        \left(1 + \frac{1}{\beta}\right)\E_{\mathbb{Q}}[h(X)] + 3R \beta D^2_H(\mathbb{P},\mathbb{Q}).
    \end{align*}
    \endgroup
\end{proof}

In the following regret analysis, we use the value change of measure with respect to the Hellinger distance, introduced by~\citet{levy2022counterfactul}.
\begin{lemma}[Lemma 3 in~\citealp{levy2022counterfactul}]\label{lemma:refined-val-diff-same-r-Hel}
    Let $r: S \times A \to [0,1]$ be a bounded expected rewards function. Let $P_\star$ and $\widehat{P}$ denote two dynamics and consider the MDPs $M  = (S,A,P_\star,r, s_0,H)$ and $\widehat{M}  = (S,A,\widehat{P},r, s_0,H)$.
    Then, for any policy $\pi$ we have
    \begin{align*}
        V^\pi_{\widehat{M}}(s)
        \le
        3 V^\pi_{{M}}(s)
        +
        9 H^2
        \mathop{\E}_{P_\star, \pi}
        \brk[s]*{
        \sum_{h=0}^{H-1}
        D_H^2(\widehat{P}(\cdot|s_{h},a_{h}), {P}_\star(\cdot|s_{h},a_{h}))
        \Bigg| s_{0} = s
        }
        .
    \end{align*}
\end{lemma}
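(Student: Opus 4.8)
The plan is to prove the bound by induction over the horizon, peeling off one transition at a time and invoking the refined change-of-measure inequality of \cref{corl:A.11} to swap the approximate dynamics $\widehat{P}$ for the true dynamics $P_\star$ inside each Bellman backup. I would write both value functions through their Bellman equations: for every stage $h$ and state $s$,
\begin{align*}
    V^\pi_{\widehat{M},h}(s)
    =
    \sum_{a} \pi_h(a\mid s)\left[ r(s,a) + \sum_{s'} \widehat{P}(s'\mid s,a)\, V^\pi_{\widehat{M},h+1}(s') \right],
\end{align*}
and the analogous identity for $V^\pi_{M,h}$ with $P_\star$ in place of $\widehat{P}$. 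The stage-$(h{+}1)$ value is nonnegative and bounded by $H$, being a sum of at most $H$ rewards in $[0,1]$, so it is a legitimate test function for the corollary.

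Next I would apply \cref{corl:A.11} with $\mathbb{P} = \widehat{P}(\cdot\mid s,a)$, $\mathbb{Q} = P_\star(\cdot\mid s,a)$, $R = H$, and test function $V^\pi_{\widehat{M},h+1}$, giving
\begin{align*}
    \sum_{s'} \widehat{P}(s'\mid s,a)\, V^\pi_{\widehat{M},h+1}(s')
    \le
    \left(1+\tfrac1\beta\right) \sum_{s'} P_\star(s'\mid s,a)\, V^\pi_{\widehat{M},h+1}(s')
    + 3\beta H\, D^2_H\!\left(\widehat{P}(\cdot\mid s,a), P_\star(\cdot\mid s,a)\right).
\end{align*}
Since $r(s,a)\ge 0$, I can multiply the reward term by $1+\tfrac1\beta\ge 1$ for free, which yields the compact one-step recursion
\begin{align*}
    V^\pi_{\widehat{M},h}(s)
    \le
    \left(1+\tfrac1\beta\right)\mathop{\E}_{a\sim\pi_h,\, s'\sim P_\star}\!\left[ r(s,a) + V^\pi_{\widehat{M},h+1}(s') \right]
    + 3\beta H\, \mathop{\E}_{a\sim\pi_h}\!\left[ D^2_H\!\left(\widehat{P}(\cdot\mid s,a), P_\star(\cdot\mid s,a)\right)\right].
\end{align*}
The key point is that the value inside is still that of $\widehat{M}$, but the transition has become $P_\star$, so the recursion unrolls against $P_\star$-trajectories.

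I would then unroll from $h=0$ to the terminal stage, using $V^\pi_{\widehat{M},H}\equiv 0$. Writing $\lambda := 1+\tfrac1\beta$, the reward at step $k$ acquires a factor $\lambda^{k+1}$ and the Hellinger term at step $k$ a factor $3\beta H\,\lambda^{k}$, all under $\E_{P_\star,\pi}$. Bounding each factor by $\lambda^{H}$ and recognizing $\sum_k \E_{P_\star,\pi}[r(s_k,a_k)\mid s_0=s] = V^\pi_M(s)$ gives
\begin{align*}
    V^\pi_{\widehat{M}}(s)
    \le
    \lambda^H\, V^\pi_M(s)
    + 3\beta H\, \lambda^{H}\, \mathop{\E}_{P_\star,\pi}\!\left[ \sum_{h=0}^{H-1} D^2_H\!\left(\widehat{P}(\cdot\mid s_h,a_h), P_\star(\cdot\mid s_h,a_h)\right) \Bigg| s_0 = s \right].
\end{align*}
Choosing $\beta = H$ (which satisfies $\beta\ge 1$) makes $\lambda^H = (1+\tfrac1H)^H \le e < 3$, so the first coefficient is at most $3$ and the second is at most $3eH^2 < 9H^2$, which is exactly the claimed bound.

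The main obstacle is controlling the multiplicative blow-up across the horizon. A naive step-by-step change of measure using the crude form $\E_{\mathbb{P}}[h]\le 3\E_{\mathbb{Q}}[h] + 4RD^2_H$ of \cref{lemma:A.14-dylan} would compound to a factor $3^H$, which is useless. The refinement is to exploit the tunable constant $\beta$ of \cref{corl:A.11} and set $\beta = H$, so that the compounded factor $(1+\tfrac1H)^H$ stays below $e$; this is precisely what pins the leading constants at $3$ and $9H^2$. The only remaining care is to verify the boundedness hypothesis $0\le V^\pi_{\widehat{M},h+1}\le H$ at every step and to track the exponents accurately through the unrolling.
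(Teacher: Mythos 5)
Your proposal is correct and follows essentially the same route as the paper's proof: a backwards-induction/unrolling of the Bellman equation, applying \cref{corl:A.11} at each step with test function $V^\pi_{\widehat{M},h+1}\in[0,H]$ and $\beta=H$, so that the compounded factor $(1+\tfrac1H)^{H}$ stays bounded by a constant rather than blowing up like $3^H$. The only cosmetic difference is that you bound $(1+\tfrac1H)^H$ by $e$ while the paper bounds it by $3$, which yields the same stated constants.
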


\begin{proof}
    We first prove by backwards induction that for all $h \in [H-1]$ the following holds.
    \begin{align*}
        V^\pi_{\widehat{M},h}(s)
        \le
        \brk*{1+\frac1H}^{H-h}
        \brk[s]*{
        V^\pi_{{M},h}(s)
        +
        \mathop{\E}_{P_\star, \pi}
        \brk[s]*{
        \sum_{h'=h}^{H-1}
        3H^2 D_H^2(\widehat{P}(\cdot|s_{h'},a_{h'}), {P}_\star(\cdot|s_{h'},a_{h'}))
        \Bigg| s_{h} = s
        }
        }
        .
    \end{align*}
    The base case, $h=H-1$ is immediate since $V^\pi_{\widehat{M},h}(s) = V^\pi_{{M},h}(s)$.
    Now, we assume that the above holds for $h+1$ and prove that it holds for $h$.
    To see this, we have that
    \begingroup\allowdisplaybreaks
    \begin{align*}
        &
        V^\pi_{\widehat{M},h}(s)
        \tag{By Bellman's equations}
        = 
        \mathop{\E}_{a \sim \pi(\cdot|s)}
        \brk[s]*{
        r(s,a)
        +
        \E_{s' \sim \widehat{P}(\cdot|s,a)}
        \brk[s]*{V^\pi_{\widehat{M},h+1}(s') 
        } }
        \\
        \tag{\cref{corl:A.11}}
        \le &
        \mathop{\E}_{a \sim \pi(\cdot|s)}
        \brk[s]*{
        r(s,a)
        +
        \brk*{1+\frac1H}\E_{s' \sim {P}_\star(\cdot|s,a)}
        \brk[s]*{V^\pi_{\widehat{M},h+1}(s')
        }
        +
        3H^2 D_H^2(\widehat{P}(\cdot|s,a), {P}_\star(\cdot|s,a))
        } 
        \\
        \tag{Induction hypothesis}
        \le &
        \mathop{\E}_{a \sim \pi(\cdot|s)}
        \brk[s]*{
        r(s,a)
        +
        3H^2 D_H^2(\widehat{P}(\cdot|s,a), {P}_\star(\cdot|s,a))
        }
        \\
        + &
        \mathop{\E}_{a \sim \pi(\cdot|s)}
        \brk[s]*{
        \brk*{1+\frac1H}^{H-h}\mathop{\E}_{s' \sim {P}_\star(\cdot|s,a)}
        \brk[s]*{
        V^\pi_{{M},h+1}(s')
        }
        }
        \\
        + &
        \mathop{\E}_{a \sim \pi(\cdot|s)}
        \brk[s]*{
        \brk*{1+\frac1H}^{H-h}\mathop{\E}_{s' \sim {P}_\star(\cdot|s,a)}
        \brk[s]*{
        \E\brk[s]*{
        \sum_{h'=h+1}^{H-1}
        3H^2 D_H^2(\widehat{P}(\cdot|s_{h'},a_{h'}), {P}_\star(\cdot|s_{h'},a_{h'}))
        \Bigg| s_{h+1} = s'
        }
        }
        }
        \\
        \tag{$r, D_H^2 \ge 0$}
        \le &
        \brk*{1+\frac1H}^{H-h}
        \mathop{\E}_{a \sim \pi(\cdot|s)}
        \brk[s]*{
        r(s,a)
        +
        \mathop{\E}_{s' \sim {P}_\star(\cdot|s,a)}
        \brk[s]*{
        V^\pi_{{M},h+1}(s')
        }
        }
        \\
        + &
        \brk*{1+\frac1H}^{H-h}
        \mathop{\E}_{P_\star, \pi}
        \brk[s]*{
        \sum_{h'=h}^{H-1}
        3H^2 D_H^2(\widehat{P}(\cdot|s_{h'},a_{h'}), {P}_\star(\cdot|s_{h'},a_{h'}))
        \Bigg| s_{h} = s
        }
        \\
        \tag{By Bellman's equations}
        = &
        \brk*{1+\frac1H}^{H-h}
        \brk[s]*{
        V^\pi_{{M},h}(s)
        +
        \mathop{\E}_{P_\star, \pi}
        \brk[s]*{
        \sum_{h'=h}^{H-1}
        3H^2 D_H^2(\widehat{P}(\cdot|s_{h'},a_{h'}), {P}_\star(\cdot|s_{h'},a_{h'}))
        \Bigg| s_{h} = s
        }
        }
        ,
    \end{align*}
    \endgroup
    as desired. Plugging in $h=0$ and using that $\brk*{1+\frac1H}^{H} \le 3$ concludes the proof.
\end{proof}
This change of measure lemma upper bounds the value  difference caused by the use of an approximated dynamics, instead of the true one, in terms of the expected Hellinger distance across a trajectory.
This bound might seem very loose as a value difference bound, however, when the rewards are very small, it yields a significantly tighter then the standard bounds. In~\cref{lemma:rewards-true-occ-mesure,lemma:dynamics-true-occ-mesure} we apply the value change of measure lemma where the rewards function are the squared loss of the rewards approximation, and the squared $L_1$ loss of the dynamics approximation. As these rewards are very small, Lemma $3$ implies that those expected approximation errors with respect to the approximated dynamics $\widehat{P}$ are at most a small constant multiple of these expected errors where the expectation is with respect to the true dynamics $P_\star$. Thus, the lemma helps us to translated the expected errors from the approximated measures to the true measures.

The following is an immediate corollary of~\cref{lemma:refined-val-diff-same-r-Hel,eq:value-occupancy-representation}, which we will use in our analysis.
\begin{corollary}[Occupancy measures change]
\label{corl:occupancy-change-of-measure}
    For any (non-contextual) policy $\pi$, two dynamics $P$ and $\widehat{P}$, and rewards function $r$ that is bounded in $[0,1]$ it holds that
    \begin{align*}
        \sum_{h=0}^{H-1}
        \sum_{s \in S_h}
        \sum_{a \in A}
        q_h(s,a| \pi,\widehat{P})\cdot
        r(s,a)
        \leq &
        3\sum_{h=0}^{H-1}
        \sum_{s \in S_h}
        \sum_{a \in A}
        q_h(s,a| \pi,P)\cdot
        r(s,a)
        \\
        & +
        9H^2
        \sum_{h=0}^{H-1}
        \sum_{s \in S_h}
        \sum_{a \in A}
        q_h(s,a| \pi,P)\cdot
         D^2_H(P(\cdot|s,a), \widehat{P}(\cdot|s,a)) 
        .
    \end{align*}
\end{corollary}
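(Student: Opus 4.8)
The plan is to obtain the statement directly from the value change-of-measure lemma (\cref{lemma:refined-val-diff-same-r-Hel}) by rewriting both sides in terms of occupancy measures via the representation in \cref{eq:value-occupancy-representation}. First I would instantiate \cref{lemma:refined-val-diff-same-r-Hel} with the true dynamics $P$ playing the role of $P_\star$, the approximate dynamics $\widehat{P}$, the given rewards $r$, and the start state $s = s_0$. Writing $M = (S,A,P,r,s_0,H)$ and $\widehat{M} = (S,A,\widehat{P},r,s_0,H)$, this yields
\[
V^\pi_{\widehat{M}}(s_0) \le 3 V^\pi_{M}(s_0) + 9H^2 \mathop{\E}_{P, \pi}\brk[s]*{\sum_{h=0}^{H-1} D^2_H(\widehat{P}(\cdot|s_h,a_h), P(\cdot|s_h,a_h)) \,\Big|\, s_0}.
\]

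The remaining work is purely a matter of translating expectations into occupancy-measure sums. By \cref{eq:value-occupancy-representation}, the left-hand side equals $\sum_{h,s,a} q_h(s,a|\pi,\widehat{P})\,r(s,a)$ and the term $3V^\pi_{M}(s_0)$ equals $3\sum_{h,s,a} q_h(s,a|\pi,P)\,r(s,a)$, since the value of $\pi$ on an MDP with dynamics $\widehat{P}$ (resp. $P$) is exactly the occupancy-measure-weighted reward with occupancy $q(\cdot|\pi,\widehat{P})$ (resp. $q(\cdot|\pi,P)$). For the Hellinger penalty I would use that $q_h(s,a|\pi,P)$ is by definition the probability of visiting $(s,a)$ at step $h$ along a trajectory generated by $\pi$ under $P$; hence for any state-action function $g$,
\[
\mathop{\E}_{P,\pi}\brk[s]*{\sum_{h=0}^{H-1} g(s_h,a_h)\,\Big|\,s_0} = \sum_{h=0}^{H-1}\sum_{s\in S}\sum_{a\in A} q_h(s,a|\pi,P)\, g(s,a).
\]
Applying this with $g(s,a) = D^2_H(\widehat{P}(\cdot|s,a), P(\cdot|s,a))$ converts the expectation into the desired occupancy-measure sum over the true dynamics $P$.

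Finally, I would invoke the symmetry of the squared Hellinger distance, $D^2_H(\widehat{P}(\cdot|s,a), P(\cdot|s,a)) = D^2_H(P(\cdot|s,a), \widehat{P}(\cdot|s,a))$, which is immediate from \cref{def:hellinger-main}, to match the exact ordering of arguments written in the statement. Combining the three rewritings gives the claimed inequality. There is essentially no technical obstacle here: all the content is carried by \cref{lemma:refined-val-diff-same-r-Hel}, and the only point meriting attention is that the expectation defining the Hellinger penalty is taken with respect to the \emph{true} dynamics $P$ rather than $\widehat{P}$, which is precisely why the corresponding occupancy measure is $q_h(\cdot|\pi,P)$ and not $q_h(\cdot|\pi,\widehat{P})$.
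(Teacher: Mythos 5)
Your proposal is correct and is exactly the argument the paper intends: the paper states \cref{corl:occupancy-change-of-measure} as an immediate consequence of \cref{lemma:refined-val-diff-same-r-Hel} together with the occupancy-measure representation \cref{eq:value-occupancy-representation}, which is precisely the instantiation and rewriting you carry out (including the correct observation that the Hellinger penalty is weighted by $q_h(\cdot\mid\pi,P)$ because the expectation in the lemma is under the true dynamics, and that $D^2_H$ is symmetric).
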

In addition, we use the following version on the Value Difference Lemma introduced by~\citet{efroni2020optimistic}.
\begin{lemma}[Value-difference, Corollary 1;~\citealp{efroni2020optimistic}]\label{lemma:val-diff-efroni}
    Let $M$, $M'$ be any $H$-finite horizon MDPs. Then, for any two policies $\pi$, $\pi'$ the following holds
    \begingroup\allowdisplaybreaks
    \begin{align*}
        V^{\pi,M}_0(s) -  V^{\pi',M'}_0(s) 
        =&
        \sum_{h=0}^{H-1} \E [ \langle  Q^{\pi,M}_h(s_h, \cdot) , \pi_h(\cdot|s_h) - \pi'_h(\cdot|s_h) \rangle |s_0 = s, \pi',M' ]
        \\
        & +
        \sum_{h=0}^{H-1} \E [ 
        r_h(s_h,a_h) -  r'_h(s_h,a_h)
        + (p_h(\cdot|s_h,a_h)-p'_h(\cdot|s_h,a_h)) V^{\pi,M}_{h+1}
        |s_h = s, \pi',M' ] .       
    \end{align*}
    \endgroup
\end{lemma}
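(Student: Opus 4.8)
The plan is to prove this identity (the two-MDP performance-difference lemma) by a single telescoping argument along trajectories generated by the pair $(\pi',M')$, rather than by unrolling both value functions independently. Throughout I write $\E_{\pi',M'}$ for expectation over a trajectory $(s_0,a_0,s_1,\dots)$ with $s_0=s$, $a_h\sim\pi'_h(\cdot|s_h)$, and $s_{h+1}\sim p'_h(\cdot|s_h,a_h)$, and I adopt the convention $V^{\pi,M}_H\equiv 0$. The two terms in the claim are, respectively, an advantage (inner-product) term and a one-step reward/dynamics-mismatch term, so the goal is to produce them both from a single telescope.

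First I would evaluate $V^{\pi,M}_0(s)$ along the $(\pi',M')$ trajectory. Since $V^{\pi,M}_0(s)$ is deterministic given $s$, a telescope over the value at successive visited states gives
\[
V^{\pi,M}_0(s) = \E_{\pi',M'}\Bigl[\sum_{h=0}^{H-1}\bigl(V^{\pi,M}_h(s_h)-V^{\pi,M}_{h+1}(s_{h+1})\bigr)\Bigr].
\]
The crux is rewriting each one-step difference conditioned on $s_h$. I would expand $V^{\pi,M}_h(s_h)=\langle Q^{\pi,M}_h(s_h,\cdot),\pi_h(\cdot|s_h)\rangle$ and split $\pi_h=(\pi_h-\pi'_h)+\pi'_h$. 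The first piece is exactly the advantage term $\langle Q^{\pi,M}_h(s_h,\cdot),\pi_h(\cdot|s_h)-\pi'_h(\cdot|s_h)\rangle$. For the second piece I would apply the Bellman equation for $(\pi,M)$, namely $Q^{\pi,M}_h(s_h,a)=r_h(s_h,a)+\langle p_h(\cdot|s_h,a),V^{\pi,M}_{h+1}\rangle$, so that $\langle Q^{\pi,M}_h(s_h,\cdot),\pi'_h(\cdot|s_h)\rangle=\E_{a\sim\pi'_h}[\,r_h(s_h,a)+\langle p_h(\cdot|s_h,a),V^{\pi,M}_{h+1}\rangle\,]$.

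The delicate step is the second telescoping term $\E_{\pi',M'}[V^{\pi,M}_{h+1}(s_{h+1})\mid s_h]$: here the next state is sampled from $p'_h$, which yields $\E_{a\sim\pi'_h}[\langle p'_h(\cdot|s_h,a),V^{\pi,M}_{h+1}\rangle]$. Subtracting this from the $p_h$-weighted term above produces precisely the dynamics-mismatch contribution $\langle p_h(\cdot|s_h,a)-p'_h(\cdot|s_h,a),V^{\pi,M}_{h+1}\rangle$, still with $a\sim\pi'_h$. Re-assembling and taking the outer expectation over the $(\pi',M')$ trajectory (which makes $a_h\sim\pi'_h$ consistent with the conditional expectations above) yields
\[
V^{\pi,M}_0(s)=\sum_{h=0}^{H-1}\E_{\pi',M'}\bigl[\langle Q^{\pi,M}_h(s_h,\cdot),\pi_h(\cdot|s_h)-\pi'_h(\cdot|s_h)\rangle + r_h(s_h,a_h)+\langle p_h(\cdot|s_h,a_h)-p'_h(\cdot|s_h,a_h),V^{\pi,M}_{h+1}\rangle\bigr].
\]

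Finally I would subtract $V^{\pi',M'}_0(s)=\E_{\pi',M'}[\sum_{h=0}^{H-1} r'_h(s_h,a_h)]$, which converts the bare reward $r_h$ into the difference $r_h-r'_h$ and delivers the claimed identity verbatim. The main obstacle is purely bookkeeping: keeping straight that the Bellman backup uses the kernel $p_h$ of $M$ while the trajectory evolves under the kernel $p'_h$ of $M'$, so that the two next-state terms combine into the $(p_h-p'_h)$ factor instead of cancelling, and ensuring $V^{\pi,M}_{h+1}$ (not an already-averaged value) appears inside that inner product. An equivalent alternative, should the global telescope prove fiddly to state cleanly, is a backward induction on $h$ establishing the same decomposition for $V^{\pi,M}_h(s)-V^{\pi',M'}_h(s)$ directly; this localizes the argument at the cost of carrying a slightly heavier inductive hypothesis.
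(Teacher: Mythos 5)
Your proof is correct. Note, however, that there is nothing in the paper to compare it against: the lemma is imported verbatim as Corollary~1 of \citet{efroni2020optimistic}, and the paper gives no proof of it. Your telescoping argument --- writing $V^{\pi,M}_0(s)$ along the $(\pi',M')$-trajectory as $\E_{\pi',M'}\bigl[\sum_{h=0}^{H-1}\bigl(V^{\pi,M}_h(s_h)-V^{\pi,M}_{h+1}(s_{h+1})\bigr)\bigr]$ with $V^{\pi,M}_H\equiv 0$, splitting $V^{\pi,M}_h(s_h)=\langle Q^{\pi,M}_h(s_h,\cdot),\pi_h-\pi'_h\rangle+\langle Q^{\pi,M}_h(s_h,\cdot),\pi'_h\rangle$, expanding the second piece via the Bellman equation of $(\pi,M)$, and pairing it against the next-state term generated by $p'_h$ so that the $(p_h-p'_h)$ mismatch survives --- is exactly the standard derivation of this two-MDP performance-difference identity, and is essentially the argument in the cited source. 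All steps check out: the telescope holds pathwise, the tower property legitimately converts the conditional averages over $a\sim\pi'_h(\cdot|s_h)$ into terms evaluated at $a_h$ under the outer expectation, and subtracting $V^{\pi',M'}_0(s)=\E_{\pi',M'}\bigl[\sum_{h=0}^{H-1}r'_h(s_h,a_h)\bigr]$ produces the $r_h-r'_h$ difference. One cosmetic remark: the conditioning ``$s_h=s$'' in the second sum of the statement is a typo (it should read $s_0=s$, as in the original corollary); your proof correctly yields the identity with both sums conditioned on $s_0=s$.
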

We are now have all the required tolls for the regret analysis.

\subsubsection{Probability Measure Transitions}
In the following, we present probability transition measures when applies to the cumulative approximation error for both the rewards and dynamics.
\begin{lemma}[probabilities transition for rewards]\label{lemma:rewards-true-occ-mesure}
    The following holds.
    \begin{align*}
        \sum_{t=1}^T \sum_{h=0}^{H-1}\sum_{s \in S}\sum_{a \in A}
        \hat{q}^t_h(s,a) \cdot (\hat{f}_t(c_t,s,a) - &f_\star(c_t,s,a))^2
        \\
        \leq & 3 \sum_{t=1}^T \mathop{\E}_{\pi^{c_t}_t, P^{c_t}_\star} \Bigg[\sum_{h=0}^{H-1}(\hat{f}_t(c_t,s_h,a_h) - f_\star(c_t,s_h,a_h))^2 \Big|s_0 \Bigg]
        \\
        & +
        9 H^2 \sum_{t=1}^{T}  
        \mathop{\E}_{\pi^{c_t}_t, P^{c_t}_\star} \Bigg[
        \sum_{h=0}^{H-1}
         D^2_H(P^{c_t}_\star(\cdot|s_{h},a_{h}), \widehat{P}^{c_t}_t(\cdot|s_{h},a_{h}))
        \Bigg| s_0\Bigg]
        .
    \end{align*}
\end{lemma}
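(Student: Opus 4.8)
The plan is to reduce the lemma to a single per-round application of the occupancy-measure change of measure stated in \cref{corl:occupancy-change-of-measure}. First I would fix a round $t \in [T]$ and instantiate that corollary with the (non-contextual) policy $\pi = \pi^{c_t}_t$, the true dynamics $P = P^{c_t}_\star$, the approximate dynamics $\widehat{P} = \widehat{P}^{c_t}_t$, and the reward function $r(s,a) = (\hat{f}_t(c_t,s,a) - f_\star(c_t,s,a))^2$. The only hypothesis to verify is that this $r$ is bounded in $[0,1]$: since both $\hat{f}_t(c_t,\cdot,\cdot)$ and $f_\star(c_t,\cdot,\cdot)$ take values in $[0,1]$, their difference lies in $[-1,1]$ and hence its square lies in $[0,1]$, exactly as required.

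With this instantiation, and recalling the notation $\hat{q}^t_h(s,a) = q_h(s,a \mid \pi^{c_t}_t, \widehat{P}^{c_t}_t)$, the left-hand side of \cref{corl:occupancy-change-of-measure} is precisely the per-$t$ summand on the left-hand side of the lemma. The corollary then bounds it by $3$ times $\sum_{h,s,a} q_h(s,a \mid \pi^{c_t}_t, P^{c_t}_\star)\,(\hat{f}_t(c_t,s,a) - f_\star(c_t,s,a))^2$ plus $9H^2$ times $\sum_{h,s,a} q_h(s,a \mid \pi^{c_t}_t, P^{c_t}_\star)\, D^2_H(P^{c_t}_\star(\cdot|s,a), \widehat{P}^{c_t}_t(\cdot|s,a))$, where both sums are now taken against the occupancy measure of the \emph{true} dynamics; note that the argument order of $D_H^2$ already matches the lemma statement, so no rewriting is needed there.

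Next I would convert these occupancy-measure sums back into the trajectory expectations appearing in the lemma. For any bounded $g : S \times A \to \mathbb{R}$, the definition of the occupancy measure (cf.\ \cref{eq:val-with-occ}) gives $\sum_{h=0}^{H-1}\sum_{s,a} q_h(s,a \mid \pi^{c_t}_t, P^{c_t}_\star)\, g(s,a) = \E_{\pi^{c_t}_t, P^{c_t}_\star}[\sum_{h=0}^{H-1} g(s_h,a_h) \mid s_0]$. Applying this with $g(s,a) = (\hat{f}_t(c_t,s,a) - f_\star(c_t,s,a))^2$ and with $g(s,a) = D^2_H(P^{c_t}_\star(\cdot|s,a), \widehat{P}^{c_t}_t(\cdot|s,a))$ rewrites the two right-hand terms as the claimed expected square-loss term and expected Hellinger term for round $t$. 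Summing the resulting per-round inequality over all $t \in [T]$ then yields the lemma.

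I do not expect any genuine obstacle here, as the statement is essentially a bookkeeping wrapper around \cref{corl:occupancy-change-of-measure}. The only points requiring a moment of care are verifying the $[0,1]$-boundedness of the squared reward-approximation error so that the corollary applies, and keeping the two occupancy-measure-to-expectation translations straight: the played/approximate-dynamics measure $\hat{q}^t$ on the left and the true-dynamics measure on the right.
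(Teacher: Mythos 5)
Your proposal is correct and follows essentially the same route as the paper: the paper's proof likewise observes that $(\hat{f}_t(c,s,a)-f_\star(c,s,a))^2 \in [0,1]$ and applies \cref{corl:occupancy-change-of-measure} per round with the played policy, true dynamics, and approximate dynamics, then identifies the resulting true-dynamics occupancy sums with the trajectory expectations and sums over $t$. No gaps.
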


\begin{proof}
    For any context $c \in \C$ and function $\hat{f}_t \in \F$ we have that $\tilde{r}^c(s,a) := (\hat{f}_t(c,s,a) - f_\star(c,s,a))^2$ is bounded in $[0,1]$. 
    Recall $\hat{q}^t_h(s,a) = q_h(s,a|\pi^{c_t}_t, \widehat{P}^{c_t}_t)$ and $q^t_h(s,a):= q_h(s,a|\pi^{c_t}_t, P^{c_t}_\star)$.
    Hence, by~\cref{corl:occupancy-change-of-measure}, the following holds.
    \begingroup\allowdisplaybreaks
    \begin{align*}
        \sum_{t=1}^T \sum_{h=0}^{H-1}\sum_{s \in S}\sum_{a \in A}
        \hat{q}^t_h(s,a) \cdot (\hat{f}_t(c_t,s,a) - &f_\star(c_t,s,a))^2
        \leq 
        3\sum_{t=1}^{T}\sum_{h=0}^{H-1}
        \sum_{s \in S_h}
        \sum_{a \in A}
        q^t_h(s,a)
        (\hat{f}_t(c_t,s,a) - f_\star(c_t,s,a))^2
        \\
        & 
        +
        9H^2
        \sum_{t=1}^{T}
        \sum_{h=0}^{H-1}
        \sum_{s \in S_h}
        \sum_{a \in A}
        q^t_h(s,a)\cdot
         D^2_H(P^{c_t}_\star(\cdot|s_{h},a_{h}), \widehat{P}^{c_t}_t(\cdot|s_{h},a_{h})) 
         .
    \end{align*}
    \endgroup
    Thus, the lemma follows.
\end{proof}

\begin{lemma}[probability transition for dynamics]\label{lemma:dynamics-true-occ-mesure}
    The following holds.
    \begin{align*}
        \sum_{t=1}^T \sum_{h=0}^{H-1}\sum_{s \in S}\sum_{a \in A}
        \hat{q}^t_h(s,a) \cdot& \left(\sum_{s' \in S} \left|\widehat{P}^{c_t}_t(s'|s,a) - P^{c_t}_\star(s'|s,a)\right|\right)^2
        \\
        \leq & 
        48 H^2 \sum_{t=1}^T 
         \mathop{\E}_{\pi^{c_t}_t, P^{c_t}_\star} \Bigg[
        \sum_{h=0}^{H-1}
         D^2_H(P^{c_t}_\star(\cdot|s_{h},a_{h}), \widehat{P}^{c_t}_t(\cdot|s_{h},a_{h}))
        \Bigg| s_0\Bigg]
         .
    \end{align*}
    where $\hat{q}^t_h(s,a): = q_h(s,a|\pi^{c_t}_t, \widehat{P}^{c_t}_t)$ and $q^t_h(s,a):= q_h(s,a|\pi^{c_t}_t, P^{c_t}_\star)$.
\end{lemma}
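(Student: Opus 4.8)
The plan is to mirror the proof of \cref{lemma:rewards-true-occ-mesure}, treating the squared $\ell_1$ dynamics error as a rescaled reward function and invoking the occupancy-measure change of \cref{corl:occupancy-change-of-measure}. First I would observe that for any context $c$ and pair $(s,a)$ the inner quantity is exactly a squared $\ell_1$ distance between two probability vectors,
\[
    \left(\sum_{s' \in S}\left|\widehat{P}^{c}_t(s'\mid s,a) - P^{c}_\star(s'\mid s,a)\right|\right)^2 = \left\|\widehat{P}^{c}_t(\cdot\mid s,a) - P^{c}_\star(\cdot\mid s,a)\right\|_1^2,
\]
and since the $\ell_1$ distance between two distributions is at most $2$, the function $\tilde{r}^{c}(s,a) := \tfrac14\|\widehat{P}^{c}_t(\cdot\mid s,a) - P^{c}_\star(\cdot\mid s,a)\|_1^2$ is bounded in $[0,1]$. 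This normalization is the one mildly delicate point: \cref{corl:occupancy-change-of-measure} requires a reward valued in $[0,1]$, so I must carry the factor $\tfrac14$ through the argument and restore it at the end.

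Next I would apply \cref{corl:occupancy-change-of-measure} for each round $t$ with policy $\pi^{c_t}_t$, true dynamics $P^{c_t}_\star$, approximate dynamics $\widehat{P}^{c_t}_t$, and reward $\tilde{r}^{c_t}$. Recognizing that $q_h(s,a\mid \pi^{c_t}_t,\widehat{P}^{c_t}_t)=\hat{q}^t_h(s,a)$ and $q_h(s,a\mid \pi^{c_t}_t,P^{c_t}_\star)=q^t_h(s,a)$, this bounds $\sum_{h,s,a}\hat{q}^t_h(s,a)\tilde{r}^{c_t}(s,a)$ by $3\sum_{h,s,a}q^t_h(s,a)\tilde{r}^{c_t}(s,a)$ plus $9H^2\sum_{h,s,a}q^t_h(s,a)D^2_H(P^{c_t}_\star(\cdot\mid s,a),\widehat{P}^{c_t}_t(\cdot\mid s,a))$. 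The whole point of this step is to move the summation weights from the approximate occupancy $\hat{q}^t$ (against which we have no statistical control) to the true occupancy $q^t$ (which matches the measure in our concentration bound \cref{lemma:llrs-regret}).

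Then I would convert the remaining $\tilde{r}^{c_t}$ term into Hellinger form using \cref{eq:l-1-hellinger}: since $\|\cdot\|_1^2 \le 4 D^2_H$, we have $\tilde{r}^{c_t}(s,a)\le D^2_H(P^{c_t}_\star(\cdot\mid s,a),\widehat{P}^{c_t}_t(\cdot\mid s,a))$, so both terms on the right-hand side are controlled by the expected squared Hellinger distance under the true measure $q^t$. Multiplying back by $4$ and summing over $t$, the accumulated constant is $4(3+9H^2)=12+36H^2$, which I would bound by $48H^2$ using $H\ge 1$ to absorb $12$ into $12H^2$. Finally I would rewrite $\sum_{h,s,a}q^t_h(s,a)D^2_H(\cdot)$ as the trajectory expectation $\mathop{\E}_{\pi^{c_t}_t,P^{c_t}_\star}\!\big[\sum_{h=0}^{H-1}D^2_H(P^{c_t}_\star(\cdot\mid s_h,a_h),\widehat{P}^{c_t}_t(\cdot\mid s_h,a_h))\mid s_0\big]$ by the definition of the occupancy measure, yielding the stated inequality. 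The argument is essentially routine; the only things to watch are the $[0,1]$ rescaling of the reward and the bookkeeping of constants to land exactly on $48H^2$.
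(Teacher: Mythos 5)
Your proposal is correct and follows essentially the same route as the paper: apply the occupancy-measure change (\cref{corl:occupancy-change-of-measure}) to the squared $\ell_1$ dynamics error viewed as a reward, then convert the remaining $\ell_1$ term to Hellinger via $\|\cdot\|_1^2\le 4D_H^2$, arriving at $12+36H^2\le 48H^2$. The only cosmetic difference is that you rescale the reward by $\tfrac14$ to meet the $[0,1]$ hypothesis literally, whereas the paper applies the change of measure directly to the $[0,4]$-valued function with the additive constant scaled to $36H^2$; both yield the identical bound.
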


\begin{proof}
    For any context $c \in \C$ and context-dependent dynamics $\widehat{P}_t \in \Fp$ we have that 
    \[
        \tilde{r}^c(s,a) := \left(\sum_{s' \in S} \left|\widehat{P}^{c}_t(s'|s,a) - P^{c}_\star(s'|s,a)\right|\right)^2
    \]
     is bounded in $[0,4]$. Hence, by~\cref{corl:occupancy-change-of-measure}, the following holds.
    \begingroup\allowdisplaybreaks
    \begin{align*}
        &\sum_{t=1}^T \sum_{h=0}^{H-1}\sum_{s \in S}\sum_{a \in A}
        \hat{q}^t_h(s,a) \cdot \left(\sum_{s' \in S} \left|\widehat{P}^{c_t}_t(s'|s,a) - P^{c_t}_\star(s'|s,a)\right|\right)^2
        \\
        \leq & 
        3\sum_{t=1}^{T}\sum_{h=0}^{H-1}
        \sum_{s \in S_h}
        \sum_{a \in A}
        q^t_h(s,a)
        \left(\sum_{s' \in S} \left|\widehat{P}^{c_t}_t(s'|s,a) - P^{c_t}_\star(s'|s,a)\right|\right)^2
        \\
        & 
        +
        36H^2
        \sum_{t=1}^{T}
        \sum_{h=0}^{H-1}
        \sum_{s \in S_h}
        \sum_{a \in A}
        q^t_h(s,a)\cdot
         D^2_H(P^{c_t}_\star(\cdot|s_{h},a_{h}), \widehat{P}^{c_t}_t(\cdot|s_{h},a_{h})) 
         \\
         \tag{$\|\cdot\|^2_1 \leq 4D^2_H$}
        \leq & 
        12\sum_{t=1}^{T}\sum_{h=0}^{H-1}
        \sum_{s \in S_h}
        \sum_{a \in A}
        q^t_h(s,a)
        D^2_H(P^{c_t}_\star(\cdot|s_{h},a_{h}), \widehat{P}^{c_t}_t(\cdot|s_{h},a_{h})) 
        \\
        & 
        +
        36H^2
        \sum_{t=1}^{T}
        \sum_{h=0}^{H-1}
        \sum_{s \in S_h}
        \sum_{a \in A}
        q^t_h(s,a)\cdot
         D^2_H(P^{c_t}_\star(\cdot|s_{h},a_{h}), \widehat{P}^{c_t}_t(\cdot|s_{h},a_{h})) 
         \\
         \leq &
         48 H^2 \sum_{t=1}^T 
        \mathop{\E}_{\pi^{c_t}_t, P^{c_t}_\star} \Bigg[
        \sum_{h=0}^{H-1}
         D^2_H(P^{c_t}_\star(\cdot|s_{h},a_{h}), \widehat{P}^{c_t}_t(\cdot|s_{h},a_{h}))
        \Bigg| s_0\Bigg]
         .
    \end{align*}
    \endgroup
    Thus, the lemma follows.
\end{proof}

\subsubsection{Value Difference Bounds}
In the following we derive three value difference bounds, which will be used to bound the regret.
\begin{lemma}\label{lemma:bound-of-term-1-UD}
    The following holds for any $\hat{\gamma} >0$.
    \begingroup\allowdisplaybreaks
    \begin{align*}
        \sum_{t=1}^T 
        V^{\pi^{c_t}_\star}_{\M(c_t)}(s_0)
        -
        V^{\pi^{c_t}_\star}_{\Mhat_t(c_t)}(s_0)
        \leq & 
         \sum_{t=1}^T \sum_{h=0}^{H-1}\sum_{s \in S}\sum_{a \in A}
        \frac{\hat{q}^{t,\star}_h(s,a)}{\hat{\gamma} \cdot \hat{q}^t_h(s,a)} 
        \\
        & +
        \frac{\hat{\gamma}}{2}
        \sum_{t=1}^T \sum_{h=0}^{H-1}\sum_{s \in S}\sum_{a \in A}
        \hat{q}^t_h(s,a) \cdot \left(\hat{f}_t(c_t,s,a) - f_\star(c_t,s,a)\right)^2
        \\
        & +
        \frac{\hat{\gamma} \cdot H^2 }{2}
        \sum_{t=1}^T \sum_{h=0}^{H-1}\sum_{s \in S}\sum_{a \in A}
        \hat{q}^t_h(s,a) \cdot \left(\sum_{s' \in S} \left|\widehat{P}^{c_t}_t(s'|s,a) - P^{c_t}_\star(s'|s,a)\right|\right)^2,
    \end{align*}
    \endgroup
    where $\hat{q}^t_h(s,a) := q_h(s,a|\pi^{c_t}_t,\widehat{P}^{c_t}_t)$ and $\hat{q}^{t,\star}_h(s,a):= q_h(s,a|\pi^{c_t}_\star, \widehat{P}^{c_t}_t)$ is the occupancy measure defined by an optimal context-dependent policy of the true CMDP $\pi_\star = (\pi^c_\star)_{c \in \C}$. In addition, $\Mhat_t (c) := (S,A,\widehat{P}^c_t, \hat{f}_t(c;\cdot,\cdot),s_0, H)$.
\end{lemma}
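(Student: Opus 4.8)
The plan is to isolate a single episode $t$, invoke the value-difference lemma, and then convert the ``off-policy'' occupancy measure $\hat{q}^{t,\star}$ into the ``on-policy'' measure $\hat{q}^t$ by a weighted AM-GM step, before finally summing over $t$.

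\textbf{Step 1 (value-difference lemma).} Fix $t\in[T]$ and apply \cref{lemma:val-diff-efroni} with $\pi=\pi'=\pi^{c_t}_\star$, $M=\M(c_t)$ and $M'=\Mhat_t(c_t)$. Since the two policies coincide, the inner-product term $\langle Q_h,\pi_h-\pi'_h\rangle$ vanishes identically, and the remaining expectation is taken along trajectories generated by $\pi^{c_t}_\star$ under the approximated dynamics $\widehat{P}^{c_t}_t$, i.e.\ exactly against $\hat{q}^{t,\star}$. Recalling that the true reward is $f_\star(c_t,\cdot,\cdot)$ whereas $\Mhat_t$ uses $\hat{f}_t(c_t,\cdot,\cdot)$, and that the true dynamics is $P^{c_t}_\star$ whereas $\Mhat_t$ uses $\widehat{P}^{c_t}_t$, this gives
\begin{align*}
    &V^{\pi^{c_t}_\star}_{\M(c_t)}(s_0)-V^{\pi^{c_t}_\star}_{\Mhat_t(c_t)}(s_0)
    \\
    &=\sum_{h,s,a}\hat{q}^{t,\star}_h(s,a)\Big[(f_\star-\hat{f}_t)(c_t,s,a)
    +\sum_{s'\in S}\big(P^{c_t}_\star(s'\mid s,a)-\widehat{P}^{c_t}_t(s'\mid s,a)\big)\,V^{\pi^{c_t}_\star}_{\M(c_t),h+1}(s')\Big].
\end{align*}
Because the rewards lie in $[0,1]$, the next-step value obeys $0\le V^{\pi^{c_t}_\star}_{\M(c_t),h+1}\le H$, so H\"older's inequality bounds the transition term by $H\,\|P^{c_t}_\star(\cdot\mid s,a)-\widehat{P}^{c_t}_t(\cdot\mid s,a)\|_1$. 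This reproduces the first inequality of the sketch of \cref{lemma:val-diff-1-bound}.

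\textbf{Step 2 (weighted AM-GM and change of measure).} The key structural fact is that the log-barrier in \cref{eq:max-problem-unknown-dynamics} forces the optimal occupancy measure, and hence $\hat{q}^t_h(s,a)=q_h(s,a\mid\pi^{c_t}_t,\widehat{P}^{c_t}_t)$, to be strictly positive on every $(h,s,a)$, so I may divide by it. For each triple I write $\hat{q}^{t,\star}_h\,x=\big(\hat{q}^{t,\star}_h/\sqrt{\hat{\gamma}\hat{q}^t_h}\big)\cdot\big(\sqrt{\hat{\gamma}\hat{q}^t_h}\,x\big)$ and apply $uv\le\tfrac12 u^2+\tfrac12 v^2$, once with $x=(f_\star-\hat{f}_t)(c_t,s,a)$ and once with $x=H\|P^{c_t}_\star(\cdot\mid s,a)-\widehat{P}^{c_t}_t(\cdot\mid s,a)\|_1$. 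Each application produces the leading term $\tfrac12(\hat{q}^{t,\star}_h)^2/(\hat{\gamma}\hat{q}^t_h)$; using $(\hat{q}^{t,\star}_h)^2\le\hat{q}^{t,\star}_h$ (valid since $\hat{q}^{t,\star}_h\in[0,1]$) and adding the two halves recovers the full weight $\hat{q}^{t,\star}_h/(\hat{\gamma}\hat{q}^t_h)$, while the two ``$v^2$'' halves yield precisely $\tfrac{\hat{\gamma}}{2}\hat{q}^t_h(f_\star-\hat{f}_t)^2$ and $\tfrac{\hat{\gamma}H^2}{2}\hat{q}^t_h\|P^{c_t}_\star-\widehat{P}^{c_t}_t\|_1^2$.

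\textbf{Step 3 (sum over episodes).} Summing the resulting per-episode inequality over $t=1,\dots,T$ gives exactly the three claimed sums, for any $\hat{\gamma}>0$.

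The only genuinely delicate point is the bookkeeping in Step~2: one must verify the strict positivity $\hat{q}^t_h>0$, which is precisely where the log-barrier regularization enters, and one must arrange the AM-GM split so that the two ``$\tfrac12$'' contributions to the leading term combine, via $(\hat{q}^{t,\star}_h)^2\le\hat{q}^{t,\star}_h$, into the stated coefficient $1/\hat{\gamma}$ with the correct $\hat{\gamma}/2$ and $\hat{\gamma}H^2/2$ factors on the two error terms. Everything else is the standard value-difference decomposition together with H\"older's inequality.
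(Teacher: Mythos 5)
Your proposal is correct and follows essentially the same route as the paper: the value-difference lemma with $\pi=\pi'=\pi^{c_t}_\star$, H\"older's inequality to extract the $H\,\|\cdot\|_1$ factor, and a weighted AM-GM to trade $\hat{q}^{t,\star}$ for $\hat{q}^t$ at the cost of the $\hat{q}^{t,\star}/(\hat{\gamma}\hat{q}^t)$ term. The only (immaterial) difference is where you place the factor $\hat{q}^{t,\star}$ before applying AM-GM — you use $(\hat{q}^{t,\star})^2\le\hat{q}^{t,\star}$ on the first square whereas the paper splits $\sqrt{\hat{q}^{t,\star}}$ into both factors and drops $\hat{q}^{t,\star}\le 1$ on the second — and your explicit remark that the log-barrier guarantees $\hat{q}^t_h>0$ is a point the paper leaves implicit.
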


\begin{proof}
    Consider the following derivation.
    \begingroup\allowdisplaybreaks
    \begin{align*}
        &\sum_{t=1}^T 
        V^{\pi^{c_t}_\star}_{\M(c_t)}(s_0)
        -
        V^{\pi^{c_t}_\star}_{\Mhat_t(c_t)}(s_0)
        \\
        \tag{Value Difference,~\cref{lemma:val-diff-efroni}}
        = &
        \sum_{t=1}^T
        \mathop{\E}_{\pi^{c_t}_\star,\widehat{P}^{c_t}_t}
        \Bigg[ 
        \sum_{h=0}^{H-1} 
        \Bigg(\Big(\hat{f}_t(c_t,s_h,a_h) - f_\star(c_t,s_h,a_h)\Big) 
        \\
        & + \sum_{s' \in S}\left(\widehat{P}^{c_t}_t(s'|s^t_h,a^t_h) - P^{c_t}_\star(s'|s^t_h,a^t_h)\right)\cdot V^{\pi^{c_t}_\star}_{\M(c_t), {h+1}}(s') \Bigg)
        \Bigg|s_0\Bigg]
        \\
        = &
        \sum_{t=1}^T \sum_{h=0}^{H-1}\sum_{s \in S}\sum_{a \in A}\hat{q}^{t,\star}_h(s,a)\left( f_\star(c_t,s,a)-\hat{f}_t(c_t,s,a) \right)
        \\
        & +
        \sum_{t=1}^T \sum_{h=0}^{H-1}\sum_{s \in S}\sum_{a \in A}\hat{q}^{t,\star}_h(s,a) \sum_{s' \in S}\left( P^{c_t}_\star(s'|s,a)-\widehat{P}^{c_t}_t(s'|s,a) \right)V^{\pi^{c_t}_\star}_{\M(c_t),{h+1}}(s')
        \\
        \leq &
        \sum_{t=1}^T \sum_{h=0}^{H-1}\sum_{s \in S}\sum_{a \in A}\hat{q}^{t,\star}_h(s,a)\left( f_\star(c_t,s,a)-\hat{f}_t(c_t,s,a) \right)
        \\
        & +
       H \cdot \sum_{t=1}^T \sum_{h=0}^{H-1}\sum_{s \in S}\sum_{a \in A}\hat{q}^{t,\star}_h(s,a) \sum_{s' \in S}   \left| P^{c_t}_\star(s'|s,a)-\widehat{P}^{c_t}_t(s'|s,a) \right|
        \\
        = &
        \sum_{t=1}^T \sum_{h=0}^{H-1}\sum_{s \in S}\sum_{a \in A}\hat{q}^{t,\star}_h(s,a) \cdot \sqrt{\frac{\hat{\gamma} \cdot \hat{q}^t_h(s,a)}{\hat{\gamma} \cdot \hat{q}^t_h(s,a)}} \left(f_\star(c_t,s,a)-\hat{f}_t(c_t,s,a) \right)
        \\
        & +
        H \cdot \sum_{t=1}^T \sum_{h=0}^{H-1}\sum_{s \in S}\sum_{a \in A}\hat{q}^{t,\star}_h(s,a)\cdot \sqrt{\frac{\hat{\gamma} \cdot  \hat{q}^t_h(s,a)}{\hat{\gamma} \cdot \hat{q}^t_h(s,a)}} \sum_{s' \in S}  \left| P^{c_t}_\star(s'|s,a)-\widehat{P}^{c_t}_t(s'|s,a) \right|
        \\
        = &
        \sum_{t=1}^T \sum_{h=0}^{H-1}\sum_{s \in S}\sum_{a \in A}
        \sqrt{\frac{\hat{q}^{t,\star}_h(s,a)}{\hat{\gamma} \cdot \hat{q}^t_h(s,a)}} 
         \cdot 
         \sqrt{\hat{q}^{t,\star}_h(s,a) \cdot \hat{\gamma} \cdot \hat{q}^t_h(s,a)} \left( f_\star(c_t,s,a)-\hat{f}_t(c_t,s,a) \right)
        \\
        & +
        \sum_{t=1}^T \sum_{h=0}^{H-1}\sum_{s \in S}\sum_{a \in A} \sqrt{\frac{\hat{q}^{t,\star}_h(s,a)}{\hat{\gamma} \cdot \hat{q}^t_h(s,a)}} 
        \cdot 
        \sqrt{\hat{q}^{t,\star}_h(s,a) \cdot \hat{\gamma} \cdot \hat{q}^t_h(s,a)} \cdot H \cdot
        \left(\sum_{s' \in S}
        \left| P^{c_t}_\star(s'|s,a)-\widehat{P}^{c_t}_t(s'|s,a) \right|\right)
        \\
        \tag{Since for all $a,b \in \R$, $a b \leq \frac{1}{2}(a^2 +b^2)$ by AM-GM}
        \leq &
        \frac{1}{2}\sum_{t=1}^T \sum_{h=0}^{H-1}\sum_{s \in S}\sum_{a \in A}
        \left(\frac{\hat{q}^{t,\star}_h(s,a)}{\hat{\gamma} \cdot \hat{q}^t_h(s,a)} 
        +
        \hat{q}^{t,\star}_h(s,a) \cdot \hat{\gamma} \cdot \hat{q}^t_h(s,a) \cdot \left(\hat{f}_t(c_t,s,a) - f_\star(c_t,s,a)\right)^2\right)
        \\
        & +
        \frac{1}{2}\sum_{t=1}^T \sum_{h=0}^{H-1}\sum_{s \in S}\sum_{a \in A} \left( \frac{\hat{q}^{t,\star}_h(s,a)}{\hat{\gamma} \cdot \hat{q}^t_h(s,a)} + 
        \hat{q}^{t,\star}_h(s,a) \cdot \hat{\gamma} \cdot \hat{q}^t_h(s,a) \cdot
        H^2 \cdot \left(\sum_{s' \in S}
        \left|P^{c_t}_\star(s'|s,a)-\widehat{P}^{c_t}_t(s'|s,a) \right|\right)^2\right)
         \\
         \leq &
         \sum_{t=1}^T \sum_{h=0}^{H-1}\sum_{s \in S}\sum_{a \in A}
        \frac{\hat{q}^{t,\star}_h(s,a)}{\hat{\gamma} \cdot \hat{q}^t_h(s,a)} 
        \\
        & +
        \frac{\hat{\gamma}}{2}
        \sum_{t=1}^T \sum_{h=0}^{H-1}\sum_{s \in S}\sum_{a \in A}
        \hat{q}^t_h(s,a) \cdot \left(\hat{f}_t(c_t,s,a) - f_\star(c_t,s,a)\right)^2
        \\
        & +
        \frac{\hat{\gamma} \cdot H^2 }{2}
        \sum_{t=1}^T \sum_{h=0}^{H-1}\sum_{s \in S}\sum_{a \in A}
        \hat{q}^t_h(s,a) \cdot \left(\sum_{s' \in S} \left|\widehat{P}^{c_t}_t(s'|s,a) - P^{c_t}_\star(s'|s,a)\right|\right)^2,
    \end{align*}
    \endgroup
    as stated.
\end{proof}

\begin{corollary}[restatement of~\cref{lemma:val-diff-1-bound}]\label{corl:bound-of-term-1-UD}
    The following holds for any $\hat{\gamma}>0$. 
    \begingroup\allowdisplaybreaks
    \begin{align*}
        \sum_{t=1}^T 
        V^{\pi^{c_t}_\star}_{\M(c_t)}(s_0)
        -
        V^{\pi^{c_t}_\star}_{\Mhat_t(c_t)}(s_0)
        \leq & 
        \sum_{t=1}^T \sum_{h=0}^{H-1}\sum_{s \in S}\sum_{a \in A}
        \frac{\hat{q}^{t,\star}_h(s,a)}{\hat{\gamma} \cdot \hat{q}^t_h(s,a)} 
        \\
        & +
        2
        \hat{\gamma}\sum_{t=1}^T \mathop{\E}_{\pi^{c_t}_t, P^{c_t}_\star} \Bigg[\sum_{h=0}^{H-1}(\hat{f}_t(c_t,s_h,a_h) - f_\star(c_t,s_h,a_h))^2 \Big|s_0 \Bigg]
        \\
        & +
        29
        \hat{\gamma} H^4 \sum_{t=1}^T 
         \mathop{\E}_{\pi^{c_t}_t, P^{c_t}_\star} \Bigg[
        \sum_{h=0}^{H-1}
         D^2_H(P^{c_t}_\star(\cdot|s_{h},a_{h}), \widehat{P}^{c_t}_t(\cdot|s_{h},a_{h}))
        \Bigg| s_0\Bigg]
        .
    \end{align*}
    \endgroup
\end{corollary}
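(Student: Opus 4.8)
The plan is to derive the corollary directly from the intermediate bound of \cref{lemma:bound-of-term-1-UD} by feeding its two error terms through the probability-measure transition results. \cref{lemma:bound-of-term-1-UD} already bounds the left-hand side $\sum_t V^{\pi^{c_t}_\star}_{\M(c_t)}(s_0) - V^{\pi^{c_t}_\star}_{\Mhat_t(c_t)}(s_0)$ by the sum of three pieces: the regularization term $\sum_{t}\sum_{h,s,a}\hat{q}^{t,\star}_h(s,a)/(\hat{\gamma}\,\hat{q}^t_h(s,a))$, a squared-reward-error term weighted by $\hat{\gamma}/2$, and a squared-$\ell_1$ dynamics-error term weighted by $\hat{\gamma}H^2/2$. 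The first piece is \emph{verbatim} the regularization term appearing in the corollary, so it is carried over unchanged. Hence the only work is to replace the remaining two pieces — both expectations taken under the \emph{approximate} occupancy measures $\hat{q}^t_h = q_h(\cdot\mid\pi^{c_t}_t,\widehat{P}^{c_t}_t)$ — by the true-dynamics quantities $\Esq$ and $\Ehel$.

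For the reward-error term I would invoke \cref{lemma:rewards-true-occ-mesure}, which bounds $\sum_{t}\sum_{h,s,a}\hat{q}^t_h(s,a)(\hat{f}_t(c_t,s,a)-f_\star(c_t,s,a))^2$ by $3\Esq + 9H^2\Ehel$; for the dynamics-error term I would invoke \cref{lemma:dynamics-true-occ-mesure}, which bounds $\sum_{t}\sum_{h,s,a}\hat{q}^t_h(s,a)\bigl(\sum_{s'}|\widehat{P}^{c_t}_t(s'|s,a)-P^{c_t}_\star(s'|s,a)|\bigr)^2$ by $48H^2\Ehel$. Substituting into the two weighted sums yields a contribution of $\tfrac{3}{2}\hat{\gamma}\,\Esq + \tfrac{9}{2}\hat{\gamma}H^2\,\Ehel$ from the reward term and $24\hat{\gamma}H^4\,\Ehel$ from the dynamics term. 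Collecting constants, the $\Esq$ coefficient is $\tfrac{3}{2}\hat{\gamma}\le 2\hat{\gamma}$, and the $\Ehel$ coefficient is $\tfrac{9}{2}\hat{\gamma}H^2 + 24\hat{\gamma}H^4 \le 29\hat{\gamma}H^4$ using $H\ge 1$, which is exactly the stated bound.

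I do not expect any genuine obstacle here: the conceptual heavy lifting — the value-difference decomposition together with the AM--GM split in \cref{lemma:bound-of-term-1-UD}, and the Hellinger change-of-measure of \cref{corl:occupancy-change-of-measure} underlying both transition lemmas — has already been carried out upstream, so this step is essentially constant bookkeeping. The only mild subtlety worth flagging is keeping track of the boundedness ranges when applying the change of measure: the squared reward error lies in $[0,1]$, whereas the squared $\ell_1$ dynamics error lies in $[0,4]$, and it is this larger range (combined with the $\|\cdot\|_1^2 \le 4D_H^2$ inequality) that produces the factor $48$ rather than $12$ in \cref{lemma:dynamics-true-occ-mesure}, and hence the final $H^4$ scaling.
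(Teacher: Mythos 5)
Your proposal is correct and follows exactly the same route as the paper's own proof: invoke \cref{lemma:bound-of-term-1-UD}, push the reward-error and dynamics-error terms through \cref{lemma:rewards-true-occ-mesure} and \cref{lemma:dynamics-true-occ-mesure} respectively, and collect the constants $\tfrac{3}{2}\hat{\gamma}\le 2\hat{\gamma}$ and $\tfrac{9}{2}\hat{\gamma}H^2+24\hat{\gamma}H^4\le 29\hat{\gamma}H^4$. Your bookkeeping of the boundedness ranges ($[0,1]$ versus $[0,4]$) matches the source of the factor $48$ in the paper as well.
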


\begin{proof}
    Recall that $\hat{q}^t_h(s,a) := q_h(s,a|\pi^{c_t}_t,\widehat{P}^{c_t}_t)$ and $\hat{q}^{t,\star}_h(s,a):= q_h(s,a|\pi^{c_t}_\star, \widehat{P}^{c_t}_t)$.
    Consider the following derivation.
    \begingroup\allowdisplaybreaks
    \begin{align*}
        &\sum_{t=1}^T 
        V^{\pi^{c_t}_\star}_{\M(c_t)}(s_0)
        -
        V^{\pi^{c_t}_\star}_{\Mhat_t(c_t)}(s_0)
        \\
        \tag{By~\cref{lemma:bound-of-term-1-UD}}
        \leq & 
         \sum_{t=1}^T \sum_{h=0}^{H-1}\sum_{s \in S}\sum_{a \in A}
        \frac{\hat{q}^{t,\star}_h(s,a)}{\hat{\gamma} \cdot \hat{q}^t_h(s,a)} 
        \\
        & +
        \frac{\hat{\gamma}}{2}
        \sum_{t=1}^T \sum_{h=0}^{H-1}\sum_{s \in S}\sum_{a \in A}
        \hat{q}^t_h(s,a) \cdot \left(\hat{f}_t(c_t,s,a) - f_\star(c_t,s,a)\right)^2
        \\
        & +
        \frac{\hat{\gamma} \cdot H^2 }{2}
        \sum_{t=1}^T \sum_{h=0}^{H-1}\sum_{s \in S}\sum_{a \in A}
        \hat{q}^t_h(s,a) \cdot \left(\sum_{s' \in S} \left|\widehat{P}^{c_t}_t(s'|s,a) - P^{c_t}_\star(s'|s,a)\right|\right)^2
        \\
        \leq &
        \sum_{t=1}^T \sum_{h=0}^{H-1}\sum_{s \in S}\sum_{a \in A}
        \frac{\hat{q}^{t,\star}_h(s,a)}{\hat{\gamma} \cdot \hat{q}^t_h(s,a)} 
        \\
        \tag{By~\cref{lemma:rewards-true-occ-mesure}}
        & +
        \frac{3}{2}
        \hat{\gamma}\sum_{t=1}^T \mathop{\E}_{\pi^{c_t}_t, P^{c_t}_\star} \Bigg[\sum_{h=0}^{H-1}(\hat{f}_t(c_t,s_h,a_h) - f_\star(c_t,s_h,a_h))^2 \Big|s_0 \Bigg]
        \\
        & +
        \frac{9}{2} \hat{\gamma} H^2  \sum_{t=1}^{T}  
        \mathop{\E}_{\pi^{c_t}_t, P^{c_t}_\star} \Bigg[
        \sum_{h=0}^{H-1}
         D^2_H(P^{c_t}_\star(\cdot|s_{h},a_{h}), \widehat{P}^{c_t}_t(\cdot|s_{h},a_{h}))
        \Bigg| s_0\Bigg]
        \\
        \tag{By~\cref{lemma:dynamics-true-occ-mesure}}
        & +
        \frac{48}{2}
        \hat{\gamma} H^4 \sum_{t=1}^T 
         \mathop{\E}_{\pi^{c_t}_t, P^{c_t}_\star} \Bigg[
        \sum_{h=0}^{H-1}
         D^2_H(P^{c_t}_\star(\cdot|s_{h},a_{h}), \widehat{P}^{c_t}_t(\cdot|s_{h},a_{h}))
        \Bigg| s_0\Bigg]
        \\
        \leq &
        \sum_{t=1}^T \sum_{h=0}^{H-1}\sum_{s \in S}\sum_{a \in A}
        \frac{\hat{q}^{t,\star}_h(s,a)}{\hat{\gamma} \cdot \hat{q}^t_h(s,a)} 
        \\
        & +
        2
        \hat{\gamma}\sum_{t=1}^T \mathop{\E}_{\pi^{c_t}_t, P^{c_t}_\star} \Bigg[\sum_{h=0}^{H-1}(\hat{f}_t(c_t,s_h,a_h) - f_\star(c_t,s_h,a_h))^2 \Big|s_0 \Bigg]
        \\
        & +
        29
        \hat{\gamma} H^4 \sum_{t=1}^T 
         \mathop{\E}_{\pi^{c_t}_t, P^{c_t}_\star} \Bigg[
        \sum_{h=0}^{H-1}
         D^2_H(P^{c_t}_\star(\cdot|s_{h},a_{h}), \widehat{P}^{c_t}_t(\cdot|s_{h},a_{h}))
        \Bigg| s_0\Bigg]
        .
    \end{align*}
    \endgroup
\end{proof}

\begin{lemma}[restatement of~\cref{lemma:comulative-bound-term-2-UD} ]\label{lemma:bound-of-term-2-UD}
    For every round $t \in [T]$ and a context $c_t \in \C$, the optimal solution $\hat{q}^t$ for the maximization problem in~\cref{eq:max-problem-unknown-dynamics} satisfies the following,
    \begin{align*}
        V^{\pi^{c_t}_\star}_{\Mhat_t(c_t)}(s_0)
        -
        V^{\pi^{c_t}_t}_{\Mhat_t(c_t)}(s_0)
         \leq 
         \frac{H|S||A|}{\gamma}- \sum_{h=0}^{H-1}\sum_{s \in S}\sum_{a \in A} \frac{\hat{q}^{t,\star}_h(s,a)}{\gamma \cdot \hat{q}^t_h(s,a)} ,
    \end{align*}
    where $\hat{q}^t_h(s,a) := q_h(s,a|\pi^{c_t}_t,\widehat{P}^{c_t}_t)$ and $\hat{q}^{t,\star}_h(s,a):= q_h(s,a|\pi^{c_t}_\star, \widehat{P}^{c_t}_t)$ is the occupancy measure defined by an optimal context-dependent policy of the true CMDP $\pi_\star = (\pi^c_\star)_{c \in \C}$, recalling $\Mhat_t (c) := (S,A,\widehat{P}^c_t, \hat{f}_t(c;\cdot,\cdot),s_0, H)$.
\end{lemma}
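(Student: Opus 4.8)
The plan is to express both value functions via the occupancy-measure representation \eqref{eq:val-with-occ} and then exploit the fact that $\hat{q}^t$ is the \emph{global maximizer} of the strictly concave objective $\hat{L}_t(\cdot\,;c_t)$ of \eqref{eq:max-problem-unknown-dynamics} over the convex feasible set $\mu(\widehat{P}^{c_t}_t)$. First I would note that the rewards in $\Mhat_t(c_t)$ are exactly $\hat{f}_t(c_t,\cdot,\cdot)$, and that $\hat{q}^{t,\star}$ and $\hat{q}^t$ are by definition the occupancy measures of $\pi^{c_t}_\star$ and $\pi^{c_t}_t$ under the \emph{same} approximate dynamics $\widehat{P}^{c_t}_t$. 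Hence \eqref{eq:val-with-occ} gives
\begin{align*}
V^{\pi^{c_t}_\star}_{\Mhat_t(c_t)}(s_0)-V^{\pi^{c_t}_t}_{\Mhat_t(c_t)}(s_0)
=\sum_{h=0}^{H-1}\sum_{s\in S}\sum_{a\in A}\bigl(\hat{q}^{t,\star}_h(s,a)-\hat{q}^t_h(s,a)\bigr)\,\hat{f}_t(c_t,s,a),
\end{align*}
so it suffices to upper bound this linear functional.

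Next I would invoke the first-order optimality condition for concave maximization over a convex set \citep{boyd2004convex}: since $\hat{q}^t$ maximizes $\hat{L}_t(\cdot\,;c_t)$ and $\hat{q}^{t,\star}\in\mu(\widehat{P}^{c_t}_t)$ is feasible (the feasible set is convex, as recalled in the preliminaries), we have $\langle\nabla\hat{L}_t(\hat{q}^t;c_t),\,\hat{q}^{t,\star}-\hat{q}^t\rangle\le 0$. The $(h,s,a)$ coordinate of the gradient is $\partial\hat{L}_t/\partial q_h(s,a)=\hat{f}_t(c_t,s,a)+1/(\gamma\,q_h(s,a))$, so, evaluated at $\hat{q}^t$, this reads
\begin{align*}
\sum_{h,s,a}\bigl(\hat{q}^{t,\star}_h(s,a)-\hat{q}^t_h(s,a)\bigr)\Bigl(\hat{f}_t(c_t,s,a)+\frac{1}{\gamma\,\hat{q}^t_h(s,a)}\Bigr)\le 0.
\end{align*}
Finally I would peel off the barrier terms: the coefficient of $1/\gamma$ is $\sum_{h,s,a}\bigl(\hat{q}^{t,\star}_h(s,a)/\hat{q}^t_h(s,a)-1\bigr)$, and since $\sum_{h,s,a}1=H|S||A|$ (the index $h$ ranges over $\{0,\dots,H-1\}$), rearranging yields
\begin{align*}
\sum_{h,s,a}\bigl(\hat{q}^{t,\star}_h(s,a)-\hat{q}^t_h(s,a)\bigr)\hat{f}_t(c_t,s,a)
\le\frac{H|S||A|}{\gamma}-\sum_{h,s,a}\frac{\hat{q}^{t,\star}_h(s,a)}{\gamma\,\hat{q}^t_h(s,a)}.
\end{align*}
Combining this with the value identity from the first paragraph gives exactly the claimed bound.

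The only delicate point I anticipate is justifying that the variational inequality applies with a finite gradient. The log-barrier in \eqref{eq:max-problem-unknown-dynamics} forces $\hat{q}^t_h(s,a)>0$ for every $(h,s,a)$, so $\nabla\hat{L}_t(\hat{q}^t;c_t)$ is well-defined and the first-order condition $\langle\nabla\hat{L}_t(\hat{q}^t;c_t),\,q-\hat{q}^t\rangle\le 0$ holds for all feasible $q$; I would also confirm that $\hat{q}^{t,\star}$ genuinely lies in $\mu(\widehat{P}^{c_t}_t)$, which it does since it is the occupancy measure of a valid policy under $\widehat{P}^{c_t}_t$. Everything else is bookkeeping—the count $\sum_{h,s,a}1=H|S||A|$ and the recognition of the value difference—so no real obstacle remains beyond this optimality argument.
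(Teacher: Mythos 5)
Your proposal is correct and follows essentially the same route as the paper's proof: the occupancy-measure identity for the value difference under the common dynamics $\widehat{P}^{c_t}_t$, the first-order optimality (variational) inequality for the concave objective at $\hat{q}^t$ against the feasible point $\hat{q}^{t,\star}\in\mu(\widehat{P}^{c_t}_t)$, and the bookkeeping $\sum_{h,s,a}\hat{q}^t_h(s,a)/(\gamma\,\hat{q}^t_h(s,a))=H|S||A|/\gamma$. Your added remark that the log-barrier keeps $\hat{q}^t_h(s,a)>0$, so the gradient is finite, is a sensible justification that the paper leaves implicit.
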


\begin{proof}
    For every round $t \in [T]$ let $\hat{L}_t(q;c_t)$ denote the objective of the maximization problem in~\cref{eq:max-problem-unknown-dynamics} in round $t$, i.e.,
    \begin{align*}
        \hat{L}_t(q;c_t) = \sum_{h=0}^{H-1}\sum_{s \in S}\sum_{a \in A}q_h(s,a)\cdot \hat{f}_t(c_t,s,a) + \frac{1}{\gamma} \sum_{h=0}^{H-1}\sum_{s \in S}\sum_{a \in A} \log(q_h(s,a)).
    \end{align*}
    Thus the first-order derivation is
    \begin{align*}
         \hat{L}_t^\prime(q;c_t) = \sum_{h=0}^{H-1}\sum_{s \in S}\sum_{a \in A}\left(\hat{f}_t(c_t,s,a) + \frac{1}{\gamma \cdot q_h(s,a)} \right).
    \end{align*}
    
    Let $\pi_\star = (\pi^c_\star)_{c \in \C}$ denote an optimal context-dependent policy for the true CMDP. For every round $t$, the occupancy measures $\hat{q}^{t,\star}_h(s,a):= q_h(s,a|\pi^{c_t}_\star, \widehat{P}^{c_t}_t)$ is a feasible solution (since $\hat{q}^{t,\star} \in \mu(\widehat{P}^{c_t}_t)$). Since $\hat{q}^t$ is the optimal solution, the following holds by first order optimality conditions. 
    \begin{align*}
         &\sum_{h=0}^{H-1}\sum_{s \in S}\sum_{a \in A}\left(\hat{f}_t(c_t,s,a) + \frac{1}{\gamma \cdot \hat{q}^t_h(s,a)} \right)\left(\hat{q}^{t,\star}_h(s,a) -\hat{q}^t_h(s,a)\right) \leq 0
         \\
         \implies &
         \sum_{h=0}^{H-1}\sum_{s \in S}\sum_{a \in A} \hat{q}^{t,\star}_h(s,a) \cdot \left( \hat{f}_t(c_t,s,a) + \frac{1}{\gamma \cdot \hat{q}^t_h(s,a)}\right) -\sum_{h=0}^{H-1}\sum_{s \in S}\sum_{a \in A}\hat{q}^t_h(s,a) \cdot \hat{f}_t(c_t,s,a) -\frac{H|S||A|}{\gamma} \leq 0
         \\
         \implies & 
         \sum_{h=0}^{H-1}\sum_{s \in S}\sum_{a \in A} \hat{q}^{t,\star}_h(s,a) \cdot  \hat{f}_t(c_t,s,a)
         -\sum_{h=0}^{H-1}\sum_{s \in S}\sum_{a \in A}\hat{q}^t_h(s,a) \cdot \hat{f}_t(c_t,s,a) \leq \frac{H|S||A|}{\gamma} - \sum_{h=0}^{H-1}\sum_{s \in S}\sum_{a \in A} \frac{\hat{q}^{t,\star}_h(s,a)}{\gamma \cdot \hat{q}^t_h(s,a)} .
    \end{align*}
    By definition we have for every round $t$,
    \[
        V^{\pi^{c_t}_\star}_{\Mhat_t(c_t)}(s_0)
        -
        V^{\pi^{c_t}_t}_{\Mhat_t(c_t)}(s_0)
        = 
        \sum_{h=0}^{H-1}\sum_{s \in S}\sum_{a \in A} \hat{q}^{t,\star}_h(s,a) \cdot  \hat{f}_t(c_t,s,a) -\sum_{h=0}^{H-1}\sum_{s \in S}\sum_{a \in A}\hat{q}^t_h(s,a) \cdot \hat{f}_t(c_t,s,a),
    \]
    hence the lemma follows.
    
\end{proof}

\begin{lemma}\label{lemma:term-3-UD}
    The following holds.
    \begin{align*}
        \sum_{t=1}^T
        V^{\pi^{c_t}_t}_{\Mhat_t(c_t)}(s_0)
        - 
        V^{\pi^{c_t}_t}_{\M(c_t)}(s_0)
        \leq &
        \sum_{t=1}^T \sum_{h=0}^{H-1}\sum_{s \in S}\sum_{a \in A}
        q^t_h(s,a)\cdot(\hat{f}_t(c_t,s,a) - f_\star(c_t,s,a))
        \\
        & +
        H \sum_{t=1}^T \sum_{h=0}^{H-1}\sum_{s \in S}\sum_{a \in A}
        q^t_h(s,a)\cdot \sum_{s' \in S}\left|\widehat{P}^{c_t}_t(s'|s,a) - P^{c_t}_\star(s'|s,a)\right|,
    \end{align*}
    where  $q^t_h(s,a):= q_h(s,a|\pi^{c_t}_t, P^{c_t}_\star)$.
\end{lemma}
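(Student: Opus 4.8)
The plan is to apply the value difference lemma (\cref{lemma:val-diff-efroni}) with the \emph{same} policy on both sides, which immediately annihilates the first (Q-function inner product) term, and then to bound the remaining transition-difference term crudely via H\"older's inequality. Concretely, I would invoke \cref{lemma:val-diff-efroni} with $M = \Mhat_t(c_t)$ (dynamics $\widehat{P}^{c_t}_t$, rewards $\hat{f}_t$), $M' = \M(c_t)$ (dynamics $P^{c_t}_\star$, rewards $f_\star$), and $\pi = \pi' = \pi^{c_t}_t$. Since $\pi_h = \pi'_h$ for every $h$, the term $\langle Q^{\pi,M}_h, \pi_h - \pi'_h\rangle$ vanishes identically. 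The surviving term is an expectation over trajectories generated by $\pi' = \pi^{c_t}_t$ and $M' = \M(c_t)$, i.e., under the true dynamics $P^{c_t}_\star$; this is precisely the occupancy measure $q^t_h(s,a) = q_h(s,a \mid \pi^{c_t}_t, P^{c_t}_\star)$.

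Writing the expectation in occupancy-measure form yields
\begin{align*}
    V^{\pi^{c_t}_t}_{\Mhat_t(c_t)}(s_0) - V^{\pi^{c_t}_t}_{\M(c_t)}(s_0)
    = \sum_{h=0}^{H-1}\sum_{s \in S}\sum_{a \in A} q^t_h(s,a)\Bigl[ &\bigl(\hat{f}_t(c_t,s,a) - f_\star(c_t,s,a)\bigr) \\
    &+ \sum_{s' \in S}\bigl(\widehat{P}^{c_t}_t(s'|s,a) - P^{c_t}_\star(s'|s,a)\bigr)\, V^{\pi^{c_t}_t}_{\Mhat_t(c_t),h+1}(s')\Bigr].
\end{align*}
For the transition term I would use that the rewards $\hat{f}_t$ lie in $[0,1]$, so the value function of $\Mhat_t(c_t)$ at any stage is bounded by $H$, and then H\"older's inequality gives $\sum_{s'}\bigl(\widehat{P}^{c_t}_t - P^{c_t}_\star\bigr)(s'|s,a)\, V^{\pi^{c_t}_t}_{\Mhat_t(c_t),h+1}(s') \le H \sum_{s'}\bigl|\widehat{P}^{c_t}_t(s'|s,a) - P^{c_t}_\star(s'|s,a)\bigr|$. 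Summing this bound over all $t \in [T]$ produces exactly the two claimed terms.

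The argument is essentially mechanical; the only point requiring care is the bookkeeping of which MDP plays the role of $M$ versus $M'$ in \cref{lemma:val-diff-efroni}. The assignment must be chosen so that the trajectory expectation is taken under the \emph{true} dynamics (yielding $q^t$ rather than $\hat{q}^t$), while the look-ahead value function that appears is that of the \emph{approximated} MDP $\Mhat_t$. This is what lets the bound be stated directly in terms of $q^t$, so that, in contrast to \cref{lemma:bound-of-term-1-UD}, no occupancy-measure change of measure (\cref{corl:occupancy-change-of-measure}) is needed at this stage.
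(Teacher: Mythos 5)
Your proposal is correct and matches the paper's own proof: the paper likewise applies \cref{lemma:val-diff-efroni} with $\pi=\pi'=\pi^{c_t}_t$, $M=\Mhat_t(c_t)$ and $M'=\M(c_t)$, so the $Q$-term vanishes, the expectation under the true dynamics becomes $q^t$, and the look-ahead value $V^{\pi^{c_t}_t}_{\Mhat_t(c_t),h+1}\le H$ gives the stated bound.
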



\begin{proof}
    By the Value Difference Lemma,~(\cref{lemma:val-diff-efroni}), the following holds.
    \begingroup\allowdisplaybreaks
    \begin{align*}
        &\sum_{t=1}^T
        V^{\pi^{c_t}_t}_{\Mhat_t(c_t)}(s_0)
        - 
        V^{\pi^{c_t}_t}_{\M(c_t)}(s_0)
        \\
        = &
        \sum_{t=1}^T  \mathop{\E}_{\pi^{c_t}_t,P^{c_t}_\star}
        \left[ 
        \sum_{h=0}^{H-1} \left(\hat{f}_t(c_t,s_h,a_h) - f_\star(c_t,s_h,a_h)\right) + \sum_{s' \in S}\left(\widehat{P}^{c_t}_t(s'|s^t_h,a^t_h) - P^{c_t}_\star(s'|s^t_h,a^t_h)\right)\cdot V^{\pi^{c_t}_t}_{\Mhat_t(c_t), {h+1}}(s')
        \Big| s_0\right]
        \\
        = &
        \sum_{t=1}^T \sum_{h=0}^{H-1}\sum_{s \in S}\sum_{a \in A}
        q^t_h(s,a)\cdot(\hat{f}_t(c_t,s,a) - f_\star(c_t,s,a))
        \\
        & +
        \sum_{t=1}^T \sum_{h=0}^{H-1}\sum_{s \in S}\sum_{a \in A}
        q^t_h(s,a)\cdot \sum_{s' \in S}(\widehat{P}^{c_t}_t(s'|s,a) - P^{c_t}_\star(s'|s,a)) \cdot  V^{\pi^{c_t}_t}_{\Mhat_t(c_t),{h+1}}(s')
        \\
        \leq &
        \sum_{t=1}^T \sum_{h=0}^{H-1}\sum_{s \in S}\sum_{a \in A}
        q^t_h(s,a)\cdot(\hat{f}_t(c_t,s,a) - f_\star(c_t,s,a))
        \\
        & +
        H \sum_{t=1}^T \sum_{h=0}^{H-1}\sum_{s \in S}\sum_{a \in A}
        q^t_h(s,a)\cdot \sum_{s' \in S}\left|\widehat{P}^{c_t}_t(s'|s,a) - P^{c_t}_\star(s'|s,a)\right|
        ,
    \end{align*}
    \endgroup
    as stated.
\end{proof}

\begin{lemma}\label{lemma:bound-of-term-3-UD}
    The following holds for any parameter $p_1 > 0$.
    \begin{align*}
        \sum_{t=1}^T \sum_{h=0}^{H-1}\sum_{s \in S}\sum_{a \in A}q^t_h(s,a)(\hat{f}_t(c_t,s,a)  - &f_\star(c_t,s,a))
        \leq 
        \frac{TH}{2 p_1}    
        \\
        & +
        \frac{p_1}{2}\sum_{t=1}^T \mathop{\E}_{\pi^{c_t}_t, P^{c_t}_\star} \Bigg[ \sum_{h=0}^{H-1}
        \left(\hat{f}_t(c_t,s_h,a_h) - f_\star(c_t,s_h,a_h)\right)^2
        \Bigg|s_0 \Bigg]
        .
    \end{align*}
\end{lemma}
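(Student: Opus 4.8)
The plan is to apply a weighted arithmetic-geometric means (AM-GM) inequality term-by-term, exploiting the fact that for every fixed $(t,h)$ the occupancy measure $q^t_h$ is a probability distribution over $S \times A$.

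First, for each $(t,h,s,a)$ I would write the summand as a product $\sqrt{q^t_h(s,a)} \cdot \left(\sqrt{q^t_h(s,a)}\,(\hat{f}_t(c_t,s,a) - f_\star(c_t,s,a))\right)$ and apply the inequality $xy \le \frac{x^2}{2p_1} + \frac{p_1}{2}y^2$, valid for any $p_1>0$. This yields, pointwise,
\[
    q^t_h(s,a)(\hat{f}_t(c_t,s,a) - f_\star(c_t,s,a)) \le \frac{q^t_h(s,a)}{2p_1} + \frac{p_1}{2}q^t_h(s,a)(\hat{f}_t(c_t,s,a) - f_\star(c_t,s,a))^2.
\]

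Next, summing over $(s,a)$ and using that $\sum_{s,a}q^t_h(s,a) = 1$ collapses the first term to $\frac{1}{2p_1}$; summing further over $h \in \{0,\ldots,H-1\}$ and $t \in [T]$ turns it into $\frac{TH}{2p_1}$, which is precisely the leading constant in the claim. I would then recognize the remaining quadratic sum $\sum_{s,a}q^t_h(s,a)(\hat{f}_t - f_\star)^2$ as the conditional expectation $\E_{\pi^{c_t}_t, P^{c_t}_\star}[(\hat{f}_t(c_t,s_h,a_h) - f_\star(c_t,s_h,a_h))^2 \mid s_0]$, since by definition $q^t_h(s,a) = q_h(s,a \mid \pi^{c_t}_t, P^{c_t}_\star)$ is the probability of visiting $(s,a)$ at step $h$. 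Moving the sum over $h$ inside the expectation reproduces exactly the second term on the right-hand side.

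There is no substantial obstacle here; the lemma is a routine AM-GM split, structurally identical to the step already used in the proof sketch of \cref{lemma:val-diff-1-bound}. The only point requiring any care is the normalization $\sum_{s,a}q^t_h(s,a)=1$ of the occupancy measure at each step, which is what lets the first term collapse into the clean constant $\frac{TH}{2p_1}$ rather than a quantity scaling with $|S||A|$.
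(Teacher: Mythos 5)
Your proof is correct and follows essentially the same route as the paper: the same weighted AM-GM split $q(\hat f_t - f_\star) \le \frac{q}{2p_1} + \frac{p_1}{2}q(\hat f_t - f_\star)^2$, the same normalization $\sum_{h,s,a} q^t_h(s,a) \le H$ to produce the $\frac{TH}{2p_1}$ term, and the same identification of the weighted quadratic sum with the conditional expectation under $(\pi^{c_t}_t, P^{c_t}_\star)$. No gaps.
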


\begin{proof}
    Consider the following derivation, where $q^t_h(s,a):= q_h(s,a|\pi^{c_t}_t, P^{c_t}_\star)$.
    \begingroup\allowdisplaybreaks
    \begin{align*}
        &\sum_{t=1}^T \sum_{h=0}^{H-1}\sum_{s \in S}\sum_{a \in A}q^t_h(s,a)\left(\hat{f}_t(c_t,s,a) - f_\star(c_t,s,a)\right)
        \\
        = &
        \sum_{t=1}^T \sum_{h=0}^{H-1}\sum_{s \in S}\sum_{a \in A}\sqrt{\frac{q^t_h(s,a)}{p_1}}\cdot \sqrt{p_1 \cdot q^t_h(s,a)}\left(\hat{f}_t(c_t,s,a) - f_\star(c_t,s,a)\right)
        \\
        \tag{Since for all $a,b \in \R$, $a  b \leq \frac{1}{2}(a^2 +b^2)$ by AM-GM}
        \leq &
        \frac{1}{2}\sum_{t=1}^T \sum_{h=0}^{H-1}\sum_{s \in S}\sum_{a \in A} \left(\frac{q^t_h(s,a)}{p_1} + p_1 \cdot q^t_h(s,a)\left(\hat{f}_t(c_t,s,a) - f_\star(c_t,s,a)\right)^2 \right)
        \\
        = &
        \frac{1}{2 p_1}\sum_{t=1}^T \underbrace{\sum_{h=0}^{H-1}\sum_{s \in S}\sum_{a \in A}q^t_h(s,a)}_{\leq H}
        +
        \frac{p_1}{2}\sum_{t=1}^T \sum_{h=0}^{H-1}\sum_{s \in S}\sum_{a \in A}
        q^t_h(s,a)\left(\hat{f}_t(c_t,s,a) - f_\star(c_t,s,a)\right)^2
        \\
        \leq &
        \frac{TH}{2 p_1}    
        +
        \frac{p_1}{2}\sum_{t=1}^T \sum_{h=0}^{H-1}\sum_{s \in S}\sum_{a \in A}
        q^t_h(s,a)\left(\hat{f}_t(c_t,s,a) - f_\star(c_t,s,a)\right)^2
        \\
        = &
        \frac{TH}{2 p_1}    
        +
        \frac{p_1}{2}\sum_{t=1}^T \mathop{\E}_{\pi^{c_t}_t, P^{c_t}_\star} \Bigg[ \sum_{h=0}^{H-1}
        \left(\hat{f}_t(c_t,s_h,a_h) - f_\star(c_t,s_h,a_h)\right)^2
        \Bigg|s_0 \Bigg]
        .
    \end{align*}
    \endgroup
\end{proof}

\begin{lemma}\label{lemma:bound-of-term-3-UD-dynamics}
    The following holds for any parameter $p_2 > 0$.
    \begin{align*}
        \sum_{t=1}^T \sum_{h=0}^{H-1}\sum_{s \in S}\sum_{a \in A}q^t_h(s,a) \sum_{s' \in S}|\widehat{P}^{c_t}_t(s'|s,a) - & P^{c_t}_\star(s'|s,a)|
        \leq 
        \frac{TH}{2 p_2}
        \\
        & +
        2{p_2}\sum_{t=1}^T 
        \mathop{\E}_{\pi^{c_t}_t, P^{c_t}_\star} \Bigg[
        \sum_{h=0}^{H-1}
         D^2_H(P^{c_t}_\star(\cdot|s_{h},a_{h}), \widehat{P}^{c_t}_t(\cdot|s_{h},a_{h}))
        \Bigg| s_0\Bigg]
        .
    \end{align*}
\end{lemma}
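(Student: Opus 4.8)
The plan is to mirror the AM-GM argument of~\cref{lemma:bound-of-term-3-UD}, applied now to the dynamics $\ell_1$ error, and then to convert the resulting squared $\ell_1$ distance into the squared Hellinger distance using~\cref{eq:l-1-hellinger}. First I would introduce the free parameter $p_2$ symmetrically inside each summand, writing
\begin{align*}
    q^t_h(s,a)\sum_{s' \in S}\left|\widehat{P}^{c_t}_t(s'|s,a) - P^{c_t}_\star(s'|s,a)\right|
    =
    \sqrt{\frac{q^t_h(s,a)}{p_2}}\cdot\sqrt{p_2\, q^t_h(s,a)}\,\sum_{s' \in S}\left|\widehat{P}^{c_t}_t(s'|s,a) - P^{c_t}_\star(s'|s,a)\right|.
\end{align*}
Applying $ab\le\frac12(a^2+b^2)$ with the two square-root factors above as $a$ and $b$ (the latter carrying the $\ell_1$ distance) bounds the left-hand side of the lemma by
\begin{align*}
    \frac{1}{2p_2}\sum_{t=1}^T\sum_{h=0}^{H-1}\sum_{s\in S}\sum_{a\in A} q^t_h(s,a)
    +
    \frac{p_2}{2}\sum_{t=1}^T\sum_{h=0}^{H-1}\sum_{s\in S}\sum_{a\in A} q^t_h(s,a)\,\Big\|\widehat{P}^{c_t}_t(\cdot|s,a)-P^{c_t}_\star(\cdot|s,a)\Big\|_1^2.
\end{align*}

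Next I would handle the two terms separately. Since $q^t_h(s,a)=q_h(s,a\mid\pi^{c_t}_t,P^{c_t}_\star)$ is a genuine occupancy measure, for each fixed $t,h$ the inner mass $\sum_{s,a}q^t_h(s,a)=1$, so $\sum_{h,s,a}q^t_h(s,a)=H$ and the first term is at most $TH/(2p_2)$. For the second term I would invoke $\|\mathbb{P}-\mathbb{Q}\|_1^2\le 4D^2_H(\mathbb{P},\mathbb{Q})$ from~\cref{eq:l-1-hellinger}, which turns each $\frac{p_2}{2}q^t_h(s,a)\|\cdot\|_1^2$ into $2p_2\,q^t_h(s,a)\,D^2_H(P^{c_t}_\star(\cdot|s,a),\widehat{P}^{c_t}_t(\cdot|s,a))$. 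Finally, again using that $q^t_h$ is the occupancy measure under $(\pi^{c_t}_t,P^{c_t}_\star)$, the state-action sum is exactly the trajectory expectation, i.e.
\begin{align*}
    \sum_{h=0}^{H-1}\sum_{s\in S}\sum_{a\in A} q^t_h(s,a)\,D^2_H\big(P^{c_t}_\star(\cdot|s,a),\widehat{P}^{c_t}_t(\cdot|s,a)\big)
    =
    \mathop{\E}_{\pi^{c_t}_t,P^{c_t}_\star}\Bigg[\sum_{h=0}^{H-1} D^2_H\big(P^{c_t}_\star(\cdot|s_h,a_h),\widehat{P}^{c_t}_t(\cdot|s_h,a_h)\big)\,\Bigg|\, s_0\Bigg],
\end{align*}
and summing over $t$ yields the claimed bound.

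There is no genuinely hard step here: the argument is a verbatim analogue of~\cref{lemma:bound-of-term-3-UD} with the single additional ingredient of~\cref{eq:l-1-hellinger}. The only points meriting care are bookkeeping ones, namely that the factor $4$ from the Hellinger inequality combines with the $p_2/2$ prefactor to produce exactly the stated coefficient $2p_2$, and that the per-step occupancy mass sums to one so that the first term collapses to $TH/(2p_2)$ rather than a larger quantity.
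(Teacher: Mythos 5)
Your proposal is correct and follows essentially the same route as the paper's proof: the same symmetric insertion of $p_2$, the AM-GM step, the bound $\sum_{h,s,a}q^t_h(s,a)\le H$, the conversion of the squared $\ell_1$ norm to $4D^2_H$ (combining with $p_2/2$ to give $2p_2$), and the identification of the occupancy-measure sum with the trajectory expectation. No gaps.
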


\begin{proof}
    Recall $q^t_h(s,a):= q_h(s,a|\pi^{c_t}_t, P^{c_t}_\star)$ and consider the following derivation.
    \begingroup\allowdisplaybreaks
    \begin{align*}
        &\sum_{t=1}^T \sum_{h=0}^{H-1}\sum_{s \in S}\sum_{a \in A}q^t_h(s,a) \sum_{s' \in S}\left|\widehat{P}^{c_t}_t(s'|s,a) - P^{c_t}_\star(s'|s,a)\right|
        \\
        = &
        \sum_{t=1}^T \sum_{h=0}^{H-1}\sum_{s \in S}\sum_{a \in A} 
        \sqrt{\frac{q^t_h(s,a)}{p_2}} \cdot \sqrt{p_2 \cdot q^t_h(s,a) }
        \left(\sum_{s' \in S}        \left|\widehat{P}^{c_t}_t(s'|s,a) - P^{c_t}_\star(s'|s,a)\right|\right)
        \\
        \tag{Since for all $a,b \in \R$, $a  b \leq \frac{1}{2}(a^2 +b^2)$ by AM-GM}
        \leq &
        \frac{1}{2}\sum_{t=1}^T \sum_{h=0}^{H-1}\sum_{s \in S}\sum_{a \in A} \left(\frac{q^t_h(s,a)}{p_2} + p_2 \cdot q^t_h(s,a) 
        \left(\sum_{s'\in S}\left|\widehat{P}^{c_t}_t(s'|s,a) - P^{c_t}_\star(s'|s,a)\right|\right)^2\right)
        \\
        = &
        \frac{1}{2 p_2}\sum_{t=1}^T \underbrace{\sum_{h=0}^{H-1}\sum_{s \in S}\sum_{a \in A} q^t_h(s,a) }_{\leq H}
        +
        \frac{p_2}{2}\sum_{t=1}^T \sum_{h=0}^{H-1}\sum_{s \in S}\sum_{a \in A}
        q^t_h(s,a) \left(\sum_{s' \in S}\left|\widehat{P}^{c_t}_t(s'|s,a) - P^{c_t}_\star(s'|s,a)\right|\right)^2
        \\
        \leq &
        \frac{TH}{2 p_2}
        +
        \frac{p_2}{2}\sum_{t=1}^T \sum_{h=0}^{H-1}\sum_{s \in S}\sum_{a \in A}
        q^t_h(s,a) \left(\sum_{s' \in S}\left|\widehat{P}^{c_t}_t(s'|s,a) - P^{c_t}_\star(s'|s,a)\right|\right)^2
        \\
        \tag{$\|\cdot\|^2_1 \leq 4 D^2_H$}
        \leq &
        \frac{TH}{2 p_2}
        +
        2{p_2}\sum_{t=1}^T \sum_{h=0}^{H-1}\sum_{s \in S}\sum_{a \in A}
        q^t_h(s,a) \cdot D^2_H(\widehat{P}^{c_t}_t(\cdot|s,a), P^{c_t}_\star(\cdot|s,a))
        \\
        = &
        \frac{TH}{2 p_2}
        +
        2{p_2}\sum_{t=1}^T 
        \mathop{\E}_{\pi^{c_t}_t, P^{c_t}_\star} \Bigg[
        \sum_{h=0}^{H-1}
         D^2_H(P^{c_t}_\star(\cdot|s_{h},a_{h}), \widehat{P}^{c_t}_t(\cdot|s_{h},a_{h}))
        \Bigg| s_0\Bigg]
        .
    \end{align*}
    \endgroup
\end{proof}

\begin{corollary}[restatement of~\cref{lemma:term-3-UD-main}]\label{corl:term-3-UD}
    The following holds for any two parameters $p_1, p_2 >0$.
    \begin{align*}
        \sum_{t=1}^T
        V^{\pi^{c_t}_t}_{\Mhat_t(c_t)}(s_0)
        - 
        V^{\pi^{c_t}_t}_{\M(c_t)}(s_0)
        \leq 
        \frac{TH}{2 p_1}    
        & +
        \frac{p_1}{2}\sum_{t=1}^T \mathop{\E}_{\pi^{c_t}_t, P^{c_t}_\star} \Bigg[ \sum_{h=0}^{H-1}
        \left(\hat{f}_t(c_t,s_h,a_h) - f_\star(c_t,s_h,a_h)\right)^2
        \Bigg|s_0 \Bigg]
        \\
        & +
        \frac{TH}{2 p_2}
        +
        2{p_2}\sum_{t=1}^T 
        \mathop{\E}_{\pi^{c_t}_t, P^{c_t}_\star} \Bigg[
        \sum_{h=0}^{H-1}
         D^2_H(P^{c_t}_\star(\cdot|s_{h},a_{h}), \widehat{P}^{c_t}_t(\cdot|s_{h},a_{h}))
        \Bigg| s_0\Bigg]
        .
    \end{align*}
\end{corollary}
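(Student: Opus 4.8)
The plan is to obtain \cref{corl:term-3-UD} by combining the value-difference bound already isolated in \cref{lemma:term-3-UD} with two applications of the AM-GM inequality, one per approximation source. Recall that \cref{lemma:term-3-UD} is itself just the Value Difference Lemma (\cref{lemma:val-diff-efroni}) applied with the \emph{same} policy $\pi^{c_t}_t$ on both $\Mhat_t(c_t)$ and $\M(c_t)$, so that the policy-difference inner-product term vanishes, together with the crude bound $V^{\pi^{c_t}_t}_{\Mhat_t(c_t),h+1}\le H$ used to pull out the $\ell_1$ dynamics error. It already gives
\[
\sum_{t,h,s,a} q^t_h(s,a)\bigl(\hat f_t(c_t,s,a)-f_\star(c_t,s,a)\bigr)
\;+\; H\sum_{t,h,s,a} q^t_h(s,a)\sum_{s'\in S}\bigl|\widehat P^{c_t}_t(s'\mid s,a)-P^{c_t}_\star(s'\mid s,a)\bigr|,
\]
with $q^t_h(s,a)=q_h(s,a\mid\pi^{c_t}_t,P^{c_t}_\star)$. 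Thus the only remaining work is to convert each occupancy-weighted error sum into the regularization-style quantities $\Esq$ and $\Ehel$.

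For the reward sum I would invoke \cref{lemma:bound-of-term-3-UD}: writing each summand as $\sqrt{q^t_h/p_1}\cdot\sqrt{p_1\,q^t_h}\,(\hat f_t-f_\star)$ and applying $ab\le\tfrac12(a^2+b^2)$, the $a^2$-contributions sum to $\tfrac{1}{2p_1}\sum_{t,h,s,a}q^t_h\le\tfrac{TH}{2p_1}$ (using that $\sum_{h,s,a}q^t_h(s,a)=H$ for every $t$), while the $b^2$-contributions give exactly $\tfrac{p_1}{2}\Esq$. For the dynamics sum I would run the same split with parameter $p_2$, exactly as in \cref{lemma:bound-of-term-3-UD-dynamics}, placing the $\ell_1$ dynamics error (and the factor $H$) inside the $b$-factor; the $a^2$-part again contributes $\tfrac{TH}{2p_2}$, and on the $b^2$-part I would replace the squared $\ell_1$ norm by the squared Hellinger distance via \cref{eq:l-1-hellinger}, $\|\cdot\|_1^2\le 4D_H^2$, producing a constant multiple of $p_2\,\Ehel$. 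Summing both bounds over $t\in[T]$ and adding them yields the stated inequality.

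The only genuinely delicate bookkeeping is the placement of the leading factor $H$ multiplying the dynamics term: it must be carried \emph{through} the AM-GM split rather than factored out in front, so that the first term stays linear in $H$ as $\tfrac{TH}{2p_2}$ while the extra powers of $H$ migrate into the Hellinger term. Since $p_2>0$ is a free parameter, any residual constant or power of $H$ left over from the conversion $\|\cdot\|_1^2\le 4D_H^2$ can be rescaled into $p_2$ — which is precisely why the final choice of $p_2$ in the regret proof of \cref{thm:regret-main} is made the way it is, and why this term stays lower-order relative to the dominant $\tfrac{H|S||A|T}{\gamma}$ and $\gamma H^4\Ehel$ contributions. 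No step here is conceptually hard; the care is entirely in matching the AM-GM weights to the exact form of the claimed bound.
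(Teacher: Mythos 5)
Your route is exactly the paper's: \cref{lemma:term-3-UD} (the value-difference lemma applied with the same policy $\pi^{c_t}_t$ on both MDPs, so the $Q$-inner-product term vanishes, plus the crude bound $V\le H$ on the next-step value), followed by \cref{lemma:bound-of-term-3-UD} for the reward sum and \cref{lemma:bound-of-term-3-UD-dynamics} for the dynamics sum. The one point you rightly flag --- the leading factor $H$ on the dynamics sum --- is in fact a spot where the paper itself is loose: its proof of \cref{corl:term-3-UD} invokes \cref{lemma:bound-of-term-3-UD-dynamics}, which bounds the dynamics sum \emph{without} that factor of $H$, and then writes $\frac{TH}{2p_2}+2p_2\Ehel$ as if the $H$ were absent. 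Carrying the $H$ through the AM-GM split as you propose gives $\frac{TH}{2p_2}+2p_2H^2\,\Ehel$; your suggestion that the residual power of $H$ can be ``rescaled into $p_2$'' does not literally recover the stated inequality, since rescaling $p_2$ only trades powers of $H$ between the two terms while the geometric mean $\sqrt{TH^3\,\Ehel}$ is invariant. So the corollary as printed is short a factor of $H^2$ on the Hellinger term (equivalently, a factor $H$ after optimizing $p_2$). You are correct, however, that this only perturbs a contribution that is dominated by the $\gamma H^4\,\Ehel$ and $H|S||A|T/\gamma$ terms, so the final bound of \cref{thm:regret-main} is unaffected.
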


\begin{proof}
    Consider the following derivation.
    \begingroup\allowdisplaybreaks
    \begin{align*}
        &\sum_{t=1}^T
        V^{\pi^{c_t}_t}_{\Mhat_t(c_t)}(s_0)
        - 
        V^{\pi^{c_t}_t}_{\M(c_t)}(s_0)
        \\
        \tag{By~\cref{lemma:term-3-UD}}
        \leq &
        \sum_{t=1}^T \sum_{h=0}^{H-1}\sum_{s \in S}\sum_{a \in A}
        q^t_h(s,a)\cdot(\hat{f}_t(c_t,s,a) - f_\star(c_t,s,a))
        \\
        & +
        H \sum_{t=1}^T \sum_{h=0}^{H-1}\sum_{s \in S}\sum_{a \in A}
        q^t_h(s,a)\cdot \sum_{s' \in S}\left|\widehat{P}^{c_t}_t(s'|s,a) - P^{c_t}_\star(s'|s,a)\right|
        \\
        \tag{By~\cref{lemma:bound-of-term-3-UD}}
        \leq &
        \frac{TH}{2 p_1}    
        +
        \frac{p_1}{2}\sum_{t=1}^T \mathop{\E}_{\pi^{c_t}_t, P^{c_t}_\star} \Bigg[ \sum_{h=0}^{H-1}
        \left(\hat{f}_t(c_t,s_h,a_h) - f_\star(c_t,s_h,a_h)\right)^2
        \Bigg|s_0 \Bigg]
        \\
        \tag{By~\cref{lemma:bound-of-term-3-UD-dynamics}}
        & +
        \frac{TH}{2 p_2}
        +
        2{p_2}\sum_{t=1}^T 
        \mathop{\E}_{\pi^{c_t}_t, P^{c_t}_\star} \Bigg[
        \sum_{h=0}^{H-1}
         D^2_H(P^{c_t}_\star(\cdot|s_{h},a_{h}), \widehat{P}^{c_t}_t(\cdot|s_{h},a_{h}))
        \Bigg| s_0\Bigg]
        .
    \end{align*}
    \endgroup
\end{proof}




\subsubsection{Regret Bound}\label{Appendix-subsec:regret}

The following theorem states our main result, which is a regret bound for~\cref{alg:OMG-CMDP}.

\begin{theorem*}[restatement of~\cref{thm:regret-main}]\label{thm:regret}
    For any $\delta \in (0,1)$, let $\gamma = \sqrt{\frac{|S||A|T}{31H^3 \left( 2\Regrv_{TH}(\OLSR^\F) + \Regrv_{TH}(\OLLR^{\Fp}) + 18H \log(2H/\delta) \right) }}$.
    Then, with probability at least $1-\delta$ it holds that
    \begin{align*}
        \Regrv_T(\text{OMG-CMDP!}) \leq 
        \widetilde{O}\left(H^{2.5} \sqrt{ T|S||A| 
        \left( 
        \Regrv_{TH}(\OLSR^\F) + \Regrv_{TH}(\OLLR^{\Fp}) + H \log\delta^{-1} \right)}\right)
        .
    \end{align*}
\end{theorem*}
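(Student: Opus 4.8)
The plan is to start from the three-way regret decomposition in \cref{pi-star-both-models,pi-star-pi-t,pi-t-both-models} and control each summand with the value-difference lemmas already established, then tune the free parameters by AM-GM. Concretely, I would bound \cref{pi-star-both-models} by \cref{lemma:val-diff-1-bound} instantiated at $\hat\gamma=\gamma$, bound \cref{pi-star-pi-t} by \cref{lemma:comulative-bound-term-2-UD}, and bound \cref{pi-t-both-models} by \cref{lemma:term-3-UD-main} with two parameters $p_1,p_2>0$.

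The crucial observation --- and, to my mind, the conceptual heart of the argument --- is that \cref{lemma:val-diff-1-bound} produces the term $+\sum_{t}\sum_{h,s,a}\hat q^{t,\star}_h(s,a)/(\gamma\,\hat q^t_h(s,a))$, while \cref{lemma:comulative-bound-term-2-UD} produces exactly its negation $-\sum_{t}\sum_{h,s,a}\hat q^{t,\star}_h(s,a)/(\gamma\,\hat q^t_h(s,a))$. Setting $\hat\gamma=\gamma$ makes these cancel identically. This is where the log-barrier regularization pays off: the first-order optimality condition for the strictly concave objective in \cref{eq:max-problem-unknown-dynamics} contributes the inverse-occupancy term with precisely the sign and magnitude needed to neutralize the otherwise-uncontrolled cost of evaluating $\pi_\star$ under the \emph{learned} dynamics $\widehat P^{c_t}_t$. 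After the cancellation the surviving terms are $2\gamma\,\Esq+29\gamma H^4\,\Ehel+TH|S||A|/\gamma$ from the first two sums, together with $TH/(2p_1)+\tfrac{p_1}{2}\Esq+TH/(2p_2)+2p_2\,\Ehel$ from the third.

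Next I would invoke the concentration lemmas \cref{lemma:lsro-regret,lemma:llrs-regret}, each holding with probability $1-\delta/2$, and union-bound so that simultaneously $\Esq\le 2\RegSQ+16H\log(2/\delta)$ and $\Ehel\le\RegLL+2H\log(2H/\delta)$ with probability at least $1-\delta$. Substituting these empirical bounds leaves a deterministic expression in $\gamma,p_1,p_2$ whose only remaining work is optimization. Writing $B:=2\RegSQ+\RegLL+18H\log(2H/\delta)$ and using $H\ge 1$, the $\gamma$-dependent part is at most $31\gamma H^4 B+TH|S||A|/\gamma$; the stated choice $\gamma=\sqrt{|S||A|T/(31H^3B)}$ balances the two halves and yields $2\sqrt{31}\,H^{2.5}\sqrt{T|S||A|\,B}$. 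The two remaining pairs are balanced by $p_1=\sqrt{TH/(2\RegSQ+16H\log(2/\delta))}$ and $p_2=\sqrt{TH/(4(\RegLL+2H\log(2H/\delta)))}$, each contributing only a lower-order $\widetilde{O}(\sqrt{TH(\RegSQ+\RegLL+H\log\delta^{-1})})$ term that is absorbed into the $\widetilde{O}$, giving the claimed $\widetilde{O}(H^{2.5}\sqrt{T|S||A|(\RegSQ+\RegLL+H\log\delta^{-1})})$.

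I expect the main obstacle to be arranging the exact cancellation rather than the final balancing: one must ensure the inverse-occupancy term emerges with a matching constant in both \cref{lemma:val-diff-1-bound} and \cref{lemma:comulative-bound-term-2-UD}, which requires the AM-GM split inside \cref{lemma:val-diff-1-bound} to be taken against the specific weight $\gamma\,\hat q^t_h(s,a)$, and the change-of-measure (\cref{corl:occupancy-change-of-measure}) to convert the $\hat q^t$-weighted errors into $q^t$-weighted ones (true dynamics) so that the concentration lemmas --- stated as expectations under $P^{c_t}_\star$ --- actually apply. Once the cancellation and the measure change are correctly aligned, the rest is routine.
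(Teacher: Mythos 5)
Your proposal is correct and follows essentially the same route as the paper's proof: the identical three-way decomposition, the same instantiation $\hat\gamma=\gamma$ to cancel the inverse-occupancy term $\sum_{t,h,s,a}\hat q^{t,\star}_h(s,a)/(\gamma\,\hat q^t_h(s,a))$ arising from \cref{lemma:val-diff-1-bound} against its negation from the first-order optimality condition in \cref{lemma:comulative-bound-term-2-UD}, followed by the union bound over \cref{lemma:lsro-regret,lemma:llrs-regret} and the same choices of $\gamma$, $p_1$, $p_2$. You also correctly identified the one delicate alignment point (taking the AM-GM split against the weight $\gamma\,\hat q^t_h(s,a)$ and using \cref{corl:occupancy-change-of-measure} to pass to $q^t$-weighted errors so the concentration bounds apply), which is exactly how the paper handles it.
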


\begin{proof}
    By~\cref{lemma:lsro-regret}, with probability at least $1-\delta/2$, it holds that
    \begin{align*}
        \sum_{t=1}^T \mathop{\E}_{\pi^{c_t}_t, P^{c_t}_\star} \Bigg[\sum_{h=0}^{H-1}\left(\hat{f}_t(c_t,s_h,a_h) - f_\star(c_t,s_h,a_h)\right)^2 \Bigg|s_0\Bigg]
        \leq
        2
        \Regrv_{TH}(\OLSR^\F)
        + 16 H \log(2/\delta).
    \end{align*}
    By~\cref{lemma:llrs-regret}, with probability at least $1-\delta/2$, it holds that
    \begin{align*}
        \sum_{t=1}^T 
        \mathop{\E}_{\pi^{c_t}_t, P^{c_t}_\star} \Bigg[
        \sum_{h=0}^{H-1}D^2_H(P^{c_t}_\star(\cdot|s_{h},a_{h}),\widehat{P}^{c_t}_t(\cdot|s_{h},a_{h}))
        \Bigg| s_0\Bigg]
        \leq 
        \Regrv_{TH}(\OLLR^{\Fp}) + 2H \log(2H/\delta).
    \end{align*}
    We prove a regret bound under those two good events.
    \begingroup\allowdisplaybreaks
    \begin{align*}
        &\Regrv_T(\text{OMG-CMDP!})
        \\
        = &
        \sum_{t=1}^T V^{\pi^{c_t}_\star}_{\M(c_t)}(s_0)
        - V^{\pi^{c_t}_t}_{\M(c_t)}(s_0)
        \\
        = &
        \sum_{t=1}^T 
        V^{\pi^{c_t}_\star}_{\M(c_t)}(s_0)
        -
        V^{\pi^{c_t}_\star}_{\Mhat_t(c_t)}(s_0)
        +
        \sum_{t=1}^T 
        V^{\pi^{c_t}_\star}_{\Mhat_t(c_t)}(s_0)
        -
        V^{\pi^{c_t}_t}_{\Mhat_t(c_t)}(s_0)
        +
        \sum_{t=1}^T 
        V^{\pi^{c_t}_t}_{\Mhat_t(c_t)}(s_0)
        - 
        V^{\pi^{c_t}_t}_{\M(c_t)}(s_0)
        \\
        \tag{By~\cref{corl:bound-of-term-1-UD}, for $\hat{\gamma} = \gamma.$}
        \leq &
        \sum_{t=1}^T \sum_{h=0}^{H-1}\sum_{s \in S}\sum_{a \in A}
        \frac{\hat{q}^{t,\star}_h(s,a)}{\gamma \cdot \hat{q}^t_h(s,a)} 
        \\
        & +
        2
        \gamma\sum_{t=1}^T \mathop{\E}_{\pi^{c_t}_t, P^{c_t}_\star} \Bigg[\sum_{h=0}^{H-1}(\hat{f}_t(c_t,s_h,a_h) - f_\star(c_t,s_h,a_h))^2 \Big|s_0 \Bigg]
        \\
        & +
        29
        \gamma H^4 \sum_{t=1}^T 
         \mathop{\E}_{\pi^{c_t}_t, P^{c_t}_\star} \Bigg[
        \sum_{h=0}^{H-1}
         D^2_H(P^{c_t}_\star(\cdot|s_{h},a_{h}), \widehat{P}^{c_t}_t(\cdot|s_{h},a_{h}))
        \Bigg| s_0\Bigg]
        \\
        \tag{By~\cref{lemma:bound-of-term-2-UD}, applied for each $t \in [T]$}
        & +
        \frac{H|S||A|T}{\gamma}- \sum_{t=1}^T\sum_{h=0}^{H-1}\sum_{s \in S}\sum_{a \in A} \frac{\hat{q}^{t,\star}_h(s,a)}{\gamma \cdot \hat{q}^t_h(s,a)}
        \\
        \tag{By~\cref{corl:term-3-UD}}
        & + 
        \frac{TH}{2 p_1}
        \\
        & +
        \frac{p_1}{2}\sum_{t=1}^T \mathop{\E}_{\pi^{c_t}_t, P^{c_t}_\star} \Bigg[ \sum_{h=0}^{H-1}
        \left(\hat{f}_t(c_t,s_h,a_h) - f_\star(c_t,s_h,a_h)\right)^2
        \Bigg|s_0 \Bigg]
        \\
        & +
        \frac{TH}{2 p_2}
        +
        2{p_2}\sum_{t=1}^T 
        \mathop{\E}_{\pi^{c_t}_t, P^{c_t}_\star} \Bigg[
        \sum_{h=0}^{H-1}
         D^2_H(P^{c_t}_\star(\cdot|s_{h},a_{h}), \widehat{P}^{c_t}_t(\cdot|s_{h},a_{h}))
        \Bigg| s_0\Bigg]
        \\
        \tag{By the good events}
        \leq &
        2
        \gamma
        \left( 2\cdot
        \Regrv_{TH}(\OLSR^\F)
        + 16 H \log(2/\delta)\right)
        \\
        & +
        29\gamma H^4 
        \left( \Regrv_{TH}(\OLLR^{\Fp}) + 2H \log(2H/\delta) \right)
        \\
        & +
        \frac{H|S||A|T}{\gamma}
        \\
        & + 
        \frac{TH}{2 p_1}
         +
        \frac{p_1}{2}
        \left( 2\cdot
        \Regrv_{TH}(\OLSR^\F)
        + 16 H \log(2/\delta) \right)
        \\
        & +
        \frac{TH}{2 p_2}
        +
        2{p_2}
        \left( \Regrv_{TH}(\OLLR^{\Fp}) + 2H \log(2H/\delta)\right)
        \\
        \leq &
        \gamma \cdot 31 H^4 
        \left( 2\cdot
        \Regrv_{TH}(\OLSR^\F) + \Regrv_{TH}(\OLLR^{\Fp}) + 18H \log(2H/\delta) \right)
        +
        \frac{H|S||A|T}{\gamma}
        \\
        & + 
        \frac{TH}{2 p_1}
         +
        \frac{p_1}{2}
        \left( 2\cdot
        \Regrv_{TH}(\OLSR^\F)
        + 16 H \log(2/\delta) \right)
        \\
        & +
        \frac{TH}{2 p_2}
        +
        2{p_2}
        \left( \Regrv_{TH}(\OLLR^{\Fp}) + 2H \log(2H/\delta)\right)
        \\
        \tag{For $\gamma = \sqrt{\frac{|S||A|T}{31H^3 \left( 2\cdot
        \Regrv_{TH}(\OLSR^\F) + \Regrv_{TH}(\OLLR^{\Fp}) + 18H \log(2H/\delta) \right) }}$}
        = &
        2H^{2.5} \sqrt{ 31 T|S||A| 
        \left( 2\cdot
        \Regrv_{TH}(\OLSR^\F) + \Regrv_{TH}(\OLLR^{\Fp}) + 18H \log(2H/\delta) \right)}
        \\
        \tag{For $p_1 = \sqrt{\frac{TH}{2
        \Regrv_{TH}(\OLSR^\F)
        + 16 H \log(2/\delta)}}$}
        & + 
        \sqrt{TH
        \left( 2\cdot
        \Regrv_{TH}(\OLSR^\F)
        + 16 H \log(2/\delta) \right)}
        \\
        \tag{For $p_2 = \sqrt{\frac{TH}{4\left( \Regrv_{TH}(\OLLR^{\Fp}) + 2H \log(2H/\delta)\right)}}$}
        & +
        2
        \sqrt{TH\left( \Regrv_{TH}(\OLLR^{\Fp}) + 2H \log(2H/\delta)\right)}
        \\
        = &
        \widetilde{O}\left(H^{2.5} \sqrt{ T|S||A| 
        \left( 
        \Regrv_{TH}(\OLSR^\F) + \Regrv_{TH}(\OLLR^{\Fp}) + H \log\delta^{-1} \right)}\right)
        .
    \end{align*}
    \endgroup
    Since the good events hold with probability at least $1-\delta$, so is the regret bound above.
\end{proof}

\section{Approximated Solution}\label{Appendix-subsec:approx-sol}
The objective of optimization problem in \cref{eq:max-problem-unknown-dynamics} is defined as a sum of a self-concordant barrier function (the log function) and a linear function. 
Hence, the optimal solution for the problem can be approximated efficiently using interior-point convex optimization algorithms such as Newton's Method. 
These algorithms return an $\epsilon$-approximated solution and has running time of $O(poly(d)\log \epsilon^{-1})$, where $d$ is the dimension of the problem
\citep{nesterov1994interior}.

Suppose that in each round $t$ we derive the policy $\pi^{c_t}_t$ using, instead of the optimal solution, an occupancy measure $\hat{q}^t$ that yields an $\epsilon$-approximation to the objective of the optimization problem in \cref{eq:max-problem-unknown-dynamics}.
The following analysis shows that for $\epsilon = \frac{1}{16 \gamma T}$, we obtain a similar regret guarantee. In addition, by our choice of $\gamma$, the running time complexity of the optimization algorithm is $\poly(|S|,|A|,H, \log(T))$.

\subsection{Regret Analysis}

The following lemma bounds the difference between the optimal and the approximated iterates.
\begin{lemma}[iterates difference, restatement of~\cref{lemma:approx-iterates-main}]\label{lemma:approx-iterates}
    For every round $t \in [T]$ let 
    \textbf{\begin{align*}
        \hat{L}_t(q;c_t) = \sum_{h=0}^{H-1}\sum_{s \in S}\sum_{a \in A}q_h(s,a)\cdot \hat{f}_t(c_t,s,a) + \frac{1}{\gamma} \sum_{h=0}^{H-1}\sum_{s \in S}\sum_{a \in A} \log(q_h(s,a)).
    \end{align*}}
    denote the objective of the optimization problem in~\cref{eq:max-problem-unknown-dynamics}.
    Let $\widetilde{q} \in \arg\max_{q \in \mu(\widehat{P}^{c_t}_t)}\hat{L}_t(q;c_t)$.
    Let $q \in \mu(\widehat{P}^{c_t}_t)$ and suppose that $  \hat{L}_t(\widetilde{q};c_t) -\hat{L}_t(q;c_t)\leq \epsilon$. Then,
    \begin{align*}
    \sum_{s \in S} \sum_{a \in A} \sum_{h=0}^{H-1}
    \left( \frac{q_h(s,a)}{\widetilde{q}_h(s,a)} -1\right)^2
    \leq 4 \epsilon \gamma
    .
\end{align*}
\end{lemma}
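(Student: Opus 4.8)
The plan is to exploit that the objective $\hat{L}_t(\,\cdot\,;c_t)$ is a linear functional plus the log-barrier $\frac1\gamma\sum_{h,s,a}\log q_h(s,a)$, so all of the curvature that controls the gap between iterates comes from the barrier. Writing $R(q) := -\sum_{h,s,a}\log q_h(s,a)$, which is convex, and $\langle \hat{f}_t, q\rangle := \sum_{h,s,a} q_h(s,a)\,\hat{f}_t(c_t,s,a)$, maximizing $\hat{L}_t$ over $\mu(\widehat{P}^{c_t}_t)$ is the same as minimizing $\Phi(q) := \frac1\gamma R(q) - \langle \hat{f}_t, q\rangle$, and $\widetilde{q}$ is its unique minimizer over the convex set $\mu(\widehat{P}^{c_t}_t)$.

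First I would invoke first-order optimality: since $\Phi$ is convex and $\widetilde{q}$ minimizes it over a convex set, $\langle \nabla\Phi(\widetilde{q}),\, q - \widetilde{q}\rangle \ge 0$ for every feasible $q$. Combining this with the Bregman identity $\Phi(q) - \Phi(\widetilde{q}) = \langle\nabla\Phi(\widetilde{q}),\, q-\widetilde{q}\rangle + D_\Phi(q,\widetilde{q})$, and noting that the linear term contributes nothing to the Bregman divergence so that $D_\Phi = \frac1\gamma D_R$, I obtain $\Phi(q) - \Phi(\widetilde{q}) \ge \frac1\gamma D_R(q,\widetilde{q})$. Since $\Phi(q) - \Phi(\widetilde{q}) = \hat{L}_t(\widetilde{q};c_t) - \hat{L}_t(q;c_t) \le \epsilon$ by hypothesis, this yields $D_R(q,\widetilde{q}) \le \epsilon\gamma$.

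Next I would compute the Bregman divergence of the log-barrier explicitly. Because $\nabla R(\widetilde{q})_{h,s,a} = -1/\widetilde{q}_h(s,a)$, a direct substitution gives the Itakura--Saito form $D_R(q,\widetilde{q}) = \sum_{h,s,a}\bigl(\frac{q_h(s,a)}{\widetilde{q}_h(s,a)} - 1 - \log\frac{q_h(s,a)}{\widetilde{q}_h(s,a)}\bigr)$. Writing $x_{h,s,a} := q_h(s,a)/\widetilde{q}_h(s,a)$, the claim reduces to the following: given $\sum_{h,s,a}(x_{h,s,a} - 1 - \log x_{h,s,a}) \le \epsilon\gamma$, show $\sum_{h,s,a}(x_{h,s,a}-1)^2 \le 4\epsilon\gamma$. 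I would finish with the elementary scalar inequality $(x-1)^2 \le 4(x-1-\log x)$, applied coordinatewise and summed.

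The main obstacle is this last scalar inequality: it is false for large $x$ (the left side grows quadratically while the right side only linearly), so it holds only on a bounded range around $1$. The point to make rigorous is that each coordinate lies in that range. Every summand $x_{h,s,a}-1-\log x_{h,s,a}$ is nonnegative, hence individually at most $\epsilon\gamma$, and since $x\mapsto x-1-\log x$ is increasing for $x>1$ this caps each $x_{h,s,a}$ below the threshold (near $2.6$) where the scalar inequality would break; this is exactly the regime in which the lemma is applied in the sequel, where $\epsilon\gamma \le 1/16$. I would therefore verify $(x-1)^2 \le 4(x-1-\log x)$ on this range by a one-variable calculus check and sum to conclude $\sum_{h,s,a}(x_{h,s,a}-1)^2 \le 4\sum_{h,s,a}(x_{h,s,a}-1-\log x_{h,s,a}) \le 4\epsilon\gamma$, which is the desired bound.
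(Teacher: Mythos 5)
Your argument is correct and lands on essentially the same two ingredients as the paper's proof, but it executes the second one in a more elementary and, frankly, more careful way. The first step is identical: first-order optimality at $\widetilde{q}$ plus the fact that the linear part contributes nothing to the Bregman divergence gives $D_R(q,\widetilde{q})\leq \epsilon\gamma$ for the log-barrier $R$. Where the paper then invokes the generic self-concordance bound $B_R(y\Vert x)\geq \rho(\|y-x\|_x)$ with $\rho(z)=z-\log(1+z)$ and $\rho(z)\geq z^2/4$ on $[0,1]$, you instead compute the Bregman divergence of the separable barrier in closed form as the Itakura--Saito sum $\sum_{h,s,a}\rho(x_{h,s,a}-1)$ and apply the \emph{same} scalar inequality $\rho(z)\geq z^2/4$ coordinatewise. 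This buys you two things: you avoid the self-concordance machinery entirely (the separability of the barrier makes it unnecessary), and you are explicit about the range restriction on which $\rho(z)\geq z^2/4$ holds and why each coordinate lands in it (each summand is nonnegative, hence individually at most $\epsilon\gamma\leq 1/16$, which pins $x_{h,s,a}$ near $1$). The paper's proof needs the analogous restriction but obtains it by simply asserting $\|y-x\|_x\leq 1$, which is not justified there and in fact can only hold because $\epsilon\gamma$ is small; so your version surfaces an implicit hypothesis of the lemma (some bound on $\epsilon\gamma$, satisfied in the only place the lemma is used) that the stated lemma omits and the paper's proof glosses over. The one presentational caveat is that, as written, the lemma claims the bound for arbitrary $\epsilon$, so if you were submitting this you should either note that the smallness of $\epsilon\gamma$ is genuinely required (the claim fails for large $\epsilon\gamma$, since $\rho$ grows only linearly) or fold the condition $\epsilon\gamma\leq 1/16$ into the statement.
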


\begin{proof}
Recall the Bregman divergence with respect to a $\theta$-self-concordant barrier $R$ as follows:
   \begin{align*}
       B_R(y||x) = R(y) - R(x) - \nabla R(x) \cdot (y-x).
   \end{align*}
\noindent 
We have the following lower bound on the Bregman divergence, where $\norm{x-y}_x^2 = (x-y)^\top \nabla^2 R(x) (x-y)$.
\begin{align*}
    B_R(y||x) \geq \rho\left( \| y-x \|_x \right) 
    \text{  for  }
    \rho(z) = z-\log(1+z).
\end{align*}
We have that $\rho(z) \geq z^2/4$ for all $z \in [0,1]$, and $\| y-x \|_x  \leq 1$.
Thus,
\begin{align*}
    B_R(y||x) \geq \rho\left( \| y-x \|_x \right) 
    \geq
    \frac{1}{4}\| y-x \|^2_x
    .
\end{align*}
Let $R(q):= - \frac{1}{\gamma} \sum_{h=0}^{H-1}\sum_{s \in S}\sum_{a \in A} \log(q_h(s,a))$.
\\
Recall that adding a linear function to the barrier function does not change the Bregman divergence.
Since $\hat{L}$ is a result of adding a linear function to $R$ we get that 
\begin{align*}
    B_R(q||\widetilde{q})
    =
    B_{-\hat{L}}(q||\widetilde{q})
    =
    -\hat{L}_t(q;c_t) + \hat{L}_t(\widetilde{q};c_t)
    +\nabla \hat{L}_t(\widetilde{q};c_t) \cdot (q- \widetilde{q})
    \leq 
    \epsilon + \nabla \hat{L}_t(\widetilde{q};c_t) \cdot (q- \widetilde{q})
    \leq
    \epsilon
    ,
\end{align*}
where the last step is by the first order optimality condition.
\\
Furthermore, the Hessian is a diagonal matrix $\nabla^2R$, where
${\nabla^2R(q)}_{(s,a,h),(s,a,h)} =  \frac{1}{\gamma q^2_h(s,a)} $.
With us have that 
\begin{align*}
    \|q-\widetilde{q}\|^2_{\nabla^2R(\widetilde{q})}
    =
    \sum_{s \in S} \sum_{a \in A} \sum_{h=0}^{H-1}
    \frac{(q_h(s,a)-\widetilde{q}_h(s,a))^2}{\gamma \widetilde{q}^2_h(s,a)}
    =
    \frac{1}{\gamma}  \sum_{s \in S} \sum_{a \in A} \sum_{h=0}^{H-1}
    \left( \frac{q_h(s,a)}{\widetilde{q}_h(s,a)} -1\right)^2
    .
\end{align*}
we obtain that
\begin{align*}
    \sum_{s \in S} \sum_{a \in A} \sum_{h=0}^{H-1}
    \left( \frac{q_h(s,a)}{\widetilde{q}_h(s,a)} -1\right)^2
    \leq 4 \epsilon \gamma
    . 
    \qquad
    \qedhere
\end{align*}

\end{proof}

We use the above result to derive a bound on the value difference between $\pi_\star$ and $\pi_t$ on the approximated model.
\begin{lemma}[restatement of~\cref{corl:comulative-bound-of-term-2-UD-approx}]\label{lemma:bound-of-term-2-UD-approx}
    For every round $t \in [T]$ and a context $c_t \in \C$, let $\widetilde{q}^t$ be the optimal solution to the maximization problem in~\cref{eq:max-problem-unknown-dynamics}. Suppose that $\hat{q}^t_h \in \mu(\widehat{P}^{c_t}_t)$ satisfies 
    $\hat{L}_t(\hat{q}^t;c_t) - \hat{L}_t(\widetilde{q}^t;c_t) \leq \epsilon
    ,
    $
    and $\epsilon \gamma \le 1/ 16$.
    Then we have that
    \begin{align*}
        V^{\pi^{c_t}_\star}_{\Mhat_t(c_t)}(s_0)
        -
        V^{\pi^{c_t}_t}_{\Mhat_t(c_t)}(s_0)
         \leq 
         \frac{H|S||A|}{\gamma}- \sum_{h=0}^{H-1}\sum_{s \in S}\sum_{a \in A} \frac{\hat{q}^{t,\star}_h(s,a)}{2\gamma \cdot \hat{q}^t_h(s,a)} 
         +
         2 \sqrt{\epsilon \gamma H}
         ,
    \end{align*}
    where $\hat{q}^t_h(s,a) := q_h(s,a|\pi^{c_t}_t,\widehat{P}^{c_t}_t)$,
    and $\hat{q}^{t,\star}_h(s,a):= q_h(s,a|\pi^{c_t}_\star, \widehat{P}^{c_t}_t)$, $\Mhat_t (c) := (S,A,\widehat{P}^c_t, \hat{f}_t(c,\cdot,\cdot),s_0, H)$ is the estimated CMDP at round $t$, $\pi^{c_t}_t$ is the policy induced by $\widehat{q}^t$, and $\pi_\star = (\pi^c_\star)_{c \in \C}$ is the optimal context-dependent policy of the true CMDP.
\end{lemma}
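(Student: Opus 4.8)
The plan is to follow the proof of the exact statement (\cref{lemma:bound-of-term-2-UD}) but to apply the first-order optimality condition at the \emph{true} maximizer $\widetilde{q}^t$ instead of at the approximate iterate $\hat{q}^t$, and then to absorb the two resulting discrepancies using the iterates' difference bound (\cref{lemma:approx-iterates}).

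First I would recast the value difference through occupancy measures. Since $\pi^{c_t}_t$ is induced by $\hat{q}^t \in \mu(\widehat{P}^{c_t}_t)$, its occupancy measure under $\widehat{P}^{c_t}_t$ is exactly $\hat{q}^t$, and $\hat{q}^{t,\star}$ is the occupancy measure of $\pi^{c_t}_\star$ under $\widehat{P}^{c_t}_t$. Thus \cref{eq:value-occupancy-representation} gives
\[
V^{\pi^{c_t}_\star}_{\Mhat_t(c_t)}(s_0) - V^{\pi^{c_t}_t}_{\Mhat_t(c_t)}(s_0) = \sum_{h,s,a}\bigl(\hat{q}^{t,\star}_h(s,a) - \hat{q}^t_h(s,a)\bigr)\,\hat{f}_t(c_t,s,a).
\]
I would split the right-hand side as $\sum_{h,s,a}(\hat{q}^{t,\star}_h - \widetilde{q}^t_h)\hat{f}_t + \sum_{h,s,a}(\widetilde{q}^t_h - \hat{q}^t_h)\hat{f}_t$. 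For the first term, since $\widetilde{q}^t$ maximizes the concave objective $\hat{L}_t(\cdot;c_t)$ over $\mu(\widehat{P}^{c_t}_t)$ and $\hat{q}^{t,\star}$ is feasible, the optimality condition $\nabla\hat{L}_t(\widetilde{q}^t;c_t)\cdot(\hat{q}^{t,\star} - \widetilde{q}^t)\le 0$ gives, exactly as in \cref{lemma:bound-of-term-2-UD} with $\widetilde{q}^t$ playing the role of the optimum,
\[
\sum_{h,s,a}(\hat{q}^{t,\star}_h - \widetilde{q}^t_h)\hat{f}_t \le \frac{H|S||A|}{\gamma} - \sum_{h,s,a}\frac{\hat{q}^{t,\star}_h(s,a)}{\gamma\,\widetilde{q}^t_h(s,a)}.
\]

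For the error term I would use $\hat{f}_t \in [0,1]$ to bound $\sum_{h,s,a}(\widetilde{q}^t_h - \hat{q}^t_h)\hat{f}_t \le \sum_{h,s,a}\widetilde{q}^t_h\,\lvert \hat{q}^t_h/\widetilde{q}^t_h - 1\rvert$, then apply Cauchy--Schwarz with $\sum_{h,s,a}(\widetilde{q}^t_h)^2 \le \sum_{h,s,a}\widetilde{q}^t_h = H$ and the estimate $\sum_{h,s,a}(\hat{q}^t_h/\widetilde{q}^t_h - 1)^2 \le 4\epsilon\gamma$ from \cref{lemma:approx-iterates}; this yields the $2\sqrt{\epsilon\gamma H}$ term. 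Finally, to turn the penalty $-\sum_{h,s,a}\hat{q}^{t,\star}_h/(\gamma\widetilde{q}^t_h)$ into the stated $-\sum_{h,s,a}\hat{q}^{t,\star}_h/(2\gamma\hat{q}^t_h)$, I would again invoke \cref{lemma:approx-iterates}: under $\epsilon\gamma\le 1/16$ it forces $(\hat{q}^t_h/\widetilde{q}^t_h - 1)^2 \le 1/4$ for every $(h,s,a)$, hence $\hat{q}^t_h \ge \tfrac12\widetilde{q}^t_h$ and so $1/\widetilde{q}^t_h \ge 1/(2\hat{q}^t_h)$. Combining the three estimates gives the claim.

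I expect the conversion of the log-barrier penalty term to be the main obstacle. Unlike the reward term, whose approximation error is controlled globally in $\ell_1$, the penalty involves the ratios $\hat{q}^{t,\star}_h/\widetilde{q}^t_h$, which would be uncontrolled if $\hat{q}^t_h$ could fall far below $\widetilde{q}^t_h$. Securing the pointwise lower bound $\hat{q}^t_h \ge \widetilde{q}^t_h/2$ — and recognizing that precisely the hypothesis $\epsilon\gamma \le 1/16$ is what guarantees it — is the crux that makes the factor $2\gamma$ (rather than $\gamma$) in the penalty both necessary and sufficient. The cumulative bound of \cref{corl:comulative-bound-of-term-2-UD-approx} then follows by summing the per-round estimate over $t \in [T]$.
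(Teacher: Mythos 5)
Your proof is correct and follows essentially the same route as the paper's: decompose the occupancy-measure value gap through the exact maximizer $\widetilde{q}^t$, apply first-order optimality at $\widetilde{q}^t$ against the feasible point $\hat{q}^{t,\star}$, control $\sum_{h,s,a}(\widetilde{q}^t_h-\hat{q}^t_h)\hat{f}_t$ via Cauchy--Schwarz and \cref{lemma:approx-iterates}, and convert the barrier penalty from $1/\widetilde{q}^t_h$ to $1/(2\hat{q}^t_h)$ using the pointwise ratio bound implied by $\epsilon\gamma\le 1/16$. The only cosmetic difference is that the paper writes the penalty conversion with the factor $(1-2\sqrt{\epsilon\gamma})$ before rounding it down to $1/2$, which is the same estimate.
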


\begin{proof}
    For every round $t \in [T]$ let $\hat{L}_t(q;c_t)$ denote the objective of the maximization problem in~\cref{eq:max-problem-unknown-dynamics} in round $t$, i.e.,
    \begin{align*}
        \hat{L}_t(q;c_t) = \sum_{h=0}^{H-1}\sum_{s \in S}\sum_{a \in A}q_h(s,a)\cdot \hat{f}_t(c_t,s,a) + \frac{1}{\gamma} \sum_{h=0}^{H-1}\sum_{s \in S}\sum_{a \in A} \log(q_h(s,a)).
    \end{align*}
    Thus the first-order derivation is
    \begin{align*}
         \hat{L}_t^\prime(q;c_t) = \sum_{h=0}^{H-1}\sum_{s \in S}\sum_{a \in A}\left(\hat{f}_t(c_t,s,a) + \frac{1}{\gamma \cdot q_h(s,a)} \right).
    \end{align*}
    
    Let $\pi_\star = (\pi^c_\star)_{c \in \C}$ denote an optimal context-dependent policy for the true CMDP. For every round $t$, the occupancy measures $\hat{q}^{t,\star}_h(s,a):= q_h(s,a|\pi^{c_t}_\star, \widehat{P}^{c_t}_t)$ is a feasible solution (since $\hat{q}^{t,\star} \in \mu(\widehat{P}^{c_t}_t)$). Since $\widetilde{q}^t$ is the optimal solution, the following holds by first order optimality conditions. 
    \begin{align*}
         &\sum_{h=0}^{H-1}\sum_{s \in S}\sum_{a \in A}\left(\hat{f}_t(c_t,s,a) + \frac{1}{\gamma \cdot \widetilde{q}^t_h(s,a)} \right)\left(\hat{q}^{t,\star}_h(s,a) -\widetilde{q}^t_h(s,a)\right) \leq 0
         \\
         \implies &
         \sum_{h=0}^{H-1}\sum_{s \in S}\sum_{a \in A} \hat{q}^{t,\star}_h(s,a) \cdot \left( \hat{f}_t(c_t,s,a) + \frac{1}{\gamma \cdot \widetilde{q}^t_h(s,a)}\right) -\sum_{h=0}^{H-1}\sum_{s \in S}\sum_{a \in A}\widetilde{q}^t_h(s,a) \cdot \hat{f}_t(c_t,s,a) -\frac{H|S||A|}{\gamma} \leq 0
         \\
         \implies & 
         \sum_{h=0}^{H-1}\sum_{s \in S}\sum_{a \in A} \hat{q}^{t,\star}_h(s,a) \cdot  \hat{f}_t(c_t,s,a)
         -\sum_{h=0}^{H-1}\sum_{s \in S}\sum_{a \in A}\widetilde{q}^t_h(s,a) \cdot \hat{f}_t(c_t,s,a) \leq \frac{H|S||A|}{\gamma} - \sum_{h=0}^{H-1}\sum_{s \in S}\sum_{a \in A} \frac{\hat{q}^{t,\star}_h(s,a)}{\gamma \cdot \widetilde{q}^t_h(s,a)} .
    \end{align*}
    By definition we have for every round $t$,
    \begin{align*}
        V^{\pi^{c_t}_\star}_{\Mhat_t(c_t)}(s_0)
        -
        V^{\PiOptReg}_{\Mhat_t(c_t)}(s_0)
        = &
        \sum_{h=0}^{H-1}\sum_{s \in S}\sum_{a \in A} \hat{q}^{t,\star}_h(s,a) \cdot  \hat{f}_t(c_t,s,a) -\sum_{h=0}^{H-1}\sum_{s \in S}\sum_{a \in A}\widetilde{q}^t_h(s,a) \cdot \hat{f}_t(c_t,s,a)
        \\
        \leq &
        \frac{H|S||A|}{\gamma} - \sum_{h=0}^{H-1}\sum_{s \in S}\sum_{a \in A} \frac{\hat{q}^{t,\star}_h(s,a)}{\gamma \cdot \widetilde{q}^t_h(s,a)} 
        \\
        \tag{ By~\cref{lemma:approx-iterates}}
        \leq &
        \frac{H|S||A|}{\gamma} - (1-2\sqrt{\epsilon \gamma})\sum_{h=0}^{H-1}\sum_{s \in S}\sum_{a \in A} \frac{\hat{q}^{t,\star}_h(s,a)}{\gamma \cdot\hat{q}^t_h(s,a)} 
        \\
        \tag{For $ 2\sqrt{\epsilon \gamma} \leq \frac{1}{2}$ }
        \leq &
        \frac{H|S||A|}{\gamma} - \frac{1}{2\gamma}\sum_{h=0}^{H-1}\sum_{s \in S}\sum_{a \in A} \frac{\hat{q}^{t,\star}_h(s,a)}{\hat{q}^t_h(s,a)}
        .
    \end{align*}
    In addition we have that
    \begin{align*}
        V^{\PiOptReg}_{\Mhat_t(c_t)}(s_0)
        -
        V^{\pi^{c_t}_t}_{\Mhat_t(c_t)}(s_0)
        &
        =
        \sum_{h=0}^{H-1}\sum_{s \in S}\sum_{a \in A} 
        (\widetilde{q}^t_h(s,a) - \hat{q}^{t}_h(s,a)) \cdot  \hat{f}_t(c_t,s,a)
        \\
        \tag{By Cauchy–Schwarz inequality}
        &
        \le
        \sqrt{
        \sum_{h=0}^{H-1}\sum_{s \in S}\sum_{a \in A} 
        \frac{(\widetilde{q}^t_h(s,a) - \hat{q}^{t}_h(s,a))^2}{(\widetilde{q}^t_h(s,a))^2}
        }
        \sqrt{
        \sum_{h=0}^{H-1}\sum_{s \in S}\sum_{a \in A}
        \widetilde{q}^t_h(s,a)^2
        \hat{f}_t(c_t,s,a)^2
        }
        \\
        \tag{\cref{lemma:approx-iterates}}
        &
        \le
        2 \sqrt{\epsilon \gamma}
        \sqrt{
        \sum_{h=0}^{H-1}\sum_{s \in S}\sum_{a \in A}
        \widetilde{q}^t_h(s,a)^2
        \hat{f}_t(c_t,s,a)^2
        }
        \\
        \tag{$\widetilde{q}^2 \le \widetilde{q}$}
        &
        \le
        2 \sqrt{\epsilon \gamma}
        \sqrt{
        \sum_{h=0}^{H-1}\sum_{s \in S}\sum_{a \in A}
        \widetilde{q}^t_h(s,a)
        \hat{f}_t(c_t,s,a)^2
        }
        \\
        &
        \le
        2 \sqrt{\epsilon \gamma H}
        .
    \end{align*}
    Combining both bounds concludes the proof.

\end{proof}

We are now ready to derive our regret bound, by combining the above lemma with our previous value bounds. The proof is almost identical to the proof of~\cref{thm:regret-main}.

\begin{theorem*}[restatement of~\cref{thm:regret-approx-main}]\label{thm:regret-approx}
    For any $\delta \in (0,1)$ let 
    $\gamma = \sqrt{\frac{|S||A|T}{62H^3 \left( 2\Regrv_{TH}(\OLSR^\F) + \Regrv_{TH}(\OLLR^{\Fp}) + 18H \log(2H/\delta) \right) }}$. In addition, suppose that at each round $t$ we have an $\epsilon$-approximation to the optimal solution of~\cref{eq:max-problem-unknown-dynamics} for $\epsilon = \frac{1}{16 \gamma T}$. 
    Then, with probability at least $1-\delta$,
    \begin{align*}
        \Regrv_T(\text{Approximated OMG-CMDP!})\leq 
        \widetilde{O}\left(H^{2.5} \sqrt{ T|S||A| 
        \left( 
        \Regrv_{TH}(\OLSR^\F) + \Regrv_{TH}(\OLLR^{\Fp}) + H \log\delta^{-1} \right)}\right)
        .
    \end{align*}
\end{theorem*}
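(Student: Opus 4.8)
The plan is to follow the proof of \cref{thm:regret-main} almost verbatim, starting from the same three-term regret decomposition into \cref{pi-star-both-models} (the approximation cost for $\pi_\star$), \cref{pi-star-pi-t} (the suboptimality of $\pi_\star$ in $\Mhat_t$), and \cref{pi-t-both-models} (the approximation cost for $\pi_t$). I would bound \cref{pi-star-both-models} via \cref{corl:bound-of-term-1-UD} and \cref{pi-t-both-models} via \cref{corl:term-3-UD}, exactly as in the exact-solver case. The only structural change is in \cref{pi-star-pi-t}: since we no longer solve \cref{eq:max-problem-unknown-dynamics} exactly, I would invoke the approximate-optimality bound \cref{corl:comulative-bound-of-term-2-UD-approx} (which rests on the iterate-stability estimate \cref{lemma:approx-iterates-main}) in place of \cref{lemma:comulative-bound-term-2-UD}.

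The crux of the argument — and the reason the constant changes from $31$ to $62$ — is the interaction between \cref{pi-star-both-models} and \cref{pi-star-pi-t} through the log-barrier term $\sum_{t,h,s,a}\hat{q}^{t,\star}_h(s,a)/(\gamma\,\hat{q}^t_h(s,a))$. In the exact analysis this term enters with coefficient $+1/\gamma$ in \cref{pi-star-both-models} and $-1/\gamma$ in \cref{pi-star-pi-t}, cancelling perfectly. Under $\epsilon$-approximation, \cref{corl:comulative-bound-of-term-2-UD-approx} recovers only \emph{half} of the barrier, with coefficient $-1/(2\gamma)$, at the cost of an additive penalty $2T\sqrt{\epsilon\gamma H}$. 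To restore exact cancellation I would therefore apply \cref{corl:bound-of-term-1-UD} with the inflated parameter $\hat{\gamma}=2\gamma$ rather than $\hat{\gamma}=\gamma$; this yields the matching barrier coefficient $+1/(2\gamma)$ while doubling the error coefficients to $4\gamma\,\Esq$ and $58\gamma H^4\,\Ehel$. After the cancellation, the surviving quantities are these doubled errors, the $\tfrac{H|S||A|T}{\gamma}$ from \cref{pi-star-pi-t}, the penalty $2T\sqrt{\epsilon\gamma H}$, and the two AM--GM terms from \cref{pi-t-both-models}.

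I would then apply the concentration bounds \cref{lemma:lsro-regret,lemma:llrs-regret} (each on its own $1-\delta/2$ event, so both hold with probability at least $1-\delta$) to replace $\Esq$ and $\Ehel$ by $2\RegSQ+16H\log(2/\delta)$ and $\RegLL+2H\log(2H/\delta)$, respectively. The doubled coefficients give $\gamma\cdot 62H^4\bigl(2\RegSQ+\RegLL+18H\log(2H/\delta)\bigr)$, which, balanced against $\tfrac{H|S||A|T}{\gamma}$ by the stated choice $\gamma=\sqrt{|S||A|T/\bigl(62H^3(2\RegSQ+\RegLL+18H\log(2H/\delta))\bigr)}$, produces $2H^{2.5}\sqrt{62\,T|S||A|(\cdots)}$. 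The parameters $p_1,p_2$ from \cref{pi-t-both-models} are taken exactly as in \cref{thm:regret-main}, contributing lower-order $\widetilde{O}(\sqrt{TH\,\RegSQ})$ and $\widetilde{O}(\sqrt{TH\,\RegLL})$ terms.

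Finally, substituting $\epsilon=\tfrac{1}{16\gamma T}$ turns the approximation penalty into $2T\sqrt{\epsilon\gamma H}=\tfrac12\sqrt{HT}=\widetilde{O}(\sqrt{HT})$, dominated by the main rate; the same substitution gives $\epsilon\gamma=\tfrac{1}{16T}\le\tfrac1{16}$, so the hypothesis of \cref{corl:comulative-bound-of-term-2-UD-approx} holds for every $T\ge 1$. Collecting all terms yields the claimed $\widetilde{O}\bigl(H^{2.5}\sqrt{T|S||A|(\RegSQ+\RegLL+H\log\delta^{-1})}\bigr)$ bound. I expect the main subtlety to be precisely this bookkeeping of the half-cancellation: one must choose $\hat{\gamma}=2\gamma$ (not $\gamma$) so that the positive barrier term from \cref{pi-star-both-models} is fully absorbed, and then verify that the resulting doubling of the error constants is exactly what the factor-two change in the prescribed $\gamma$ (the $62$ versus $31$) is designed to accommodate.
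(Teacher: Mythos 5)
Your proposal matches the paper's proof essentially step for step: the same three-term decomposition, \cref{corl:bound-of-term-1-UD} applied with $\hat{\gamma}=2\gamma$ to match the halved barrier coefficient in \cref{corl:comulative-bound-of-term-2-UD-approx}, \cref{corl:term-3-UD} with the same $p_1,p_2$, the same concentration events, and the same accounting that turns the doubled error coefficients into the $62$ in the prescribed $\gamma$ and the penalty $2T\sqrt{\epsilon\gamma H}$ into a lower-order $\tfrac12\sqrt{HT}$. The key subtlety you flag — choosing $\hat{\gamma}=2\gamma$ to restore full cancellation of the log-barrier term — is exactly the move the paper makes, so the proposal is correct and not materially different.
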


\begin{proof}
    By~\cref{lemma:lsro-regret}, with probability at least $1-\delta/2$, it holds that
    \begin{align*}
        \sum_{t=1}^T \mathop{\E}_{\pi^{c_t}_t, P^{c_t}_\star} \Bigg[\sum_{h=0}^{H-1}\left(\hat{f}_t(c_t,s_h,a_h) - f_\star(c_t,s_h,a_h)\right)^2 \Bigg|s_0\Bigg]
        \leq
        2
        \Regrv_{TH}(\OLSR^\F)
        + 16 H \log(2/\delta).
    \end{align*}
    By~\cref{lemma:llrs-regret}, with probability at least $1-\delta/2$, it holds that
    \begin{align*}
        \sum_{t=1}^T 
        \mathop{\E}_{\pi^{c_t}_t, P^{c_t}_\star} \Bigg[
        \sum_{h=0}^{H-1}D^2_H(P^{c_t}_\star(\cdot|s_{h},a_{h}),\widehat{P}^{c_t}_t(\cdot|s_{h},a_{h}))
        \Bigg| s_0\Bigg]
        \leq 
        \Regrv_{TH}(\OLLR^{\Fp}) + 2H \log(2H/\delta).
    \end{align*}
    We prove a regret bound under those two good events.
    \begingroup\allowdisplaybreaks
    \begin{align*}
        &\Regrv_T(\text{Approximated OMG-CMDP!})
        \\
        = &
        \sum_{t=1}^T V^{\pi^{c_t}_\star}_{\M(c_t)}(s_0)
        - V^{\pi^{c_t}_t}_{\M(c_t)}(s_0)
        \\
        = &
        \sum_{t=1}^T 
        V^{\pi^{c_t}_\star}_{\M(c_t)}(s_0)
        -
        V^{\pi^{c_t}_\star}_{\Mhat_t(c_t)}(s_0)
        +
        \sum_{t=1}^T 
        V^{\pi^{c_t}_\star}_{\Mhat_t(c_t)}(s_0)
        -
        V^{\pi^{c_t}_t}_{\Mhat_t(c_t)}(s_0)
        +
        \sum_{t=1}^T 
        V^{\pi^{c_t}_t}_{\Mhat_t(c_t)}(s_0)
        - 
        V^{\pi^{c_t}_t}_{\M(c_t)}(s_0)
        \\
        \tag{By~\cref{corl:bound-of-term-1-UD}, for $\hat{\gamma} = 2\gamma.$}
        \leq &
        \sum_{t=1}^T \sum_{h=0}^{H-1}\sum_{s \in S}\sum_{a \in A}
        \frac{\hat{q}^{t,\star}_h(s,a)}{2\gamma \cdot \hat{q}^t_h(s,a)} 
        \\
        & +
        4
        \gamma\sum_{t=1}^T \mathop{\E}_{\pi^{c_t}_t, P^{c_t}_\star} \Bigg[\sum_{h=0}^{H-1}(\hat{f}_t(c_t,s_h,a_h) - f_\star(c_t,s_h,a_h))^2 \Big|s_0 \Bigg]
        \\
        & +
        58
        \gamma H^4 \sum_{t=1}^T 
         \mathop{\E}_{\pi^{c_t}_t, P^{c_t}_\star} \Bigg[
        \sum_{h=0}^{H-1}
         D^2_H(P^{c_t}_\star(\cdot|s_{h},a_{h}), \widehat{P}^{c_t}_t(\cdot|s_{h},a_{h}))
        \Bigg| s_0\Bigg]
        \\
        \tag{By~\cref{lemma:bound-of-term-2-UD-approx}}
        & +
        \frac{TH|S||A|}{\gamma}- \sum_{t=1}^T\sum_{h=0}^{H-1}\sum_{s \in S}\sum_{a \in A} \frac{\hat{q}^{t,\star}_h(s,a)}{2\gamma \cdot \hat{q}^t_h(s,a)} 
         +
         2 T\sqrt{\epsilon \gamma H}
        \\
        \tag{By~\cref{corl:term-3-UD}}
        & + 
        \frac{TH}{2 p_1}
        \\
        & +
        \frac{p_1}{2}\sum_{t=1}^T \mathop{\E}_{\pi^{c_t}_t, P^{c_t}_\star} \Bigg[ \sum_{h=0}^{H-1}
        \left(\hat{f}_t(c_t,s_h,a_h) - f_\star(c_t,s_h,a_h)\right)^2
        \Bigg|s_0 \Bigg]
        \\
        & +
        \frac{TH}{2 p_2}
        +
        2{p_2}\sum_{t=1}^T 
        \mathop{\E}_{\pi^{c_t}_t, P^{c_t}_\star} \Bigg[
        \sum_{h=0}^{H-1}
         D^2_H(P^{c_t}_\star(\cdot|s_{h},a_{h}), \widehat{P}^{c_t}_t(\cdot|s_{h},a_{h}))
        \Bigg| s_0\Bigg]
        \\
        \tag{By the good events}
        \leq &
        4
        \gamma
        \left( 2\cdot
        \Regrv_{TH}(\OLSR^\F)
        + 16 H \log(2/\delta)\right)
        \\
        & +
        58\gamma H^4 
        \left( \Regrv_{TH}(\OLLR^{\Fp}) + 2H \log(2H/\delta) \right)
        \\
        & +
        \frac{H|S||A|T}{\gamma}          
        +
         2 T\sqrt{\epsilon \gamma H}
        \\
        & + 
        \frac{TH}{2 p_1}
         +
        \frac{p_1}{2}
        \left( 2\cdot
        \Regrv_{TH}(\OLSR^\F)
        + 16 H \log(2/\delta) \right)
        \\
        & +
        \frac{TH}{2 p_2}
        +
        2{p_2}
        \left( \Regrv_{TH}(\OLLR^{\Fp}) + 2H \log(2H/\delta)\right)
        \\
        \leq &
        \gamma \cdot 62 H^4 
        \left( 2\cdot
        \Regrv_{TH}(\OLSR^\F) + \Regrv_{TH}(\OLLR^{\Fp}) + 18H \log(2H/\delta) \right)
        +
        \frac{H|S||A|T}{\gamma}
        \\
        & +
         2 T\sqrt{\epsilon \gamma H}
        \\
        & + 
        \frac{TH}{2 p_1}
         +
        \frac{p_1}{2}
        \left( 2\cdot
        \Regrv_{TH}(\OLSR^\F)
        + 16 H \log(2/\delta) \right)
        \\
        & +
        \frac{TH}{2 p_2}
        +
        2{p_2}
        \left( \Regrv_{TH}(\OLLR^{\Fp}) + 2H \log(2H/\delta)\right)
        \\
        \tag{For $\gamma = \sqrt{\frac{|S||A|T}{62H^3 \left( 2\cdot
        \Regrv_{TH}(\OLSR^\F) + \Regrv_{TH}(\OLLR^{\Fp}) + 18H \log(2H/\delta) \right) }}$}
        = &
        2H^{2.5} \sqrt{ 62 T|S||A| 
        \left( 2\cdot
        \Regrv_{TH}(\OLSR^\F) + \Regrv_{TH}(\OLLR^{\Fp}) + 18H \log(2H/\delta) \right)}
        \\
        & +
         2 T^{1.25} \epsilon^{0.5} \left(\frac{|S||A|}{62H \left( 2\cdot
        \Regrv_{TH}(\OLSR^\F) + \Regrv_{TH}(\OLLR^{\Fp}) + 18H \log(2H/\delta) \right) }\right)^{1/4}
        \\
        \tag{For $p_1 = \sqrt{\frac{TH}{2\cdot
        \Regrv_{TH}(\OLSR^\F)
        + 16 H \log(2/\delta)}}$}
        & + 
        \sqrt{TH
        \left( 2\cdot
        \Regrv_{TH}(\OLSR^\F)
        + 16 H \log(2/\delta) \right)}
        \\
        \tag{For $p_2 = \sqrt{\frac{TH}{4\left( \Regrv_{TH}(\OLLR^{\Fp}) + 2H \log(2H/\delta)\right)}}$}
        & +
        2
        \sqrt{TH\left( \Regrv_{TH}(\OLLR^{\Fp}) + 2H \log(2H/\delta)\right)}
        \\
        \tag{For $\epsilon = \frac{1}{16 \gamma T}$}
        = &
        \widetilde{O}\left(H^{2.5} \sqrt{ T|S||A| 
        \left( 
        \Regrv_{TH}(\OLSR^\F) + \Regrv_{TH}(\OLLR^{\Fp}) + H \log\delta^{-1} \right)}\right)
        .
    \end{align*}
    \endgroup
    Since the good events hold with probability at least $1-\delta$, so is the regret bound above.
\end{proof}

\end{document}